\algnewcommand\algorithmicinput{\textbf{Input:}}
\algnewcommand\INPUT{\item[\algorithmicinput]}
\algnewcommand\algorithmicoutput{\textbf{Output:}}
\algnewcommand\OUTPUT{\item[\algorithmicoutput]}
\newcommand{\NULL}{\textnormal{\texttt{nil}}}
\newif\iflong
\newif\ifshort
\newtheorem{theorem}{Theorem}
\newtheorem{lemma}[theorem]{Lemma}
\newtheorem{proposition}[theorem]{Proposition}
\newtheorem{corollary}[theorem]{Corollary}
\tikzset{cross/.style={cross out, draw=black, minimum size=2*(#1-\pgflinewidth), inner sep=0pt, outer sep=0pt},
cross/.default={1pt}}
\newcommand{\bigoh}{\mathcal{O}}
\title{Explaining Decisions in ML Models:\\ a Parameterized Complexity
  Analysis (Part I)\thanks{This is Part I of
  a greatly extended version of~\cite{DBLP:conf/kr/OrdyniakPRS24} (see
  also the full version of~\cite{DBLP:conf/kr/OrdyniakPRS24} on arXiv~\cite{arxiv-version}), which covers
  all results concerning decision trees, decision sets, and decision
  lists. Due to the length and the large number of
  results in the original conference paper, which grew even larger when preparing the journal version,
  we decided to split the paper into two parts for the convenience of
  the reviewers. Compared to~\cite{DBLP:conf/kr/OrdyniakPRS24} the
  current paper also entails several
  major improvements such as: (1) A greatly improved presentation of our algorithmic
  meta-theorem for Boolean Circuits (Subsection 5.1), (2) a much improved presentation of the hardness results with the help
  of the introduction of a novel meta-framework (Subsection 6.1) for
  showing hardness, (3) a strengthening of the W$[1]$-hardness result for local contrastive
  explanations for DSs (and DLs) parameterized by explanation size to
  W$[2]$-hardness, (4) a completed landscape for ordered DTs showing that they
  behave in the same way as normal DTs with respect to all explanation
  problems and parameters, which is an important prerequisite for the
  analysis of BDDs in Part II. Throughout this paper, we will refer to the yet unpublished second
  part of this paper, which covers results for various variants of binary decision
  diagrams, as Part II.}
}
\date{}
\author{Sebastian Ordyniak\thanks{School of Computing, University of Leeds, UK,
    \texttt{sordyniak@gmail.com}} \and
  Giacomo Paesani\thanks{Dipartimento di Informatica, Sapienza University of Rome, Italy, \texttt{giacomopaesani@gmail.com}} \and  
  Mateusz Rychlicki\thanks{School of Computing, University of Leeds, UK,
    \texttt{mkrychlicki@gmail.com}} \and
Stefan Szeider\thanks{Algorithms and Complexity Group, TU Wien,
  Vienna, Austria, \texttt{sz@ac.tuwien.ac.at}}}
\newcommand{\pbDef}[3]{%
  \noindent
  \begin{center}
  \begin{boxedminipage}{0.98 \columnwidth}
  {\sc #1}\\[2pt]
  \begin{tabular}{@{}l@{~~} p{0.76 \columnwidth}@{}}
  {\sc Instance}: & #2\\
  {\sc Question}: & #3
  \end{tabular}
  \end{boxedminipage}
  \end{center}
}
\newcommand{\mypara}[1]{\smallskip\noindent\textbf{#1}}
\newcommand{\cc}[1]{{\mbox{\textnormal{\textsf{#1}}}}\xspace}  
\newcommand{\hy}{\hbox{-}\nobreak\hskip0pt}
\renewcommand{\P}{\cc{P}}
\newcommand{\NP}{\cc{NP}}
\newcommand{\FPT}{\cc{FPT}}
\newcommand{\XP}{\cc{XP}}
\newcommand{\Weft}{{\cc{W}}}
\newcommand{\W}[1]{{\Weft}{{\textnormal[\ensuremath{#1}\textnormal]}}}
\newcommand{\coW}[1]{{\mbox{\textnormal{\textsf{co-}}}\Weft}{{\textnormal[\ensuremath{#1}\textnormal]}}}
\newcommand{\paraNP}{\cc{paraNP}}
\newcommand{\poly}{\textit{poly}}
\newcommand{\NPh}{{\mbox{\textnormal{\textsf{NP}}}}\hy{}\text{hard}}
\newcommand{\coNPh}{{\mbox{\textnormal{\textsf{co-NP}}}}\hy{}\text{hard}}
\newcommand{\Wh}[1]{\W{#1}\hy{}\text{hard}}
\newcommand{\coWh}[1]{\coW{#1}\hy{}\text{hard}}
\newcommand{\pNPh}{{\mbox{\textnormal{\textsf{pNP}}}}\hy{}\text{hard}}
\newcommand{\pNPtable}{{\mbox{\textnormal{\textsf{pNP}}}}\hy{}\text{h}}
\newcommand{\copNPtable}{{\mbox{\textnormal{\textsf{co-pNP}}}}\hy{}\text{h}}
\newcommand{\NPtable}{{\mbox{\textnormal{\textsf{NP}}}}\hy{}\text{h}}
\newcommand{\coNPtable}{{\mbox{\textnormal{\textsf{co-NP}}}}\hy{}\text{h}}
\newcommand{\Wtable}[1]{\W{#1}\hy{}\text{h}}
\newcommand{\coWtable}[1]{\coW{#1}\hy{}\text{h}}
\newcommand{\SM}{{\;{|}\;}}
\newcommand{\SE}{\,\}}
\newcommand{\SB}{\{\,}
\newcommand{\MM}{\ensuremath{\mathcal{M}}}
\newcommand{\PP}{\ensuremath{\mathcal{P}}}
\newcommand{\FFF}{\ensuremath{\mathcal{F}}}
\newcommand{\BBB}{\ensuremath{\mathcal{B}}}
\newcommand{\TTT}{\ensuremath{\mathcal{T}}}
\newcommand{\MMM}{\ensuremath{\mathcal{E}}}
\newcommand{\PPP}{\mathcal{P}}
\newcommand{\modelname}[1]{\textnormal{#1}}
\newcommand{\MAJ}{\textnormal{MAJ}}
\newcommand{\DT}{\modelname{DT}\xspace}
\newcommand{\DTO}{\modelname{ODT}\xspace}
\newcommand{\DTOEO}{\modelname{ODT}\ensuremath{_\MAJ^<}\xspace}
\newcommand{\DS}{\modelname{DS}\xspace}
\newcommand{\DL}{\modelname{DL}\xspace}
\newcommand{\DSE}{\modelname{DS}\ensuremath{_\MAJ}\xspace}
\newcommand{\DLE}{\modelname{DL}\ensuremath{_\MAJ}\xspace}
\newcommand{\NN}{\modelname{NN}\xspace}
\newcommand{\RF}{\modelname{DT\ensuremath{_\MAJ}}\xspace}
\newcommand{\RFO}{\DTOEO}
\newcommand{\BC}{\modelname{BC}\xspace}
\newcommand{\ds}{S}
\newcommand{\dl}{L}
\newcommand{\dle}{\mathcal{L}}
\newcommand{\HOM}{\textsc{HOM}}
\newcommand{\kHOM}{\textsc{p-HOM}}
\renewcommand{\SS}{\subseteq}
\newcommand{\CD}{{\rule{0.5pt}{1.2ex}~\rule{0.5pt}{1.2ex}}}
\newcommand{\N}{$-$\xspace}
\newcommand{\LAEX}{\textsc{lAXp}}
\newcommand{\GAEX}{\textsc{gAXp}}
\newcommand{\LCEX}{\textsc{lCXp}}
\newcommand{\GCEX}{\textsc{gCXp}}
\newcommand{\SMLAEX}{\textsc{lAXp${}_\SS$}}
\newcommand{\SMGAEX}{\textsc{gAXp${}_\SS$}}
\newcommand{\SMLCEX}{\textsc{lCXp${}_\SS$}}
\newcommand{\SMGCEX}{\textsc{gCXp${}_\SS$}}
\newcommand{\MLAEX}{\textsc{lAXp${}_\CD$}}
\newcommand{\MGAEX}{\textsc{gAXp${}_\CD$}}
\newcommand{\MLCEX}{\textsc{lCXp${}_\CD$}}
\newcommand{\MGCEX}{\textsc{gCXp${}_\CD$}}
\newcommand{\SPP}{\ensuremath{\PP_\SS}}
\newcommand{\MPP}{\ensuremath{\PP_\CD}}
\newcommand{\MCC}{\textsc{MCC}}
\newcommand{\sizeelem}{\textsl{size\_elem}\xspace}
\newcommand{\termselem}{\textsl{terms\_elem}\xspace}
\newcommand{\enssize}{\textsl{ens\_size}\xspace}
\newcommand{\termsize}{\textsl{term\_size}\xspace}
\newcommand{\xpsize}{\textsl{xp\_size}\xspace}
\newcommand{\mnlsize}{\textsl{mnl\_size}\xspace}
\newcommand{\MNL}{\textsf{MNL}}
\newcommand{\CIRC}{\mathcal{C}}
\newcommand{\concat}{\circ}
\newcommand{\som}[1]{\|#1\|}
\newcommand{\feat}{F}
\newcommand{\EEE}{\mathcal{E}}
\newcommand{\rw}{\textsf{rw}}
\newcolumntype{R}[2]{%
    >{\adjustbox{angle=#1,lap=\width-(#2)}\bgroup}%
    l%
    <{\egroup}%
}
\newcommand*\rot{\multicolumn{1}{R{20}{1.8em}}}
\begin{document}
\maketitle

\begin{abstract}
   This paper presents a comprehensive theoretical investigation into
  the parameterized complexity of explanation problems in various
  machine learning (ML) models. Contrary to the prevalent black-box
  perception, our study focuses on models with transparent internal
  mechanisms. We address two principal types of explanation problems:
  abductive and contrastive, both in their local and global
  variants. Our analysis encompasses diverse ML models, including
  Decision Trees, Decision Sets, Decision Lists, Boolean Circuits, and ensembles
  thereof, each offering unique explanatory challenges. This research fills a
  significant gap in explainable AI (XAI) by providing a foundational
  understanding of the complexities of generating explanations for
  these models.  This work provides insights vital for further
  research in the domain of XAI, contributing to the broader discourse
  on the necessity of transparency and accountability in AI systems.
\end{abstract}

\section{Introduction}

When a machine learning model denies someone a loan, diagnoses a medical condition, or filters job applications, we naturally want to know why. This fundamental need for transparency has become even more pressing as AI systems move into high-stakes domains like healthcare, criminal justice, and financial services. Recent regulatory frameworks, including the EU's AI Act and various national guidelines, now mandate explainability for automated decision-making systems \cite{EU20,OECD23}. Yet despite this urgent practical need, we still lack a rigorous theoretical foundation for understanding which models can be efficiently explained and under what conditions.

This paper tackles a deceptively simple question: how hard is it, computationally, to generate explanations for different types of machine learning models? While practitioners have developed numerous tools and techniques for explaining model decisions \cite{Guidotti-etal-2018,Carvalho-etal-19}, the underlying computational complexity of these explanation tasks remains poorly understood. This gap matters because it determines which explanation approaches can scale to real-world systems and which are doomed to computational intractability.

We focus on models whose internal workings are fully
accessible---decision trees, decision sets, decision lists, and their
ensembles (like random forests). Unlike neural networks, these models
do not hide their decision logic behind layers of opaque computations \cite{Ribeiro0G16,Lipton18}. One might assume this transparency makes them easy to explain. Our results reveal a more nuanced picture: even for these ``interpretable'' models, generating certain types of explanations can be computationally prohibitive, while others admit efficient algorithms under the right conditions.

\subsection*{Our Approach}

We analyse four fundamental types of explanations that capture
different aspects of model behaviour \cite{Silva22}. Local explanations
focus on individual predictions: an \emph{abductive} explanation
identifies which features were sufficient for a particular decision
(answering: what features made the model classify this example as
positive?) \cite{IgnatievNM19,ShihCD18,DarwicheJi22,DBLP:conf/aaai/HuangIICA022}, while a
\emph{contrastive} explanation reveals what would need to change for a
different outcome (answering: what minimal changes would flip this prediction?) \cite{Miller19,IgnatievNA20,DarwicheJi22}. Global explanations characterise model behaviour across all possible inputs by identifying feature patterns that guarantee a certain classification (abductive) or prevent it (contrastive) \cite{Ribeiro0G16,IgnatievNM19}.

Since most of these explanation problems turn out to be NP-hard in
general, we employ the framework of parameterized complexity
\cite{DowneyFellows13}. This approach lets us identify specific
aspects of the problem (like the size of the explanation, the number
of rules in a model, or structural properties of decision trees) that can be exploited to obtain efficient algorithms for restricted but practically relevant cases. A problem that seems hopelessly intractable in general might become efficiently solvable when we know the explanations will be small or the models have bounded complexity along certain dimensions.

\subsection*{Technical Contributions}

Our analysis yields positive and negative results, painting a detailed
complexity landscape across different model types and explanation
problems. On the algorithmic side, we develop a powerful meta-theorem
based on Boolean circuits and monadic second-order logic that yields
fixed-parameter tractable algorithms for a wide range of explanation
problems. This framework is particularly notable because it extends
naturally to handle ensemble models by incorporating majority
gates---a technical challenge that required extending standard MSO
logic with counting capabilities \cite{BergougnouxDJ23}.
In Part II of this paper, this same framework also proves instrumental for analysing binary decision
diagrams and other circuit-based models, demonstrating its broad
applicability.

We complement these algorithms with an equally general framework for proving hardness results. A key insight underlying many of our hardness proofs is the close connection between explanation problems and the homogeneous model problem---deciding whether a model classifies all examples similarly. This connection allows us to transfer hardness results from this fundamental problem to various explanation scenarios.

More significantly, we introduce a novel meta-framework in Section 6.1
that unifies the majority of our hardness proofs. This framework
identifies two crucial properties that make model classes amenable to
hardness reductions:
\begin{itemize}
\item \emph{Set-modelling}: A model class can efficiently construct models that classify an example positively if and only if its features correspond exactly to some set in a given family of sets.
\item \emph{Subset-modelling}: A model class can efficiently construct models that classify an example positively if its features contain (as a subset) some set in a given family.

\end{itemize}

These properties might seem technical, but they capture something
fundamental about a model's expressive power. When a model class
satisfies either property, we can encode hard combinatorial problems
(like finding a maximum clique or a minimum hitting set) into the task of explaining the
model's behaviour. This meta-theorem covers the vast majority of
hardness results in this paper, applying to single models and ensembles alike. The broad applicability of this framework suggests it will prove valuable for analysing the complexity of explanation problems in further model classes as well.


\subsection*{Broader Impact}

Beyond the specific results, which we provide in detail in Section~\ref{sec:overview}, this work makes several contributions to
the foundations of explainable AI. First, it provides a rigorous
framework for comparing the ``explainability'' of different model
classes based on the computational complexity of generating
explanations. A model is not just interpretable or not---different
models support different types of explanations with varying
computational costs. Second, our meta-theorems offer general tools
that can be applied to analyse new model classes as they emerge.
The frameworks we develop here have already proven useful for
analysing binary decision diagrams in Part II of this paper, and we expect them to find further applications.

Our results also have practical implications for the design of explainable AI systems. When choosing between different interpretable models, practitioners can now make informed decisions about the computational costs of explanation generation. For instance, while decision sets might seem appealing for their simple if-then rules, our results show that generating certain explanations for them is computationally harder than for decision trees. Similarly, our parameterized algorithms identify exactly when explanation problems become tractable, guiding the design of efficient explanation systems for real-world applications.

This work builds on the pioneering perspective introduced by \citet{BarceloM0S20}, who first proposed quantifying model explainability through computational complexity. While \citet{OrdyniakPS23} recently examined the parameterized complexity of finding explanations from black-box samples, we take a complementary approach by assuming full access to the model's internal structure. This allows us to provide a more complete characterisation of when and why explanation generation becomes computationally difficult.

The increasing emphasis on interpretable ML \cite{LisboaSVFV23} and the call for theoretical rigour in AI \cite{EU19} make this work particularly timely. Prior research has largely focused on practical explainability approaches, but a comprehensive theoretical understanding has been lacking~\cite{HolzingerSMBS20,Molnar23}. By dissecting the parameterized complexity of explanation problems, we lay the groundwork for future research and algorithm development, ultimately contributing to more efficient explanation methods in AI.

This paper is organised as follows.
After establishing preliminaries in Section~\ref{sec:preliminaries}, we formally define the explanation problems and parameters in Section~\ref{sec:problems}. Section~\ref{sec:overview} provides an overview of our results through comprehensive complexity tables. Sections~\ref{sec:algorithmic} and~\ref{sec:hardness} present our algorithmic and hardness results respectively, with Section~\ref{ssec:bc} introducing the Boolean circuit meta-theorem and Section~\ref{ss:meta_hardness} presenting the hardness framework based on set-modelling and subset-modelling properties. We conclude with a discussion of implications and future directions in Section~\ref{sec:conclusion}.
\section{Preliminaries}\label{sec:preliminaries}

For a positive integer $i$, we denote by $[i]$ the set of integers
$\{1,\dotsc,i\}$.

\subsection{Parameterized Complexity (PC).}  We outline some basic concepts
refer to the textbook by \citet{DowneyFellows13} for an in-depth
treatment. An instance of a parameterized problem $Q$ is a pair
$(x,k)$ where $x$ is the main part and $k$ (usually an non-negative
integer) is the parameter. $Q$ is \emph{fixed-parameter tractable
  (fpt)} if it can be solved in time $f(k)n^c$, where $n$ is the input
size of $x$, $c$ is a constant, and $f$ is a
computable function. If a problem has more then one parameters, then
the parameters can be combined to a single one by addition.  $\FPT$
denotes the class of all fixed-parameter tractable decision
problems. $\XP$ denotes the class of all parameterized decision
problems solvable in time $n^{f(k)}$ where $f$ is again a computable
function. An \emph{fpt-reduction} from one parameterized decision
problem $Q$ to another $Q'$ is an fpt-computable reduction that
reduces of $Q$ to instances of $Q'$ such that yes-instances are mapped
to yes-instances and no-instances are mapped to no-instances. The
parameterized complexity classes $\W{i}$ are defined as the closure of
certain weighted circuit satisfaction problems under
fpt-reductions. Denoting by $\P$ the class of all parameterized
decision problems solvable in polynomial time, and by $\paraNP$ 
the
class of parameterized decision problems that are in $\NP$ 
for every instantiation of the parameter with a constant, we
have
$ \P \subseteq \FPT \subseteq \W{1} \subseteq \W{2} \subseteq \cdots
\subseteq \XP \cap \paraNP \subseteq \paraNP$, where all inclusions
are believed to be strict. If a parameterized problem is $\Wh{i}$ under fpt-reductions ($\Wtable{i}$, for short) then it is unlikely
to be fpt. $\textsf{co-C}$ denotes the complexity class containing all
problems from $\textsf{C}$ with yes-instances replaced by no-instances and
no-instances replaced by yes-instances.

\subsection{Graphs, Rank-Width, Treewidth and Pathwidth.}
We mostly use standard notation for graphs as can be found, e.g., in~\cite{Diestel00}.
Let $G=(V,E)$ be a directed or undirected graph. For a vertex subset
$V' \subseteq V$, we denote by $G[V']$ the graph induced by the
vertices in $V'$ and by $G \setminus V'$ the graph $G[V \setminus
V']$. 
For a graph $G$, we denote we denote by $N_G(v)$ the set of all neighbours
of the vertex $v \in V$.
If $G$ is directed, we denote by $N_G^-(v)$ ($N_G^+(v)$) the set
of all incoming (outgoing) neighbours of $v$.

Let $G=(V,E)$ be a directed graph. A
\emph{tree decomposition} of $G$ is a pair $\TTT=(T,\lambda)$ with
$T$ being a tree and $\lambda : V(T) \rightarrow 2^{V(G)}$
such that: (1)
for every vertex $v \in V$ the set $\SB t \in V(T) \SM
v \in \lambda(t)\SE$ is forms a non-empty  subtree
of $T$ and (2)
for every arc $e=(u,v) \in E$, there is a node $t \in V(T)$ with $u,v
\in \lambda(t)$. The \emph{width} of $\TTT$ is equal to $\max_{t \in
  V(T)}|\lambda(t)|-1$ and the \emph{treewidth} of $G$ is the minimum
width over all tree decompositions of $G$. $\TTT$ is called a
\emph{path decomposition} if $T$ is a path and the \emph{pathwidth} of
$G$ is the minimum width over all path decompositions of $G$. We will
need the following well-known properties of pathwidth, treewidth, and
rank-width; since it is not needed for
our purposes, we will not define rank-width but only state its
relationship to other parameters (see, e.g.~\cite{DBLP:journals/jct/OumS06}, for a definition of rank-width).
\begin{lemma}[{\cite{DBLP:conf/wg/CorneilR01,DBLP:journals/jct/OumS06}}]\label{lem:ranktree}
  Let $G=(V,E)$ be a directed graph and $X \subseteq V$. The
  treewidth of $G$ is at most $|X|$ plus the treewidth of
  $G-X$. Furthermore, if $G$ has rank-width $r$, pathwidth $p$ and
  treewidth $t$, then
  $r \leq 3\cdot 2^{t-1}\leq 3\cdot 2^{p-1}$. 
\end{lemma}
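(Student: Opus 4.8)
The plan is to treat the two assertions of the lemma separately. For the first assertion, I would take an optimal tree decomposition $(T,\lambda')$ of $G-X$ and simply add all of $X$ to every bag, i.e.\ set $\lambda(t)=\lambda'(t)\cup X$ for every $t\in V(T)$. The first step is then to verify the two tree-decomposition axioms for $(T,\lambda)$: every vertex of $V\setminus X$ still induces exactly the same non-empty subtree of $T$ as it did in $(T,\lambda')$, while every vertex of $X$ now lies in all bags, so its bags form all of $T$; for the coverage condition, an arc with both endpoints in $V\setminus X$ is covered as before, and an arc $(u,v)$ with $u\in X$ (symmetrically $v\in X$) is covered by any bag that contains its other endpoint, since $u$ lies in every bag. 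The second step is the width count: $|\lambda(t)|\le|\lambda'(t)|+|X|$ for all $t$, so the width of $(T,\lambda)$ is at most $\mathrm{tw}(G-X)+|X|$, which yields $\mathrm{tw}(G)\le|X|+\mathrm{tw}(G-X)$. (Throughout, the treewidth of a directed graph is the treewidth of its underlying undirected graph, consistent with the definition above.)

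For the second assertion, the easy inequality $3\cdot 2^{t-1}\le 3\cdot 2^{p-1}$ reduces to $t\le p$, which holds because a path decomposition is in particular a tree decomposition, so the treewidth of $G$ never exceeds its pathwidth. The remaining inequality $r\le 3\cdot 2^{t-1}$ is the genuinely substantial part, and I would not reprove it from scratch but obtain it by combining two known facts: that the rank-width of a graph is at most its clique-width (Oum--Seymour), and that a graph of treewidth $t$ has clique-width at most $3\cdot 2^{t-1}$ (Corneil--Rotics). The latter is established by processing an optimal tree decomposition from the leaves upward and maintaining, for each processed subtree, a clique-width expression whose label classes encode the at most $t+1$ boundary vertices of the processed part together with the bounded adjacency bookkeeping needed to correctly join later vertices; the factor $3\cdot 2^{t-1}$ is precisely the number of label classes this bookkeeping requires.

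The main obstacle is this last clique-width estimate: a fully self-contained argument would require reproducing the entire Corneil--Rotics construction, which is well beyond what is needed here. Accordingly, in the write-up I would present the treewidth-plus-$|X|$ construction in detail and explicitly defer the rank-width estimate to the cited works of Corneil--Rotics and Oum--Seymour. Everything else --- the two tree-decomposition axioms, the width accounting, and the inequality $t\le p$ --- is routine verification.
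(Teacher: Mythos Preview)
Your proposal is correct, but there is nothing to compare it against: the paper does not give its own proof of this lemma. The lemma is stated with the citations \cite{DBLP:conf/wg/CorneilR01,DBLP:journals/jct/OumS06} attached in the lemma header, and the paper simply uses it as a black box thereafter. Your reconstruction of the argument---adding $X$ to every bag of an optimal tree decomposition of $G-X$ for the first claim, the trivial observation $t\le p$ for the last inequality, and the chain $\rw(G)\le\mathrm{cw}(G)\le 3\cdot 2^{t-1}$ via Oum--Seymour and Corneil--Rotics for the substantial inequality---is exactly what the cited references provide, and your decision to defer the Corneil--Rotics construction rather than reproduce it is the right call.
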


\subsection{Examples and Models}
Let $F$ be a set of binary features. An \emph{example} $e : F
\rightarrow \{0,1\}$ over $F$ is a $\{0,1\}$-assignment of the features
in $F$. An example is a \emph{partial example (assignment)} over $F$
if it is an example over some subset $F'$ of $F$. We denote by $E(F)$
the set of all possible examples over $F$. A \emph{(binary
  classification) model} $M : E(F) \rightarrow \{0,1\}$ is a specific
representation of a Boolean function over $E(F)$. We denote by
$\feat(M)$ the set of features considered by $M$, i.e., $\feat(M)=F$.
We say that an example $e$ is a $0$-example or negative example ($1$-example
or positive example) w.r.t. the model $M$ if $M(e)=0$ ($M(e)=1$).
For convenience, we restrict our setting to the classification into two
classes. We note however that all our hardness results easily carry over to the
classification into any (in)finite set of classes. The same applies to
our algorithmic results for non-ensemble models since one can easily
reduce to the case with two classes by renaming the class of interest
for the particular explanation problem to $1$ and all other classes to
$0$. We leave it open whether the same holds for our algorithmic
results for ensemble models.

\subsection{Decision Trees.}
A \emph{decision tree} ($\DT$) $\TTT$ is a pair $(T,\lambda)$
such that $T$ is a
rooted binary tree and $\lambda : V(T) \rightarrow F \cup \{0,1\}$ is
a function that assigns a feature in $F$ to every inner node of $T$
and either $0$ or $1$ to every leaf node of $T$. Every inner node of
$T$ has exactly $2$ children, one left child (or $0$-child) and one
right-child (or $1$-child).
The classification function $\TTT : E(F) \rightarrow \{0,1\}$ of a \DT{}
is defined as follows for an example $e \in E(F)$. 
Starting at the root of $T$ one does the following at
every inner node $t$ of $T$. If $e(\lambda(t))=0$ one continues with
the $0$-child of $t$ and if $e(\lambda(t))=1$ one continues with the
$1$-child of $t$ until one eventually ends up at a leaf node $l$ at which
$e$ is classified as $\lambda(l)$.
For every node $t$ of $T$, we denote by $\alpha_{\TTT}^t$ the
partial assignment of $F$ defined by the path from the root of $T$ to
$t$ in $T$, i.e., for a feature $f$, we set $\alpha_{\TTT}^t(f)$ to $0$ ($1$) if
and only if the path from the root of $T$ to $t$ contains an inner
node $t'$ with $\lambda(t')=f$ together with its $0$-child
($1$-child).
We denote by $L(\TTT)$ the set of leaves of $\TTT$,  
and for every $b \in \{0,1\}$, we let  
$L_b(\TTT) = \SB l \in L(\TTT) \SM \lambda(l) = b \SE$.  
We denote by $\som{\TTT}$ the size of $\TTT$, that is, the number of leaves of $\TTT$.  
Finally, we denote by $\MNL(\TTT)$ the minimal number of leaves of the
same kind, that is, $\MNL(\TTT) = \min\{ |L_0(\TTT)|, |L_1(\TTT)| \}$.
Let $<$ be a total ordering of features and let $\mathcal{T}$ be a
\DT. We say that a $\mathcal{T}$ \emph{respects} $<$ if so does every
root-to-leaf path of $\mathcal{T}$.
We additionally consider the class of all ordered decision trees,  
denoted by $\DTO$, i.e., the class of all \DT{}s $\mathcal{T}$ for which
there is a total ordering $<$ of all features of $\mathcal{T}$ such
that $\mathcal{T}$ respects $<$. 

\subsection{Decision Sets.}
A \emph{term} $t$ over $F$ is a set of
\emph{literals} with each literal being
of the form $(f=z)$ where $f\in F$ and $z\in \{0,1\}$. A \emph{rule}
$r$ is a pair $(t,c)$ where $t$ is a term and $c\in \{0,1\}$. We say
that a rule $(t,c)$ is a \emph{$c$-rule}.
We say that a term $t$ (or rule $(t,c)$) \emph{applies to (or agrees
  with)} an example $e$ if $e(f)=z$ for every element $(f=z)$ of $t$.
Note that the empty rule applies to any example.

A {\it decision set} ($\DS$) $\ds$ is a pair $(T,b)$, where $T$ is a set of terms
and $b \in \{0,1\}$ is the classification of the default rule (or the
default classification).
We denote by $\som{\ds}$ the size of $\ds$,  
defined as $(\sum_{t \in T} |t|) + 1$, where the $+1$ accounts for the default rule.  
We denote by $|\ds|$ (also referred to as $\termselem{}$)  
the number of terms in $\ds$, that is, $|T|$.
The classification function $\ds : E(F) \rightarrow \{0,1\}$  
of a disjunctive specification $\ds = (T, b)$  
is defined as follows: for every example $e \in E(F)$,  
we set $\ds(e) = b$ if no term in $T$ applies to $e$,  
and $\ds(e) = 1 - b$ otherwise.

\subsection{Decision Lists.}
A {\it decision list} ($\DL$) $L$ is a non-empty sequence of rules
$(r_1=(t_1,c_1),\dotsc,r_\ell=(t_\ell,c_\ell))$, for some $\ell \geq 0$.
The size of a \DL{} $L$, denoted by $\som{L}$, is defined as  
$\sum_{i=1}^\ell (|t_i| + 1)$.  
We denote by $|L|$ the number of terms in $L$ (i.e., the \termselem{}), which equals $\ell$.
The classification function $L : E(F) \rightarrow \{0,1\}$ of a \DL{} $L$  
is defined such that $L(e) = b$ if the first rule in $L$ that applies to $e$ is a $b$-rule.  
To guarantee that every example receives a classification,  
we assume that the term of the last rule is empty,  
and thus applies to all examples.

\subsection{Ensembles}  
An \emph{$\MM$-ensemble}, also denoted by $\MM_\MAJ$,  
is a set $\EEE$ of models of type $\MM$,  
where $\MM \in \{\DT, \DTO{}, \DS, \DL\}$.  
We say that $\EEE$ classifies an example $e \in E(F)$ as $b$  
if the majority of models in $\EEE$ classify $e$ as $b$,  
that is, if at least $\lfloor |\EEE|/2 \rfloor + 1$ models  
in $\EEE$ assign $b$ to $e$.  
We denote by $\som{\EEE}$ the size of $\EEE$,  
defined as $\sum_{M \in \EEE} \som{M}$.
We additionally
consider an \emph{ordered ensemble of $\DTO$s}, denote by \DTOEO{},
where all \DTO{}s in the ensemble respect the same ordering of the features.


\section{Considered Problems and Parameters}\label{sec:problems}


\begin{figure}
\centering
\begin{tikzpicture}[scale=1]
\tikzstyle{q} = [draw, rectangle,  fill= gray!10, inner sep=2pt]
\tikzstyle{t} = [draw=black,rectangle, fill= darkgray, text=white, inner sep=2pt]
\draw (13.0,-2.5) node  (dl) {\fbox{$\arraycolsep=1.4pt
\begin{array}{llll}
r_1:&\text{IF}   & (x=1 \wedge y=1)&\text{THEN } 0\\
r_2:&\text{ELSE IF}   &(x=0 \wedge z=0) &\text{THEN } 1\\
r_3:&\text{ELSE IF}   &(y=0 \wedge z=1) &\text{THEN } 0\\
r_4:&\text{ELSE}   &  &\text{THEN } 1\\
\end{array}
$}};
\end{tikzpicture}
\caption{
Let $L$ be the \DL given in the figure and let $e$ be the example given by $e(x)=0$, $e(y)=0$ and $e(z)=1$. Note that $L(e)=0$. It is easy to verify that $\{y,z\}$ is the only local abductive explanation for $e$ in $L$ of size at most 2. Moreover, both $\{y\}$ and $\{z\}$ are minimal
local contrastive explanations for $e$ in $L$.
Let $\tau_1=\{x\mapsto 1,y \mapsto 1\}$ and $\tau_2=\{ x \mapsto 0,z \mapsto 0\}$ be a partial assignments.
Note that $\tau_1$ and $\tau_2$ are minimal global abductive and global contrastive
explanations for class $0$ w.r.t. $L$, respectively.
}
\label{fig:expl}
\end{figure}

We consider the following types of explanations
(see~\citeauthor{Silva22}'s  survey \cite{Silva22}).
Let $M$ be a model, $e$ an example over $\feat(M)$, and let $c \in
\{0,1\}$ be a classification (class). We consider the following types
of explanations for which an example is illustrated in \Cref{fig:expl}.
\begin{itemize}
\item A \emph{(local) abductive explanation (\LAEX{}) for $e$
  w.r.t. $M$} is a subset $A \subseteq \feat(M)$ of features
  such that $M(e)=M(e')$ for every example $e'$ that agrees with $e$ on $A$.
\item A \emph{(local) contrastive explanation (\LCEX{}) for $e$
    w.r.t. $M$} is a set $A$ of
  features such that there is an example $e'$ such that $M(e')\neq
  M(e)$ and $e'$ differ from $e$ only on the features in $A$.
\item A \emph{global abductive explanation (\GAEX{}) for $c$
  w.r.t. $M$} is a partial example
  $\tau : F \rightarrow \{0,1\}$, where $F \subseteq \feat(M)$,
  such that $M(e)=c$ for every example $e$ that agrees with~$\tau$.
\item A \emph{global contrastive explanation (\GCEX{}) for $c$
    w.r.t. $M$} is a partial example
  $\tau : F \rightarrow \{0,1\}$, where $F \subseteq \feat(M)$,
  such that $M(e)\neq c$ for every example that agrees with~$\tau$.
\end{itemize}

For each of the above explanation types, each of the considered model
types $\MM{}$, and depending on whether or
not one wants to find a subset minimal or cardinality-wise minimum
explanation, one can now define the corresponding computational
problem. For instance:

\pbDef{\MM{}-\textsc{Subset-Minimal Local Abductive Explanation
    (\SMLAEX{})}}{A model $M \in \MM$ and an example $e$.}{Find a subset minimal local
  abductive explanation for $e$ w.r.t.~$M$.}

\pbDef{\MM{}-\textsc{Cardinality-Minimal Local Abductive Explanation
    (\MLAEX{})}}{A model $M \in \MM$, an example $e$, and an integer
  $k$.}{Is there a local explanation for $e$ w.r.t. $M$ of size at most $k$?}
The problems $\MM$-$\PPP_\SS$ and $\MM$-$\PPP_\CD$ for $\PPP\in \{ \GAEX$,
$\LCEX$, $\GCEX \}$ are defined analogously.

Finally, for these problems, we will consider natural parameters listed
in \Cref{tab:parms}; not all parameters apply to all considered
problems. We denote a problem $X$ parameterized by parameters $p,q,r$
by $X(p+q+r)$.





\section{Overview of Results}\label{sec:overview}

As we consider several problems, each with several variants and
parameters, there are thousands (indeed $4\cdot 576=2304$)
of combinations to consider. We
therefore provide a condensed summary of our results in
\Cref{fig:DTresults,fig:DSDLresults}.

The first column in each table indicates whether a result applies to
the cardinality-minimal or subset-minimal variant of the explanation
problem (i.e., to $X_\SS$ or $X_\CD$, respectively).  The next four
columns in
\Cref{fig:DTresults,fig:DSDLresults} indicate the parameterization,
the parameters are explained in \Cref{tab:parms}. A ``p'' indicates
that this parameter is part of the parameterization, a ``\N''
indicates that it isn't. A ``c'' means the parameter is set to a
constant, ``1'' means the constant is 1.

By default, each row in the tables applies to all four problems
\LAEX, \GAEX, \GCEX, and \LCEX. However, if a result only
applies to \LCEX, it is stated in parenthesis. So, for instance, the
first row of \Cref{fig:DTresults} indicates that \DT-\SMLAEX{},
\DT-\SMGAEX, \DT-\SMGCEX, and \DT-\SMLCEX, where the
ensemble consists of a single \DT, can be solved in polynomial
time. 
The penultimate row of \Cref{fig:DTresults}
indicates that \RF-\MLAEX,
\RF-\MGAEX{} and \RF-\MGCEX{} are \coNPh{} even if
$\mnlsize{}+\sizeelem{}+\xpsize{}$ is constant,
and \RF-\MLCEX{} is \Wh{1} parameterized by \xpsize{} even if 
$\mnlsize{}+\sizeelem{}$ is constant.
Finally, the $\star$ indicates a minor distinction in the complexity
between \DT-\MLAEX{} and the two problems \DT-\MGAEX{} and \DT-\MGCEX{}. That
is, if the cell contains $\NPtable{}^\star$ or $\pNPtable{}^\star$, then \DT-\MLAEX{} is
\NPh{} or \pNPh{}, respectively, and neither \DT-\MGAEX{} nor
\DT-\MGCEX{} are in \P{} unless $\FPT{}=\W{1}$.


We only state in the tables those results that are not implied by
others. Tractability results propagate in the following list from left
to right, and hardness results propagate from right to left.

\begin{eqnarray*}
  \label{eq:1}
  \text{$\CD$-minimality} &\Rightarrow& \text{$\SS$-minimality}\\[-3pt]
  \text{set $A$ of parameters} &\Rightarrow& \text{set $B\supseteq A$ of parameters}\\[-3pt]
  \text{not ensemble of models}   &\Rightarrow& \text{single model}\\[-3pt]
  \text{ordered}&\Rightarrow& \text{not ordered}\\[-3pt]           
\end{eqnarray*}
For instance, the tractability of $X_\CD$ implies the
tractability of $X_\SS$, and the hardness of $X_\SS$ implies the
hardness of $X_\CD$.\iflong\footnote{Note that even though the inclusion-wise
  minimal versions of our problems are defined in terms of finding
  (instead of decision), the implication still holds because there is
  a polynomial-time reduction from the finding version
  to the decision version of \MLAEX{}, \MLCEX{}, \MGAEX{}, and
  \MGCEX{}.}\fi

\begin{table}[tbh]
  \centering
  \begin{tabular}{@{}l@{~~}l@{}}
    \toprule
    parameter & definition \\
    \midrule
    \enssize & number of elements of the ensemble\\
    \mnlsize & largest number of $\MNL$ of any ensemble element \\
    \termselem & largest number of terms of any ensemble element \\
    \termsize & size of a largest term of any ensemble element\\
    \sizeelem & largest size of any ensemble element    \\
    \xpsize & size of the explanation\\
              \bottomrule
  \end{tabular}
  \caption{Main parameters considered. Note that some parameters (such
    as \mnlsize) only
    apply to specific model types.}\label{tab:parms} 
\end{table}

\newcommand{\Tnum}[1]{}



\begin{table}[htb]
    \Crefname{theorem}{Thm}{Thms}
\centering
\begin{tabular}{@{}cc@{}c@{}c@{}cc@{}r@{}}
  \toprule
 \rot{minimality} &\rot{\enssize}&\rot{\mnlsize} &\rot{\sizeelem} &\rot{\xpsize} &\rot{complexity} &\rot{result} \\
  \midrule 
$\SS$ &1 &\N &\N &\N &\P &\Cref{th:DT-SLGA-P}\Tnum{4,5} \\
$\CD$ &1 &\N &\N &\N &$\NPtable^{\star}$(\P) &
\ifshort\Cref{th:DT-MLAEX-W2,th:DT-MGAEX-W1,th:DT-SLGA-P}\fi\iflong\Cref{th:DT-MLAEX-W2,th:DT-MGAEX-W1,th:DT-MLCEX}\fi\Tnum{4,5} \\
$\CD$ &1 &\N &\N &p &\Wtable{1}(\P) &
\ifshort\Cref{th:DT-MLAEX-W2,th:DT-MGAEX-W1,th:DT-SLGA-P}\fi\iflong\Cref{th:DT-MLAEX-W2,th:DT-MGAEX-W1,th:DT-MLCEX}\fi\Tnum{4,5} \\
$\CD$ &1 &\N &\N &p &\XP (\P) &\ifshort\Cref{th:DT-LGA-XP,th:DT-SLGA-P}\fi\iflong\Cref{th:DT-LGA-XP,th:DT-MLCEX}\fi\Tnum{4,6} \\
$\SS$ &p &\N &\N &\N &\coWtable{1}(\Wtable{1}) &\Cref{th:RF-W1-ES}\Tnum{8} \\
$\SS$ &p &\N &\N &\N &\XP &\Cref{th:Rf-XP-e}\Tnum{10} \\
$\CD$ &p &\N &\N &\N &$\pNPtable^{\star}$(\XP) &\Cref{th:DT-MLAEX-W2,th:DT-MGAEX-W1,th:Rf-XP-e}\Tnum{10} \\
$\CD$ &p &p &\N &\N &\FPT &\Cref{cor:RF-FPT-MNL}\Tnum{10} \\
$\CD$ &p &\N &p &\N &\FPT &\Cref{cor:RF-FPT-MNL}\Tnum{7} \\
$\CD$ &p &\N &\N &c(p) &\coWtable{1}(\Wtable{1}) &\Cref{th:RF-W1-ES}\Tnum{8} \\
$\SS$ &\N &c &c &\N &\coNPtable (\NPtable) &\Cref{th:RF-paraNP}\Tnum{9} \\
$\CD$ &\N &c &c &c(p) &\coNPtable (\Wtable{1}) &\Cref{th:RF-paraNP}\Tnum{9} \\
$\CD$ &\N &\N &\N &p &\copNPtable(\XP) &\Cref{th:RF-paraNP,th:MLCEX-XP}\Tnum{1} \\
  \bottomrule
\end{tabular}
\caption{  
   Explanation complexity when the model is a \DT{}, an \DTO{}, or an
   (ordered) ensemble thereof.  
   Since the results for \DT{} and \DTO{} as well as the
   results for \RF{} and \DTOEO{} are identical,  
   we do not distinguish between them in the table.  
   See \Cref{sec:overview} for details on how to read the table.  
}\label{fig:DTresults}
\end{table}

\begin{table}[tbh]
    \Crefname{theorem}{Thm}{Thms}
  \Crefname{corollary}{Cor}{Cors}
  \centering
  \begin{tabular}{@{}cc@{}c@{}c@{}ccr@{}}
    \toprule
\rot{minimality} & \rot{\enssize}& \rot{\termselem} & \rot{\termsize}  & \rot{\xpsize} &\rot{complexity} & \rot{result} \\
 \midrule

$\SS$ &1 &\N &c & \N &\coNPtable(\NPtable) &\Cref{th:DS-paraCoNP}\Tnum{15} \\
$\CD$ &1 &\N &\N &p &\copNPtable(\Wtable{2}) & \Cref{th:DS-paraCoNP,th:DS-SMLCEX-W1}\Tnum{31} \\
$\CD$ &c &\N &p &p  &\copNPtable(\FPT) & \Cref{th:DS-paraCoNP} and \Cref{cor:DS-MLCEX-FPT-const-ens}\Tnum{33} \\
$\CD$ &p &p &\N &\N &\FPT &\Cref{cor:ds-ensa}\Tnum{12} \\
$\CD$ &p &\N &c &p  &\copNPtable(\Wtable{1}) & \Cref{th:DS-paraCoNP,th:DSE-SMLCEX-W1}\Tnum{33} \\
$\SS$ &\N &c &c &\N &\coNPtable(\NPtable) &\Cref{th:DSE-paraNP}\Tnum{17} \\
$\CD$ &\N &c &c &c(p) &\coNPtable(\Wtable{1}) &\Cref{th:DSE-paraNP}\Tnum{17} \\
$\CD$ &\N &\N &\N &p &\copNPtable(\XP) &\Cref{th:DS-paraCoNP,th:MLCEX-XP}\Tnum{1} \\
    \bottomrule
  \end{tabular}
  \caption{
   Explanation complexity when the model is a  \DS, \DL,  or an ensemble thereof.  
   See \Cref{sec:overview} for details on how to read the table. 
   }\label{fig:DSDLresults}
\end{table}

\section{Algorithmic Results}\label{sec:algorithmic}

\newcommand{\val}[3]{\textsf{val}(#1,#2,#3)}
\newcommand{\IG}{\textsf{IG}}
\newcommand{\G}{\textsf{G}}
\newcommand{\OUT}{\textsf{O}}

In this section, we will present our algorithmic results.
We start with some general observations that are independent of a
particular model type.
\begin{theorem}\label{th:MLCEX-XP}
  Let $\MM{}$ be any model type such that $M(e)$ can be computed in
  polynomial-time for every $M \in \MM$. \MM{}-\MLCEX{} parameterized
  by \xpsize{} is in \XP{}.
\end{theorem}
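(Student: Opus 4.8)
The plan is a brute-force enumeration argument running in time $n^{O(k)}$, where $n=|\feat(M)|$ and $k$ denotes the value of \xpsize{}. First I would unfold the definition: a set $A\subseteq\feat(M)$ is a local contrastive explanation for $e$ w.r.t.\ $M$ exactly when there is an example $e'$ that agrees with $e$ on every feature outside $A$ and satisfies $M(e')\neq M(e)$. The key observation is that, without loss of generality, $A$ may be taken to equal the set of features on which its witness differs from $e$: if $A$ with $|A|\le k$ is a contrastive explanation witnessed by $e'$, then $A'=\{\,f\in\feat(M):e'(f)\neq e(f)\,\}$ is also one, witnessed by the same $e'$, and $|A'|\le|A|\le k$. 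Consequently $(M,e,k)$ is a yes-instance if and only if there is an example $e'$ differing from $e$ on at most $k$ features with $M(e')\neq M(e)$.

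This turns the problem into a search over the Hamming ball of radius $k$ around $e$. Since all features are binary, the number of examples within Hamming distance $k$ of $e$ equals $\sum_{i=0}^{k}\binom{n}{i}=O(n^k)$, and each such example can be listed by picking the (at most $k$) features to flip and flipping their values. The algorithm enumerates all these candidates $e'$; for each it evaluates $M(e')$ using the assumed polynomial-time classification procedure and compares the result with $M(e)$; it accepts as soon as some $e'$ yields $M(e')\neq M(e)$, and rejects otherwise. Correctness is immediate from the equivalence above, and the running time is $O(n^k)$ multiplied by a polynomial in the input size, which is of the form $n^{f(k)}\cdot\mathrm{poly}$; hence \MM{}-\MLCEX{} parameterized by \xpsize{} lies in \XP{}.

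The only step requiring a little care is the passage from explanations to witnessing examples, namely checking that shrinking $A$ to the exact set of flipped coordinates preserves the property of being a contrastive explanation --- this is immediate, since the same $e'$ continues to witness it. Everything else is a direct counting bound that exploits the binary-feature assumption (each of the $\le k$ flipped coordinates has exactly one alternative value, so the search space has size $O(n^k)$), together with the polynomial-time evaluability of $M$, which is exactly what makes the per-candidate test efficient. I do not anticipate any genuine obstacle: the statement is essentially a careful packaging of exhaustive search.
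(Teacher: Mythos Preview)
Your proposal is correct and is essentially the same argument as the paper's: both reduce the question to enumerating all subsets $A\subseteq\feat(M)$ of size at most $k$, flipping $e$ on $A$ to obtain the unique candidate $e_A$, and testing $M(e_A)\neq M(e)$ in polynomial time. The paper phrases the key observation via cardinality-minimality (a minimal $A$ must be witnessed by $e_A$ itself), whereas you phrase it by shrinking $A$ to the exact set of flipped coordinates of the witness; these are two ways of saying the same thing.
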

\begin{proof}
  Let $(M,e,k)$ be the given instance of \MM{}-\MLCEX{} and suppose that
  $A \subseteq \feat(M)$ is a cardinality-wise minimal local contrastive
  explanation for $e$ w.r.t. $M$. Because $A$ is cardinality-wise minimal,
  it holds the example $e_A$ obtained from $e$ by setting
  $e_A(f)=1-e(f)$ for every $f \in A$ and $e_A(f)=e(f)$ otherwise, is
  classified differently from $e$, i.e., $M(e)\neq M(e_A)$. 
  Therefore, a set $A \subseteq \feat(M)$ is a 
  cardinality-wise minimal local contrastive explanation for $e$
  w.r.t. $M$ if and only if $M(e)\neq M(e_A)$ and there is no
  cardinality-wise smaller set $A'$ for which this is the case.
  This
  now allows us to obtain an \XP{} algorithm for $\MM{}$-\MLCEX{} as
  follows. We first enumerate all possible subsets $A \subseteq
  \feat(M)$ of size at most $k$ in time $\bigoh(|\feat(M)|^k)$ and for
  each such subset $A$ we test in polynomial-time 
  if $M(e_A)\neq M(e)$. If so, we output that $(M,e,k)$ is a yes-instance
  and if this is not the case for any of the enumerated subsets, we
  output correctly that $(M,e,k)$ is a no-instance.
\end{proof}

The remainder of the section is organised as follows. First in
\Cref{ssec:bc}, we provide a very general result about Boolean circuits,
which will allow us to show a variety of algorithmic results
for our models. We then provide our algorithms for the considered
models in Subsections~\ref{ssec:algDT} to~\ref{ssec:algDSDL}.

\subsection{A Meta-Theorem for Boolean Circuits}\label{ssec:bc}

Here, we present our algorithmic result for Boolean circuits that are
allowed to employ majority circuits. In particular, we will show
that all considered explanation problems are fixed-parameter tractable
parameterized by the so-called 
rank-width of the Boolean circuit as
long as the Boolean circuit uses only a constant number of majority
gates\ifshort; see, e.g.,~\cite{DBLP:journals/jct/OumS06} for a
  definition of rank-width\fi. Since our considered models can be naturally translated
into Boolean circuits, which require majority gates in the case of
ensembles, we will obtain a rather large number of algorithmic
consequences from this result by providing suitable reductions of our
models to Boolean circuits in the following subsections.

\iflong
We start by introducing Boolean circuits.
A \emph{Boolean circuit} (\BC{})  is a directed acyclic graph $D$ with a unique
sink vertex $o$ (output gate) such that every vertex $v \in V(D)
\setminus \{o\}$ is either:
\begin{itemize}
\item an \emph{IN-gate} (input gate) with no incoming arcs,
\item an \emph{AND-gate} with at least one incoming arc,
\item an \emph{OR-gate} with at least one incoming arc,
\item a \emph{MAJ-gate} (majority gate) with at least one incoming arc and an integer
  threshold $t_v$, or
\item a \emph{NOT-gate} with exactly one incoming arc.
\end{itemize}
We denote by $\IG(D)$ the set of all input gates of $D$ and by
$\MAJ(D)$ the set of all MAJ-gates of $D$.
For an assignment $\alpha : \IG(D) \rightarrow
\{0,1\}$ and a vertex $v \in V(D)$, we denote by $\val{v}{D}{\alpha}$ the
value of the gate $v$ after assigning all input gates according to
$\alpha$.
That is, $\val{v}{D}{\alpha}$ is recursively defined as follows: If
$v$ is an input gate, then $\val{v}{D}{\alpha}=\alpha(v)$, if $v$ is an
AND-gate (OR-gate), 
then $\val{v}{D}{\alpha}=\bigwedge_{n \in
  N_D^-(v)}\val{n}{D}{\alpha}$ ($\val{v}{D}{\alpha}=\bigvee_{n \in
  N_D^-(v)}\val{n}{D}{\alpha}$), and if $v$ is a MAJ-gate, then
$\val{v}{D}{\alpha}=(t_v\leq |\SB n \SM n \in N^-_D(v) \land
\val{n}{D}{\alpha}=1\SE|)$.
Recall that
$N_D^-(v)$ denotes the set of all incoming neighbours of $v$ in $D$.
We set
$\OUT(D,\alpha)=\val{o}{D}{\alpha}$. We say that $D$ is a $c$-\BC{} if
$c$ is an integer and $D$ contains at most $c$ MAJ-gates.
\fi

\newcommand{\MSO}{\textsf{MSO}$_1$}
\newcommand{\MSOE}{\textsf{MSOE}$_1$}


\iflong
We consider \emph{Monadic Second Order} (\MSO{}) logic on structures
representing \BC{}s as a directed acyclic graph with unary relations
to represent the types of gates. That is the structure associated with
a given \BC{} $D$ has $V(D)$ as its universe and contains the
following unary and binary relations over $V(D)$:
\begin{itemize}
\item the unary relations $\textup{NOT}$, $\textup{AND}$, $\textup{OR}$, $\textup{MAJ}$, and $\textup{IN}$
  containing all
  NOT-gates, all AND-gates, all OR-gates, all MAJ-gates, and all
  IN-gates of $D$, respectively,
\item the binary relation $E$ containing all pairs $x,y \in V(D)$
  such that $(x,y) \in E(D)$.
\end{itemize}

We assume an infinite supply of \emph{individual variables} and
\emph{set variables}, which we denote by lower case and upper case
letters, respectively. The available \emph{atomic formulas} are
$P g$ (``the value assigned to variable $g$ is contained in the unary
relation or set variable $P$''),
$E (x,y)$ 
(``vertex $x$ is the tail of an edge with head $y$''),
$x=y$ (equality), and $x\neq y$
(inequality). \emph{\MSO{} formulas} are built up
from atomic formulas using the usual Boolean connectives
$(\lnot,\land,\lor,\rightarrow,\leftrightarrow)$, quantification over
individual variables ($\forall x$, $\exists x$), and quantification over
set variables ($\forall X$, $\exists X$). 

In order to be able to deal with MAJ-gates, we will need a slightly
extended version of \MSO{} logic, which we denote by \MSOE{} and which in turn is a slightly
restricted version of the so-called distance neighbourhood logic that
was introduced by \citet{BergougnouxDJ23}. \MSOE{} extends \MSO{} with
\emph{set terms}, which are built from
set-variables, unary relations, or other set terms by
applying standard set operations such as intersection ($\cap$), union
($\cup$), subtraction ($\setminus$), or complementation (denoted by a
bar on top of the term). Note that the distance neighbourhood logic
introduces neighbourhood terms, which extend set terms by allowing an
additional neighbourhood operator on sets that is not required for our
purposes. Like distance neighbourhood logic, \MSOE{} also allows for comparisons between set terms, i.e.,
we can write $t_1=t_2$ or $t_1\subseteq t_2$ to express that the set
represented by the set term $t_1$ is equal or a subset of the set represented by
the set term $t_2$, respectively.
Most importantly for modelling MAJ-gates is that \MSOE{} allows for \emph{size measurement
  of terms}, i.e., for a set term $t$ and an integer $m$, we can write
$m\leq |t|$ to express that the set represented by $t$ contains at
least $m$ elements.

Let $\Phi$ be an \MSOE{} formula (sentence). For
a \BC{} $D$ and possibly an additional unary relation $U \subseteq
V(D)$, we write $(D,U) \models \Phi$ if $\Phi$ holds
true on the structure representing $D$ with additional unary relation $U$. The following proposition
is crucial for our algorithms based on \MSOE{} and essentially
provides an efficient algorithm for a simple optimisation variant of
the model checking problem for \MSOE{}.
\begin{proposition}[{\cite[Theorem 1.2]{BergougnouxDJ23}}]\label{pro:MSOE}
  Let $D$ be a $c$-\BC{}, $U \subseteq V(D)$, and let $\Phi(S_1,\dotsc,S_\ell)$ be an \MSOE{} formula with free (non-quantified) set
  variables $S_1,\dotsc,S_\ell$. The problem of computing sets
  $B_1,\dotsc,B_\ell \subseteq V(D)$ such that $(D,U)\models
  \Phi(B_1,\dotsc,B_\ell)$ and $\sum_{i=1}^\ell|B_i|$ is minimum
  is fixed-parameter tractable parameterized by $\rw(D)+|\Phi(S_1,\dotsc,S_\ell)|$.
\end{proposition}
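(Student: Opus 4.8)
The plan is to recognise this as (a specialisation of) the algorithmic meta-theorem of \citet{BergougnouxDJ23} and to reconstruct its proof architecture. The first step is to replace $D$ by a finite coloured directed graph on which a width decomposition can be computed efficiently: the unary relations $\textup{NOT},\textup{AND},\textup{OR},\textup{MAJ},\textup{IN}$ together with the additional relation $U$ become a constant number of vertex colours, and the direction of each arc is recorded by a constant number of edge colours, so that the rank-width of the resulting structure is within a constant factor of $\rw(D)$. I would then compute, in polynomial (hence fpt) time, a rank-decomposition of width $O(\rw(D))$ via the approximation algorithm of \citet{DBLP:journals/jct/OumS06}, and convert it---using the standard inequality bounding clique-width by an exponential in rank-width---into a clique-width expression of width bounded by a function of $\rw(D)$. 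Its parse tree $P$, each node of which is associated with a coloured induced subgraph of $D$, is the object on which the dynamic programme runs. (Routing through clique-width rather than through the weaker mim-width matters here, as \MSO{} model checking is not expected to be fixed-parameter tractable for mim-width.)

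The \MSO{} core of $\Phi$ is then handled exactly as in the Courcelle--Makowsky--Rotics framework. From $\Phi$ and the width bound one builds a finite bottom-up tree automaton that, running on $P$ enriched with an encoding of an assignment to the free set variables $S_1,\dots,S_\ell$ (one extra bit per variable per vertex, multiplying the number of colours by at most $2^{|\Phi|}$), accepts precisely when $(D,U)\models\Phi(S_1,\dots,S_\ell)$. To obtain the stated optimisation version rather than a decision procedure, I would run this automaton as a weighted dynamic programme: at every node $x$ of $P$ and every reachable automaton state $q$, store the minimum of $\sum_i |B_i\cap V_x|$ over all assignments to the vertices of the subgraph at $x$ that lead the subtree below $x$ to state $q$. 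The disjoint-union, relabelling, and edge-creation steps of the clique-width expression update these tables in time bounded by a function of the width and $|\Phi|$, the whole computation runs in fpt time, and the table at the root yields both the optimal value and, by backtracking, a witness $(B_1,\dots,B_\ell)$.

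Set terms I would treat as syntactic sugar. Membership of a vertex $v$ in a set term built from the $S_i$ and the unary relations by $\cap,\cup,\setminus$ and complementation is a fixed Boolean combination of the atoms ``$v\in S_i$'' and ``$v$ belongs to relation $R$'', so $t_1\subseteq t_2$ and $t_1=t_2$ are already \MSO{}-expressible and the automaton state (which records the $S_i$-membership pattern of each colour class) can evaluate set-term membership locally. The one genuinely new ingredient is size measurement: for each subformula $m\le|t|$ occurring in $\Phi$ I would augment the automaton state with a counter holding $\min\bigl(m,\,\#\{v\in V_x : v\in t\}\bigr)$, the number of vertices placed into $t$ so far capped at the threshold, updated additively (with capping) along the dynamic programme and compared with $m$ at the root. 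The hypothesis that $D$ is a $c$-\BC{} is not needed for the algorithm itself; in the intended applications it keeps $|\Phi|$ bounded, since the formula describing the behaviour of the circuit uses essentially one size measurement per MAJ-gate.

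The step I expect to be the main obstacle is controlling the size of this enriched state space. A counter for a subformula $m\le|t|$ naively ranges over $\{0,\dots,m\}$, so if the numeric thresholds can be as large as $|V(D)|$ the dynamic programme is only \XP{}, not \FPT{}; obtaining the claimed fixed-parameter bound requires either charging the numeric parameters to $|\Phi|$ (so that capping each counter at $m$ leaves it of size $O(|\Phi|)$ and the product over all measurements is $f(|\Phi|)$) or a sharper argument that reduces the number of count values one actually needs to distinguish---followed, in either case, by the somewhat delicate verification that the enriched transitions are sound and complete with respect to $\models$. This is precisely the technical heart of \citet{BergougnouxDJ23}, and where I would expect most of the effort to go.
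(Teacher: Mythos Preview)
The paper does not prove this proposition; it is stated as a direct citation of \cite[Theorem 1.2]{BergougnouxDJ23} and used as a black box. There is therefore no ``paper's own proof'' to compare against---the authors simply invoke the result and move on to apply it in \Cref{the:solve-circ}.

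Your sketch is a reasonable outline of how one might reprove such a meta-theorem, but it diverges from what \citet{BergougnouxDJ23} actually do. Their result is stated for mim-width (rank-width is a special case since bounded rank-width implies bounded mim-width), and the whole point of their paper is that full \MSO{} is \emph{not} expected to be tractable for mim-width, so they design the restricted distance neighbourhood logic precisely to avoid the Courcelle--Makowsky--Rotics route you propose. Your clique-width approach would still work here because the proposition is parameterized by rank-width, but it is not how the cited theorem is proved. More substantively, the concern you flag at the end is real: with your capped-counter idea the state space depends on the numeric thresholds $m$, which in this application are the MAJ-gate thresholds $t_v$ and can be as large as $|V(D)|$. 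You correctly identify this as the crux, but you do not resolve it; \citet{BergougnouxDJ23} handle cardinality constraints differently, and simply charging $m$ to $|\Phi|$ would not give the statement as written (where $|\Phi|$ is meant to be independent of the circuit size). For the purposes of this paper, however, none of this matters---you should just cite the result.
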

\fi

\ifshort
  To show our algorithmic result for Boolean circuits given below, we
  make use of an only recently developed meta-theorem~\cite[Theorem
  1.2]{BergougnouxDJ23} involving an extension of Monadic second order
  logic that allows us to easily model majority gates of Boolean circuits.
\fi
\begin{theorem}
  \label{the:solve-circ}
  $c$-\BC{}-\MLAEX{}, $c$-\BC{}-\MGAEX{}, $c$-\BC{}-\MLCEX{},
  $c$-\BC{}-\MGCEX{} are fixed-parameter tractable parameterized by
  the rank-width of the circuit.
\end{theorem}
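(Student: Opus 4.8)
The plan is to reduce each of the four problems to the optimisation version of \MSOE{} model checking from \Cref{pro:MSOE}, whose parameter is $\rw(D)+|\Phi|$. Since $c$ is a fixed constant, the \MSOE{} formulas we construct will have size bounded by a function of $c$ only, so this combined parameter collapses to a bound in $\rw(D)$ alone. Throughout, features are identified with input gates, so an example over $\feat(M)$ is the set $P\subseteq\IG(D)$ of input gates it sets to $1$; for the two local problems we pass the given example $e$ to \Cref{pro:MSOE} through the extra unary relation, i.e.\ we take $U=\SB g\in\IG(D)\SM e(g)=1\SE$.

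The technical core is an \MSOE{} subformula $\mathrm{eval}(P,T)$ asserting ``$T\subseteq V(D)$ is the set of gates that evaluate to $1$ when the input gates are set according to $P$''. It states $T\cap\mathrm{IN}=P$; that every AND-gate (OR-gate) lies in $T$ iff all (some) of its in-neighbours do, which is first-order over $E$; that every NOT-gate lies in $T$ iff its unique in-neighbour does not; and, for each MAJ-gate, the threshold condition, which is expressed using the size-measurement atoms of \MSOE{} (this is exactly where \MSOE{} is needed and plain \MSO{} would not suffice). As there are at most $c$ MAJ-gates, this costs only $O(c)$ such atoms, so $|\mathrm{eval}|$ — and hence the size of every formula below — depends on $c$ alone; the only input-dependent quantities are the $\le c$ thresholds, which enter the size-measurement atoms as numeric parameters and do not affect the combinatorial size of the formula (alternatively one first replaces every arbitrary-threshold MAJ-gate by a majority-threshold one over a padded in-neighbour list built with a constant number of constant gates, which keeps the number of MAJ-gates at most $c$ and, by a standard argument, increases $\rw(D)$ by at most $O(c)$). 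Since $D$ is acyclic, for every $P$ there is a unique $T$ with $\mathrm{eval}(P,T)$; letting $o$ denote the (\MSO{}-definable) unique sink, the formula $\mathrm{on}(P):=\exists T\,\big(\mathrm{eval}(P,T)\wedge o\in T\big)$ thus faithfully expresses $\OUT(D,P)=1$ and can be used freely as a subformula.

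With $\mathrm{on}$ in hand, the four reductions are routine. For $c$-\BC{}-\MLAEX{} we use one free set variable $S$ and require $S\subseteq\mathrm{IN}\wedge\forall P\,\big((P\subseteq\mathrm{IN}\wedge P\cap S=U\cap S)\rightarrow(\mathrm{on}(P)\leftrightarrow\mathrm{on}(U))\big)$; a size-minimum satisfying $S$ is a cardinality-minimal local abductive explanation for $e$, so the instance is a yes-instance iff its size is at most $k$. For $c$-\BC{}-\MLCEX{} we use $S\subseteq\mathrm{IN}\wedge\exists P\,\big(P\subseteq\mathrm{IN}\wedge P\cap\overline S=U\cap\overline S\wedge\neg(\mathrm{on}(P)\leftrightarrow\mathrm{on}(U))\big)$ and again minimise $|S|$ (consistently with \Cref{th:MLCEX-XP}: flipping exactly the coordinates of a minimal such $S$ already changes the classification). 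For $c$-\BC{}-\MGAEX{} for a class $c'\in\{0,1\}$ we use two free set variables $S_1,S_0$, require $S_1\subseteq\mathrm{IN}\wedge S_0\subseteq\mathrm{IN}\wedge S_1\cap S_0=\emptyset\wedge\forall P\,\big((P\subseteq\mathrm{IN}\wedge S_1\subseteq P\wedge P\cap S_0=\emptyset)\rightarrow\chi\big)$ where $\chi$ is $\mathrm{on}(P)$ if $c'=1$ and $\neg\mathrm{on}(P)$ if $c'=0$, and minimise $|S_1|+|S_0|$, which is exactly the size of the partial assignment $\tau$ that $(S_1,S_0)$ encodes; $c$-\BC{}-\MGCEX{} is identical with $\chi$ negated. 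In every case \Cref{pro:MSOE} returns an optimal choice of the free set variables, from which we read off the answer — and the explanation itself, which in addition yields the claimed tractability of the subset-minimal variants, since a cardinality-minimal explanation is in particular subset-minimal.

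The step I expect to be the real obstacle is the handling of the MAJ-gates: since \MSO{} cannot count, the argument hinges entirely on the size-measurement atoms of \MSOE{}, and one must ensure that the arbitrary, input-given thresholds neither blow up the combinatorial size of the formula (which is why the bound $c$ is essential, capping the number of counting atoms) nor force us to distinguish structurally indistinguishable gates carrying different thresholds (which is why one needs either the numeric-parameter reading of the size atoms or the preprocessing to majority thresholds). The remaining ingredients — the \MSO{}-definition of the sink, and the translations of ``agrees with $e$ on $A$'', ``differs from $e$ only on $A$'', and ``agrees with $\tau$'' — are standard, and the acyclicity of $D$ is precisely what makes $\mathrm{eval}$ determine a single valuation.
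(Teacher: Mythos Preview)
Your proposal is correct and follows essentially the same approach as the paper: both encode circuit evaluation as an \MSOE{} predicate (your $\mathrm{eval}(P,T)$ is the paper's $\textsf{CON}$/$\textsf{EXTEND}$), handle the at most $c$ MAJ-gates via one size-measurement atom each, pass the example through the extra unary relation, and express each of the four explanation problems as an \MSOE{} formula with free set variables whose minimum-cardinality satisfying assignment is computed via \Cref{pro:MSOE}. The paper organises the auxiliary subformulas slightly differently (factoring through a $\textsf{GLOBAL}$ predicate and, for \LCEX{}, flipping exactly the features in $S$ rather than existentially quantifying over all examples agreeing with $e$ outside $S$), but these are equivalent formulations and your treatment of the threshold issue is, if anything, more explicit than the paper's.
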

\iflong
  \begin{proof}
  Let $D$ be a $c$-\BC{} with output gate $o$. We will define one
  \MSOE{} formula for each of the four considered problems. That is, we will define
  the formulas $\Phi_{\textsf{LA}}(S)$, $\Phi_{\textsf{LC}}(S)$,
  $\Phi_{\textsf{GA}}(S_0,S_1)$, and $\Phi_{\textsf{GC}}(S_0,S_1)$ such that:
  \begin{itemize}
  \item $(D,T) \models \Phi_{\textsf{LA}}(S)$ if and only if $S$
    is a local abductive explanation for $e$ w.r.t. $D$. Here, $e$ is
    the given example and $T$ is a unary relation on $V(D)$ given as
    $T=\SB v \in \IG(D) \SM e(v)=1 \SE$.
  \item $(D,T) \models \Phi_{\textsf{LC}}(S)$ if and only if $S$
    is a local contrastive explanation for $e$ w.r.t. $D$. Here, $e$
    is the given example and $T$ is a unary relation on $V(D)$ given
    as $T=\SB v \in \IG(D) \SM e(v)=1 \SE$.
  \item $(D,C) \models \Phi_{\textsf{GA}}(S_0,S_1)$ if and only if the
    partial assignment $\tau : S_0\cup S_1 \rightarrow \{0,1\}$ with
    $\tau(s)=0$ if $s \in S_0$ and $\tau(s)=1$ if $s \in S_1$
    is a global abductive explanation for $c$ w.r.t. $D$.
    Here, $c \in \{0,1\}$ is the given class and $C$ is a unary
    relation on $V(D)$ that is empty if $c=0$ and otherwise contains
    only the output gate $o$.
  \item $(D,C) \models \Phi_{\textsf{GC}}(S_0,S_1)$ if and only if
    the
    partial assignment $\tau : S_0\cup S_1 \rightarrow \{0,1\}$ with
    $\tau(s)=0$ if $s \in S_0$ and $\tau(s)=1$ if $s \in S_1$
    is a global contrastive explanation for $c$ w.r.t. $D$.
    Here, $c \in \{0,1\}$ is the given class and $C$ is a unary
    relation on $V(D)$ that is empty if $c=0$ and otherwise contains
    only the output gate $o$.
  \end{itemize}
  Since, 
  each of the formulas will have constant length, the theorem
  then follows immediately from \Cref{pro:MSOE}.

  We start by defining the auxiliary formula $\textsf{CON}(A)$ such that $D
  \models \textsf{CON}(B)$ if and only if $B=\SB v \SM v \in V(D)
  \land \val{v}{D}{\alpha_B}=1\SE$, where $\alpha_B : \IG(D)
  \rightarrow \{0,1\}$ is the assignment of the input gates of $D$
  defined by setting $\alpha_B(v)=1$ if $v \in B$ and $\alpha_B(v)=0$,
  otherwise. 
  In other words, 
  $D \models \textsf{CON}(B)$ holds 
  if and only if 
  $B$ represents a consistent
  assignment of the gates, i.e., exactly those gates are in $B$ that
  are set to $1$ if the circuit is evaluated for the input assignment $\alpha_B$.
  The formula $\textsf{CON}(A)$ is defined as
  $\textsf{CON}_\MAJ(A) \land \textsf{CON}'(A)$, where:

  \[
    \textsf{CON}_\MAJ(A) = 
    \bigwedge_{g \in \MAJ(D)} 
    \begin{array}{ll}
    g \in A \leftrightarrow (\exists N. 
    |N| \geq t_g \land\\ 
     (\forall n. n \in N \leftrightarrow (E(n,g) \land n \in A )))
    \end{array}
  \]
  and
  \[ \textsf{CON'}(A) = \forall g.  \begin{array}{lll}
    &(\textup{AND}(g) &\rightarrow (g \in A \leftrightarrow \forall n. E(n,g) \rightarrow n \in A)) \land\\
    & (\textup{OR}(g) &\rightarrow (g \in A \leftrightarrow \exists n. E(n,g) \land n \in A)) \land\\
    & (\textup{NOT}(g) &\rightarrow (g \in A \leftrightarrow \exists n. E(n,g) \land n \notin A))\\
  \end{array}
  \]

  We also define the formula $\textsf{EXTEND}(A, A_0, A_1)$,  
  which ensures that $D \models \textsf{EXTEND}(B, B_0, B_1)$  
  if and only if $B$ represents a consistent gate assignment,  
  and for each $i \in \{0,1\}$, all gates in $B_i$ are set to $i$.  
  In other words, the partial assignment defined by $B_0$ and $B_1$  
  is extended to a total, consistent assignment represented by $B$.

    \[ \textsf{EXTEND}(A,A_0,A_1) = \textsf{CON}(A)
  \land (A_0\cap A=\emptyset) \land (A_1\subseteq A) 
  \]
  

  We are now ready to define the formula $\textsf{GLOBAL}(S_0, S_1, x)$,  
  which expresses that  
  $D \models \textsf{GLOBAL}(B_0, B_1, y)$ if and only if  
  $B_0, B_1 \subseteq \IG(D)$, $B_0 \cap B_1 = \emptyset$, and  
  for every consistent assignment $B$ extending $B_0$ and $B_1$,  
  we have that $o \in B$ if and only if $y$ holds.  
  In other words, every input assignment $\alpha$  
  that agrees with the partial assignment represented by $B_0$ and $B_1$  
  is classified by the circuit $D$ as $y$.

    \[
    \textsf{GLOBAL}(S_0,S_1,x) = 
    \begin{array}{ll}
      & (\forall g. g \in S_0\cup S_1 \rightarrow \textup{IN}(g))  \land S_1 \cap S_2=\emptyset \land \\
      & (\forall A. \textsf{EXTEND}(A,S_0,S_1) \rightarrow (x \leftrightarrow o \in A))
    \end{array}
  \]

  This is sufficient to define both types of global explanations.  
  Note that the only difference between them lies in how they use  
  the unary relation $C$ to determine the target classification.
  \[
    \begin{array}{ll}
      \Phi_{\textsf{GA}}(S_0,S_1) = & \textsf{GLOBAL}(S_0,S_1, C o)
    \end{array}
  \]
    \[
    \begin{array}{ll}
      \Phi_{\textsf{GC}}(S_0,S_1) = & \textsf{GLOBAL}(S_0,S_1, \neg C o)
    \end{array}
  \]

  To define local explanations, we need the additional 
  formula $\textsf{EXAMPLE}(I_0,I_1)$ such that
  $T \models \textsf{EXAMPLE}(B_0,B_1)$ if and only if
  $B_0 = \IG(D)\setminus T$ and $B_1 = T$.
  In other words, $B_0$ and $B_1$ represent the input assignment $T$  
  by explicitly partitioning the input gates into those assigned $0$ and $1$, respectively.

    \[ \textsf{EXAMPLE}(I_0,I_1) = 
    (\forall g. T(g) \leftrightarrow g \in I_1)
    \land
    (\forall g. \textup{IN}(g) \rightarrow (g \in I_1 \leftrightarrow g \notin I_0))
    \]

    We are ready to define $\Phi_{\textsf{LA}}(S)$.
    
  \[
  \Phi_{\textsf{LA}}(S) = 
    \begin{array}{ll}
      & \exists I_0. \exists I_1.\exists A.  \textsf{EXAMPLE}(I_0,I_1) \land \textsf{EXTEND}(A,I_0,I_1) \land\\
      & \textsf{GLOBAL}(I_0 \cap S, I_1 \cap S, o \in A)
      \end{array}
  \]
  Note that, due to $\textsf{EXAMPLE}$, the sets $I_0$ and $I_1$  
  encode the input assignment $T$.  
  By $\textsf{EXTEND}$, the set $A$ represents a consistent gate assignment  
  for the example $T$. Therefore, the condition $o \in A$  
  captures whether the circuit $D$ classifies $T$ as $1$.
  The formula $\textsf{GLOBAL}$ then verifies whether all examples  
  that agree with the partial assignment of input
  obtained from $T$  
  by restricting to the features in $S$  
  are classified in the same way as $T$.

    Finally, we define $\Phi_{\textsf{LC}}(S)$.
  \[
  \Phi_{\textsf{LC}}(S) = 
    \begin{array}{ll}
      & \exists I_0. \exists I_1.\exists A.  \textsf{EXAMPLE}(I_0,I_1) \land \textsf{EXTEND}(A,I_0,I_1) \land\\
      & \exists A'. \textsf{EXTEND}(A',(I_0 \setminus S) \cup (I_1 \cap S), (I_1    \setminus S) \cup (I_0 \cap S)) \land \\
      & (o \in A \leftrightarrow o \notin A')
      \end{array}
  \]
  As before, the sets $I_0$ and $I_1$ encode the input assignment $T$,  
  and the set $A$ represents a consistent gate assignment w.r.t. $I_0$ and $I_1$.
  Additionally, the set $A'$ corresponds to a consistent gate assignment  
  for the example obtained from $T$ by flipping the values assigned  
  to the features in $S \cap \IG(D)$.  
  The final condition checks whether these two examples  
  (i.e., $T$ and its $S$-flipped variant)  
  are classified differently by the circuit.

  Note that $\Phi_{\textsf{LC}}(S)$ is defined without using $\textsf{GLOBAL}$,  
  which highlights a structural distinction between local contrastive explanations  
  and the other types. Indeed, this difference manifests itself throughout the paper,  
  particularly when we observe distinct complexity classes  
  emerging for the corresponding decision problems.
\end{proof}
\fi

\subsection{DTs and their Ensembles}\label{ssec:algDT}

Here, we present our algorithms for \DT{}s and their ensembles.
\iflong
We
start with a simple translation from \DT{}s to \BC{}s that allows us to
employ \Cref{the:solve-circ} for \DT{}s.
\begin{lemma}\label{lem:dt-trans-circ}
  There is a polynomial-time algorithm that given a \DT{} $\TTT=(T,\lambda)$ and a
  class $c$ produces a circuit $\CIRC(\TTT,c)$ such that:
  \begin{enumerate}[(1)]
  \item for every example $e$, it holds that $\TTT(e)=c$ if and only if
    (the assignment represented by) $e$ satisfies $\CIRC(\TTT,c)$ and
  \item $\rw(\CIRC(\TTT,c)) \leq 3\cdot 2^{|\MNL(\TTT)|}$. 
  \end{enumerate}
\end{lemma}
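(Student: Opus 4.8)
The plan is to encode $\TTT$ by a shallow, essentially two-layer circuit obtained from the root-to-leaf paths to whichever of $L_0(\TTT)$, $L_1(\TTT)$ is the smaller class, so that the number of ``middle'' gates is exactly $m:=\MNL(\TTT)$, and then to read off the rank-width bound from \Cref{lem:ranktree}. The starting observation is that $\TTT(e)=c$ holds iff $e$ reaches a $c$-leaf, that is, iff $e$ agrees with $\alpha_{\TTT}^l$ for some $l\in L_c(\TTT)$; dually, $\TTT(e)=c$ iff $e$ agrees with $\alpha_{\TTT}^l$ for \emph{no} $l\in L_{1-c}(\TTT)$. Hence, if $|L_c(\TTT)|\le|L_{1-c}(\TTT)|$ I realise the DNF $\bigvee_{l\in L_c(\TTT)}\bigwedge_{(f\mapsto z)\in\alpha_{\TTT}^l}\ell_{f,z}$ (with $\ell_{f,1}=g_f$, $\ell_{f,0}=\lnot g_f$, where $g_f$ is the IN-gate of feature $f$), and otherwise I realise its De Morgan dual $\bigwedge_{l\in L_{1-c}(\TTT)}\bigvee_{(f\mapsto z)\in\alpha_{\TTT}^l}\ell_{f,1-z}$. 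In both cases the number of clause/term gates is $\min\{|L_0(\TTT)|,|L_1(\TTT)|\}=m$.

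Concretely, in the DNF case $\CIRC(\TTT,c)$ consists of: one IN-gate $g_f$ for each feature $f$ that occurs on some root-to-$c$-leaf path; one NOT-gate per negated literal occurrence; one AND-gate $a_l$ for each $l\in L_c(\TTT)$, fed by the (possibly negated) IN-gates of the literals of $\alpha_{\TTT}^l$; and one output OR-gate fed by all $a_l$. The CNF case is symmetric, with OR-gates $c_l$ and an output AND-gate. (If $\TTT$ is a single leaf, or if one of $L_0(\TTT),L_1(\TTT)$ is empty, the classification function is constant and I output a constant circuit, e.g.\ $g\land\lnot g$ or $g\lor\lnot g$ for an arbitrary feature $g$, which settles this case trivially.) This is clearly computable in polynomial time, which handles the efficiency claim. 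For item~(1), whether a given example reaches a given leaf depends only on the features queried on that leaf's path, so $\TTT(e')=c$ is in fact determined by the restriction of $e'$ to the features occurring on root-to-$c$-leaf paths; given this, item~(1) is immediate from the two equivalences above, and it also shows that letting $\IG(\CIRC(\TTT,c))$ be just those relevant features (rather than all of $\feat(\TTT)$) is harmless.

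For item~(2), let $X$ be the set of the $m$ clause/term gates (the $a_l$'s in the DNF case, the $c_l$'s in the CNF case), so $|X|=m$. After deleting $X$, the only remaining vertices are the IN-gates, the NOT-gates, and the output gate: each NOT-gate is now left with its unique IN-gate as its sole neighbour, and the output gate becomes isolated (all its neighbours lay in $X$). So $\CIRC(\TTT,c)\setminus X$ is a disjoint union of stars and isolated vertices, hence has treewidth at most $1$. By the first part of \Cref{lem:ranktree}, $\CIRC(\TTT,c)$ has treewidth at most $|X|+1=m+1$, and then by the second part its rank-width is at most $3\cdot 2^{(m+1)-1}=3\cdot 2^{m}=3\cdot 2^{|\MNL(\TTT)|}$, as required; in the degenerate constant case the circuit is a triangle, of rank-width $1\le 3=3\cdot 2^{0}$.

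The one point needing care — everything else is routine — is obtaining the constant $3\cdot 2^{m}$ rather than the weaker $3\cdot 2^{m+1}$. This is precisely why (i) the circuit is built from the \emph{smaller} of the two leaf-classes, so that $|X|=m$ and not $\max\{|L_0(\TTT)|,|L_1(\TTT)|\}$; (ii) only the $m$ clause/term gates go into $X$ and one must verify that the residual graph is an actual forest — in particular I do \emph{not} pad the circuit with extra gates to make it mention features irrelevant to class $c$, since such gadgets would create triangles and cost an extra $+1$ in treewidth; and (iii) \Cref{lem:ranktree} is invoked through its $3\cdot 2^{t-1}$ bound with $t=m+1$.
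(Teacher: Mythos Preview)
Your proof is correct and follows essentially the same approach as the paper: build a two-layer circuit from the root-to-leaf paths of the smaller leaf class, observe that deleting the $m=\MNL(\TTT)$ middle gates leaves a forest, and apply \Cref{lem:ranktree} with $t=m+1$. The only cosmetic differences are that the paper builds a DNF over the smaller class and, if necessary, appends a single extra gate to flip the output (rather than switching to a CNF), and that it uses one NOT-gate per feature (shared across terms) rather than one per negated-literal occurrence; neither choice affects the treewidth argument or the final bound.
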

\iflong\begin{proof}
  Let $\TTT=(T,\lambda)$ be the given \DT{} and suppose that
  $\MNL(\TTT)$ is equal to the number of negative leaves;
  the construction of the circuit $\CIRC(\TTT,c)$ is analogous if instead
  $\MNL(\TTT)$ is equal to the number of positive leaves.
  We first construct the circuit $D$ such that $D$ is satisfied by $e$
  if and only if $\TTT(e)=0$. $D$ contains one input gate $g_f$
  and one NOT-gate $\overline{g_f}$, whose only incoming arc is from $g_f$,
  for every feature in $\feat(\TTT)$. Moreover, for every $l \in
  L_0(\TTT)$, $D$ contains an AND-gate $g_l$, whose incoming
  arcs correspond to the partial assignment $\alpha_{\TTT}^l$,
  i.e., for every feature $f$ assigned by $\alpha_{\TTT}^l$,
  $g_l$ has an incoming arc from $g_f$ if
  $\alpha_{\TTT}^l(f)=1$ and an incoming arc from
  $\overline{g_f}$ otherwise. Finally, $D$ contains the OR-gate $o$,
  which also serves as the output gate of $D$, that has one incoming
  arc from $g_l$ for every $l \in L_0$. This completes the
  construction of $D$ and it is straightforward to show that $D$ is
  satisfied by an example $e$ if and only if $\TTT(e)=0$.
  Moreover, using \Cref{lem:ranktree}, we obtain that $D$ has treewidth at most $|\MNL(\TTT)|+1$ because
  the graph obtained from $D$ after removing all gates $g_l$ for every
  $l \in L_0(\TTT)$ is a tree and therefore has treewidth at
  most $1$. Therefore, using \Cref{lem:ranktree}, we obtain that $D$ has
  rank-width at most $3\cdot 2^{|\MNL(\TTT)|}$. Finally, $\CIRC(\TTT,c)$ can now be obtained from
  $D$ as follows. If $c=0$, then $\CIRC(\TTT,c)=D$. Otherwise,
  $\CIRC(\TTT,c)$ is obtained from $D$ after adding one OR-gate
  that also serves as the new output gate of $\CIRC(\TTT,c)$
  and that has only one incoming arc from $o$.
\end{proof}\fi

We now provide a translation from \RF{}s to $1$-\BC{}s that will 
allow us to obtain tractability results for \RF{}s.

\begin{lemma}\label{lem:rf-trans-circ}
  There is a polynomial-time algorithm that given a \RF{} $\mathcal{F}$ and a
  class $c$ produces a circuit $\CIRC(\mathcal{F},c)$ such that:
  \begin{enumerate}[(1)]
  \item for every example $e$, it holds that $\mathcal{F}(e)=c$ if and only if
    (the assignment represented by) $e$ satisfies $\CIRC(\mathcal{F},c)$ and
  \item $\rw(\CIRC(\mathcal{F},c)) \leq 3\cdot 2^{\sum_{\TTT \in \mathcal{F}}|\MNL(\TTT)|}$
  \end{enumerate}
\end{lemma}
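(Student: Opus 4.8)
The plan is to reuse the construction from the proof of \Cref{lem:dt-trans-circ}, run in parallel on every tree of $\mathcal{F}$, and to combine the resulting per-tree circuits by a single majority gate that implements the majority vote defining the ensemble. Concretely, I would use one shared set of input gates $\{g_f : f \in \feat(\mathcal{F})\}$ together with their negations $\{\overline{g_f}\}$, and for each tree $\TTT \in \mathcal{F}$ build a sub-circuit $D_\TTT$ exactly as in \Cref{lem:dt-trans-circ}: fix $b_\TTT \in \{0,1\}$ with $|L_{b_\TTT}(\TTT)| = |\MNL(\TTT)|$, add one AND-gate per leaf in $L_{b_\TTT}(\TTT)$ with incoming arcs determined by the corresponding root-to-leaf partial assignment (pointing at the appropriate $g_f$ or $\overline{g_f}$), collect these AND-gates by an OR-gate, and, if needed, put one extra NOT- or OR-gate on top so that the output gate $o_\TTT$ of $D_\TTT$ evaluates to $1$ on an example $e$ precisely when $\TTT(e)=c$. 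Finally I would add a single MAJ-gate $o$ with threshold $t_o = \lfloor |\mathcal{F}|/2 \rfloor + 1$ and incoming arcs from $o_\TTT$ for every $\TTT \in \mathcal{F}$, and declare $o$ the output gate of $\CIRC(\mathcal{F},c)$. This is clearly computable in polynomial time, and it uses exactly one MAJ-gate, so $\CIRC(\mathcal{F},c)$ is a $1$-\BC{}.

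Property~(1) then follows directly: for every example $e$ the gate $o_\TTT$ has value $1$ under $e$ iff $\TTT(e)=c$, hence $o$ has value $1$ iff at least $\lfloor |\mathcal{F}|/2 \rfloor + 1$ of the trees of $\mathcal{F}$ classify $e$ as $c$, which by the definition of how an ensemble classifies an example is exactly the statement $\mathcal{F}(e)=c$.

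For property~(2) I would \emph{not} try to combine the per-tree rank-width bounds of \Cref{lem:dt-trans-circ}, since rank-width is not well behaved under the kind of gluing performed here; instead I would repeat the treewidth argument used in that proof, but applied to the whole circuit at once. Let $X$ be the set of all leaf AND-gates over all trees, so $|X| = \sum_{\TTT \in \mathcal{F}} |\MNL(\TTT)|$. Deleting $X$ from $\CIRC(\mathcal{F},c)$ leaves only the shared pairs $g_f, \overline{g_f}$ (a matching) and, for each tree, the gate $o_\TTT$ (now without incoming arcs) together with its possible top gate, each attached to the common gate $o$ by a path of length $1$ or $2$; the latter is a subdivided star and hence a tree, so $\CIRC(\mathcal{F},c) - X$ is a forest of treewidth at most $1$. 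Writing $t$ for the treewidth of $\CIRC(\mathcal{F},c)$, the first part of \Cref{lem:ranktree} gives $t \le |X| + 1 = \sum_{\TTT \in \mathcal{F}} |\MNL(\TTT)| + 1$, and its second part then gives $\rw(\CIRC(\mathcal{F},c)) \le 3 \cdot 2^{\,t-1} \le 3 \cdot 2^{\sum_{\TTT \in \mathcal{F}} |\MNL(\TTT)|}$, which is~(2).

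The main point requiring care is exactly this last step: one has to resist treating \Cref{lem:dt-trans-circ} as a black box and must re-derive the treewidth bound for the combined circuit, checking in particular that sharing the input gates across all trees creates no cycles once the $X$ AND-gates are removed (every cycle through a shared $g_f$ passes through such an AND-gate). A minor additional point is fixing the MAJ-gate threshold correctly and noting that ties in the ensemble vote are already handled by the definition of ensemble classification, so no extra tie-breaking gate is needed.
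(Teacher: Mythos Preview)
Your proposal is correct and is essentially the paper's own proof: take the (input-gate-sharing) union of the per-tree circuits from \Cref{lem:dt-trans-circ}, top them with a single MAJ-gate of threshold $\lfloor|\mathcal{F}|/2\rfloor+1$, and for~(2) delete all the leaf AND-gates (of total number $\sum_{\TTT\in\mathcal{F}}|\MNL(\TTT)|$) to obtain a forest, then apply \Cref{lem:ranktree}. Your remark that one must re-derive the treewidth bound on the combined circuit rather than black-box the per-tree rank-width bound is exactly the point; the paper does the same.
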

\iflong\begin{proof}
  We obtain the circuit $\CIRC(\mathcal{F},c)$ from the (not
  necessarily disjoint) union of the circuits $\CIRC(\TTT,c)$
  for every $\TTT \in \mathcal{F}$, which we introduced in
  \Cref{lem:dt-trans-circ},  after adding a new MAJ-gate
  with threshold $\lfloor|\mathcal{F}|/2\rfloor+1$,
  which also serves as the output gate of $\CIRC(\mathcal{F},c)$, that
  has one incoming arc from the output gate of $\CIRC(\TTT,c)$
  for every $\TTT \in \mathcal{F}$. Clearly,
  $\CIRC(\mathcal{F},c)$ satisfies (1). Moreover, to see that it also satisfies
  (2), recall that every circuit $\CIRC(\TTT,c)$ has only
  $\MNL(\TTT)$ gates apart from the input gates, the NOT-gates
  connected to the input gates, and the output gate. Therefore, after
  removing $\MNL(\TTT)$ gates from every circuit
  $\CIRC(\TTT,c)$ inside $\CIRC(\mathcal{F},c)$, the remaining
  circuit is a tree, which together with \Cref{lem:ranktree}
  implies (2).
\end{proof}\fi
The following result now follows immediately from
\Cref{the:solve-circ} together with \Cref{lem:rf-trans-circ}.
\fi
\ifshort
  With the help of our meta-theorem (\Cref{the:solve-circ}) together with natural translations of \DT{}s and
  \RF{}s into \BC{}s and $1$-\BC{}s, respectively, we obtain the
  following two theorems, showing that all problems are
  fixed-parameter tractable parameterized by \enssize{} plus
  \mnlsize{}.
\fi

\begin{corollary}\label{cor:RF-FPT-MNL}
  Let $\PP \in \{\LAEX, \LCEX, \GAEX, \GCEX\}$. 
  \RF{}-\MPP{}$(\enssize{}+\mnlsize{})$ and therefore also
  \RF{}-\MPP{}$(\enssize{}+\sizeelem{})$ is
  in \FPT{}. 
\end{corollary}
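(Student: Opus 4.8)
The plan is to chain together the meta-theorem for Boolean circuits (\Cref{the:solve-circ}) with a suitable translation of \RF{}s into $1$-\BC{}s, analogous to the translation of \DT{}s into \BC{}s via \Cref{lem:dt-trans-circ} and of \RF{}s via \Cref{lem:rf-trans-circ}. Concretely, given an \RF{} $\mathcal{F}$ with ensemble elements $\TTT_1,\dotsc,\TTT_m$, I would first build, for each $\TTT_i$, the Boolean circuit $\CIRC(\TTT_i,c)$ from \Cref{lem:dt-trans-circ}; each such circuit has rank-width at most $3\cdot 2^{|\MNL(\TTT_i)|}$ and, crucially, has only $|\MNL(\TTT_i)|$ gates besides its input gates, the NOT-gates attached to inputs, and its output gate. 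Then I would take their (not necessarily disjoint) union over a common set of input gates and add a single MAJ-gate with threshold $\lfloor m/2\rfloor + 1$ fed by the $m$ output gates, yielding $\CIRC(\mathcal{F},c)$ — this is exactly \Cref{lem:rf-trans-circ}. The resulting circuit is a $1$-\BC{} (it uses exactly one majority gate), it is satisfied by an example $e$ iff $\mathcal{F}(e)=c$, and its rank-width is bounded by $3\cdot 2^{\sum_{i}|\MNL(\TTT_i)|}$.

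The key observation driving the parameter bound is that $\sum_{i=1}^m |\MNL(\TTT_i)| \le m \cdot \mnlsize{} = \enssize{} \cdot \mnlsize{}$, since $\mnlsize{}$ is by definition the largest value of $\MNL$ over ensemble elements and $\enssize{} = m$. Hence $\rw(\CIRC(\mathcal{F},c))$ is bounded by a computable function of $\enssize{}+\mnlsize{}$. For each $\PP \in \{\LAEX,\LCEX,\GAEX,\GCEX\}$, the problem $\MM{}$-\MPP{} translates directly into the corresponding $c$-\BC{} problem $c$-\BC{}-\MPP{} (with $c=1$), because parts (1) and (2) of \Cref{lem:rf-trans-circ} guarantee that the circuit computes the same Boolean function as $\mathcal{F}$ and has bounded rank-width; the example $e$, the target class $c$, and the size bound $k$ carry over unchanged, and the features of $\mathcal{F}$ correspond exactly to the input gates of the circuit. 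Applying \Cref{the:solve-circ}, which gives fixed-parameter tractability of $c$-\BC{}-\MPP{} parameterized by rank-width for constant $c$ (here $c=1$), we obtain an fpt algorithm for \RF{}-\MPP{} parameterized by $\enssize{}+\mnlsize{}$.

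Finally, for the second claim in the statement, I would invoke the monotonicity relation between the parameters: since $\MNL(\TTT) = \min\{|L_0(\TTT)|,|L_1(\TTT)|\} \le |L(\TTT)| = \som{\TTT} \le \sizeelem{}$ for every ensemble element $\TTT$, a bound on $\sizeelem{}$ gives a bound on $\mnlsize{}$, so \RF{}-\MPP{}$(\enssize{}+\sizeelem{})$ reduces to \RF{}-\MPP{}$(\enssize{}+\mnlsize{})$ and is therefore also in \FPT{}; this is the ``set $A$ of parameters $\Rightarrow$ set $B\supseteq A$'' propagation noted in \Cref{sec:overview} once one notes $\mnlsize{}$ is dominated by $\sizeelem{}$. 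I do not expect any step here to be a genuine obstacle: the translations in \Cref{lem:dt-trans-circ,lem:rf-trans-circ} and the meta-theorem do all the heavy lifting. The only point requiring a small amount of care is verifying that each of the four explanation problems on \RF{}s corresponds, under the circuit translation, to the same explanation problem on the circuit — in particular that the set of "features" one is allowed to put into an explanation matches the set of input gates, and that flipping feature values of $e$ (for \MLCEX{}) corresponds to flipping the corresponding input-gate assignments — but this is immediate from the construction and the definitions of the explanation problems.
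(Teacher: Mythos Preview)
Your proposal is correct and follows exactly the paper's approach: the corollary is stated as an immediate consequence of \Cref{the:solve-circ} together with \Cref{lem:rf-trans-circ}, and your argument spells out precisely this chain (translation to a $1$-\BC{} with rank-width bounded by a function of $\enssize{}\cdot\mnlsize{}$, then invoking the meta-theorem), with the $\sizeelem{}$ claim following from $\mnlsize{}\le\sizeelem{}$.
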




\iflong
We now give our polynomial-time algorithms for \DT{}s. We start with
the following known result for contrastive explanations. 
\begin{theorem}[{~\cite[Lemma 14]{BarceloM0S20}}]\label{th:DT-MLCEX}
  There is a polynomial-time algorithm that given a \DT{} $\TTT$ and
  an example $e$ outputs a (cardinality-wise) minimum local
  contrastive explanation for $e$ w.r.t. $\TTT$ or no if such an
  explanation does not exist. Therefore, 
  \DT{}-\MLCEX{} can be solved in polynomial-time. 
\end{theorem}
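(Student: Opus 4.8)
The plan is to observe that for a decision tree the structure of minimum local contrastive explanations is entirely determined by the leaves labelled oppositely to $\TTT(e)$. Write $c=\TTT(e)$. For a leaf $l$ whose root-to-$l$ path is consistent (i.e.\ no feature is forced to both values; inconsistent leaves are unreachable and will simply be skipped), let
\[
A_l \;=\; \SB\, f\in\feat(\TTT) \SM \alpha_{\TTT}^l(f)\ \text{is defined and}\ \alpha_{\TTT}^l(f)\neq e(f) \,\SE,
\]
the set of path-features on which $e$ disagrees with the path to $l$. First I would establish two facts. \emph{(a)} For every $l\in L_{1-c}(\TTT)$ with consistent path, $A_l$ is a local contrastive explanation for $e$ w.r.t.\ $\TTT$: define $e'$ to agree with $\alpha_{\TTT}^l$ wherever the latter is defined and with $e$ elsewhere; then $e'$ follows the path to $l$, so $\TTT(e')=1-c\neq\TTT(e)$, and $e'$ differs from $e$ only on $A_l$. \emph{(b)} Conversely, every local contrastive explanation $A$ contains $A_l$ for some such $l$: if $e'$ witnesses $A$, let $l$ be the leaf $e'$ reaches; since $\TTT(e')\neq c$ this leaf lies in $L_{1-c}(\TTT)$ and is reachable, and since $e'$ agrees with $\alpha_{\TTT}^l$ on the whole path while differing from $e$ only on $A$, every feature in $A_l$ must lie in $A$.

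Facts (a) and (b) together say that the minimum cardinality of a local contrastive explanation for $e$ equals $\min\SB |A_l| \SM l\in L_{1-c}(\TTT),\ \text{path to $l$ consistent} \SE$, and that such an explanation exists if and only if this set of leaves is non-empty (equivalently, $\TTT$ is not the constant function~$c$). The algorithm follows at once: perform one root-to-leaf traversal to compute $c=\TTT(e)$; then run a single depth-first traversal of $\TTT$ that maintains the partial assignment along the current path together with a counter of the features on which it disagrees with $e$, discarding a branch as soon as the partial assignment becomes inconsistent; at each leaf $l$ with $\lambda(l)=1-c$ record the current candidate set $A_l$ and its size; finally output an $A_l$ of minimum size, or ``no'' if no such leaf was encountered. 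This runs in time linear in the number of nodes of $\TTT$, hence polynomial, and it solves the search (optimisation) version; the decision problem \DT{}-\MLCEX{} is then settled by comparing the computed optimum with the bound $k$.

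I do not anticipate a real obstacle. The only points requiring a little care are the treatment of leaves that are not reachable by any example --- which matters because the paper's \DT{}s are not assumed read-once, so a feature may be tested twice on a path --- and the precise argument in (b) that cardinality-minimality forces a minimum explanation to coincide with one of the sets $A_l$ rather than with some strictly smaller set; both are immediate from the definitions above.
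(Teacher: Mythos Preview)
Your argument is correct. The paper does not give its own proof of this theorem; it is stated as a citation of \cite[Lemma~14]{BarceloM0S20} and used as a black box, so there is nothing to compare against. Your approach---reducing the optimum to $\min_{l\in L_{1-c}(\TTT)}|A_l|$ via the two observations (a) each $A_l$ is a contrastive explanation and (b) every contrastive explanation contains some $A_l$---is exactly the standard argument for this fact and is what one would expect the cited lemma to do as well. The care you take with leaves whose root-to-leaf path is inconsistent is appropriate: the paper's \DT{}s are not assumed read-once, and indeed in the proof of \Cref{lem:dt-sm-testsol} the paper explicitly remarks that contradictory paths can be eliminated by a polynomial-time simplification, which is consistent with your ``skip unreachable leaves'' treatment.
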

\fi

The following auxiliary lemma provides polynomial-time algorithms for
testing whether a given subset of features $A$ or partial
example $e'$ is a local abductive, global abductive, or global
contrastive explanation for a given example $e$ or class $c$ w.r.t. a
given \DT{} $\TTT$.
\begin{lemma}\label{lem:dt-sm-testsol}
  Let $\TTT$ be a \DT{}, let $e$ be an example and let $c$ be a class.
  There are polynomial-time algorithms for the following problems:
  \begin{enumerate}[(1)]
  \item Decide whether a given subset $A \subseteq \feat(\TTT)$ of
    features is a local abductive explanation for $e$ w.r.t. $\TTT$.
  \item Decide whether a given partial example $e'$ is a global
    abductive/contrastive explanation for $c$ w.r.t. $\TTT$.
  \end{enumerate}
\end{lemma}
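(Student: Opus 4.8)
The plan is to handle all three decision problems by a single traversal of the leaves of $\TTT$, relying on one structural observation: for any partial assignment $\rho$ of the features, the examples consistent with $\rho$ are classified by $\TTT$ exactly according to the labels of those leaves $l$ whose root-to-leaf partial assignment $\alpha_{\TTT}^{l}$ does not disagree with $\rho$ on any feature that both of them set --- call such a leaf \emph{$\rho$-compatible} --- and, moreover, \emph{every} $\rho$-compatible leaf is reached by at least one example consistent with $\rho$. Given this, each of the three questions becomes a label check over the $\bigoh(\som{\TTT})$ leaves, with $\rho$-compatibility tested by scanning the corresponding root-to-leaf path once; everything runs in time polynomial in the size of $\TTT$.

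For \textbf{part (1)}, I would first compute $b := \TTT(e)$ in linear time by following the unique root-to-leaf path selected by $e$, and then take $\rho$ to be the restriction of $e$ to $A$. The observation yields: $A$ is a local abductive explanation for $e$ w.r.t.\ $\TTT$ if and only if every $\rho$-compatible leaf has label $b$. For the forward direction, a $\rho$-compatible leaf $l$ is reached by the example that follows $\alpha_{\TTT}^{l}$ on the features labelling inner nodes on the path to $l$ and copies $e$ on all remaining features; this example agrees with $e$ on $A$, so $A$ being an explanation forces $\lambda(l)=b$. For the backward direction, any example agreeing with $e$ on $A$ reaches a leaf whose path assignment is consistent with it, hence $\rho$-compatible, hence labelled $b$, so it is classified as $b=\TTT(e)$.

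For \textbf{part (2)}, the argument is identical, taking $\rho := e'$ (the candidate partial example). Then $e'$ is a global abductive explanation for $c$ w.r.t.\ $\TTT$ if and only if every $e'$-compatible leaf has label $c$, and $e'$ is a global contrastive explanation for $c$ w.r.t.\ $\TTT$ if and only if every $e'$-compatible leaf has label $1-c$. Note there is always at least one $e'$-compatible leaf, namely the one reached by an arbitrary completion of $e'$, so degenerate emptiness never arises, and both conditions are again verified by one scan over the leaves and their paths.

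The only step that needs genuine care --- and it is the ``hard part'' only in the mild sense of demanding an explicit construction --- is the second half of the structural observation: that a $\rho$-compatible leaf $l$ is actually reachable by an example consistent with $\rho$. One sets the features labelling inner nodes on the root-to-$l$ path according to $\alpha_{\TTT}^{l}$, the remaining features assigned by $\rho$ according to $\rho$, and all other features arbitrarily; this assignment is well defined and consistent precisely because $\rho$-compatibility forbids disagreement on the shared features and because a feature can occur on a single root-to-leaf path with only one value. Everything else is routine bookkeeping.
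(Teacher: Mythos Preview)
Your proof is correct and essentially the same as the paper's: your ``$\rho$-compatible leaves'' are exactly the leaves that survive in the paper's pruned tree $\TTT_{|\rho}$ (obtained by deleting, at each inner node whose feature is fixed by $\rho$, the subtree through the wrong child), so both arguments reduce each question to a label scan over those leaves. You spell out the reachability direction more explicitly than the paper does, and you implicitly use the same no-contradictory-paths assumption the paper makes explicit, but the underlying idea is identical.
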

\iflong\begin{proof}\fi
\ifshort\begin{proof}[Proof Sketch]\fi  
  Let $\TTT$ be a \DT{}, let $e$ be an example and let $c$ be a class.
  Note that we assume here that $\TTT$ does not have any contradictory
  path\ifshort.\fi\iflong, i.e., a root-to-leaf path that contains that assigns any
  feature more than once. Because if this was not the case, we could
  easily simplify $\TTT$ in polynomial-time. \fi

  We start by showing (1).
  A subset $A \subseteq \feat(\TTT)$ of
  features is a
  local abductive explanation for $e$ w.r.t. $\TTT$ if and only if the
  \DT{} $\TTT_{|e_{|A}}$ does only contain
  $\TTT(e)$-leaves, which can clearly be decided in polynomial-time.
  Here, $e_{|A}$ is the partial example equal to the
  restriction of $e$ to $A$. Moreover, $\TTT_{|e'}$ for a partial
  example $e'$ is the \DT{} obtained from $\TTT$ after removing every
  $1-e'(f)$-child from every node $t$ of $\TTT$ assigned to a feature
  $f$ for which $e'$ is defined. \ifshort The proof for (2) is similar.\fi
  \iflong
    
  Similarly, for showing (2), observe that partial example (assignment) $\tau : F \rightarrow
  \{0,1\}$ is a global abductive explanation for $c$ w.r.t. $\TTT$ if and only if the \DT{}
  $\TTT_{|\tau}$ does only contain
  $c$-leaves, which can clearly be decided in polynomial-time.

  Finally, note that a partial example $\tau : F \rightarrow
  \{0,1\}$ is a global contrastive explanation for $c$ w.r.t. $\TTT$ if and only if the \DT{}
  $\TTT_{|\tau}$ does not contain any
  $c$-leaf, which can clearly be decided in polynomial-time.\fi
\end{proof}

Using dedicated algorithms for the inclusion-wise minimal variants of
\LAEX{}, \GAEX{}, \iflong and \fi \GCEX{} \iflong together with
  \Cref{th:DT-MLCEX}\fi\ifshort and using the
  polynomial-time algorithm for the
  cardinality-wise minimal version of \LCEX{} given in~\cite[Lemma
  14]{BarceloM0S20}\fi, we obtain the following result.
\begin{theorem}
  \label{th:DT-SLGA-P}
    Let $\PP \in \{\LAEX, \LCEX, \GAEX, \GCEX\}$. 
  \DT{}-\SPP{} \ifshort and \DT{}-\MLCEX{} \fi can be solved in
  polynomial-time. 
\end{theorem}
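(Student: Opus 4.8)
The plan for the contrastive case $\DT$-$\SMLCEX{}$ is to invoke the known polynomial-time algorithm of \cite[Lemma~14]{BarceloM0S20} (\Cref{th:DT-MLCEX}), which computes a cardinality-minimum local contrastive explanation of $e$ w.r.t.\ $\TTT$ or correctly reports that none exists; since a cardinality-minimum local contrastive explanation is in particular subset-minimal, this already settles the case. Alternatively, one can solve $\DT$-$\SMLCEX{}$ directly by the greedy scheme below, using the observation that whether a set $A$ is a local contrastive explanation of $e$ w.r.t.\ $\TTT$ is testable in polynomial time: it suffices to check whether some leaf $l$ with $\lambda(l)\neq\TTT(e)$ has $\alpha_{\TTT}^l$ agreeing with $e$ on every feature outside $A$.

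For $\PP\in\{\LAEX,\GAEX,\GCEX\}$ I would use one uniform greedy-removal algorithm, exploiting that each of these three explanation notions is \emph{upward closed}: if $A\subseteq\feat(\TTT)$ is a local abductive explanation of $e$ then so is every superset of $A$; and if a partial assignment $\tau$ is a global abductive (resp.\ global contrastive) explanation for a class $c$ then so is every partial assignment extending $\tau$. After a trivial preprocessing step that removes contradictory root-to-leaf paths from $\TTT$, the algorithm first produces an initial explanation: for $\LAEX{}$ take $\feat(\TTT)$ itself, which is always a local abductive explanation of $e$ because the only example agreeing with $e$ on all of $\feat(\TTT)$ is $e$; for $\GAEX{}$ (resp.\ $\GCEX{}$) with class $c$, pick any leaf $l$ with $\lambda(l)=c$ (resp.\ $\lambda(l)\neq c$) and take $\tau:=\alpha_{\TTT}^l$, which works since every example agreeing with $\alpha_{\TTT}^l$ is routed to $l$; if $\TTT$ has no such leaf, report that no explanation exists, which is correct because then $\TTT$ is constant with the wrong value. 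The algorithm then processes the (assigned) features one at a time, removing the current feature from the explanation whenever the resulting smaller set or partial assignment is still an explanation of the required type---this test is polynomial-time by \Cref{lem:dt-sm-testsol}---and finally outputs the resulting explanation.

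Correctness of the greedy step follows from upward closure by a standard exchange argument: the final explanation $S$ is valid, and if some proper subset $S'\subsetneq S$ were also valid then, when the algorithm processed a feature $f\in S\setminus S'$, the then-current explanation $S_f$ satisfied $S'\subseteq S_f\setminus\{f\}$, so by upward closure $S_f\setminus\{f\}$ was valid and $f$ would have been removed, contradicting $f\in S$. The running time is $\bigoh(|\feat(\TTT)|)$ invocations of the polynomial-time tests of \Cref{lem:dt-sm-testsol}, hence polynomial. The proof is routine once \Cref{lem:dt-sm-testsol} is available; I expect the only real care to be needed in (i) pinning down a polynomial-time-computable initial explanation for each type together with the exact condition under which no explanation exists, and (ii) checking the upward-closure property for all three of $\LAEX{}$, $\GAEX{}$, $\GCEX{}$, which is what guarantees that greedy removal lands on a subset-minimal solution---handling degenerate or constant decision trees being the most error-prone point.
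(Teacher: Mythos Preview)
Your proposal is correct and follows essentially the same approach as the paper: handle \SMLCEX{} via \Cref{th:DT-MLCEX}, and for the remaining three problems start from the obvious initial explanation ($\feat(\TTT)$ for \LAEX{}, $\alpha_\TTT^l$ for a suitably labelled leaf $l$ for \GAEX{}/\GCEX{}) and greedily remove features using the polynomial-time tests of \Cref{lem:dt-sm-testsol}. Your write-up is in fact slightly more explicit than the paper's, spelling out the upward-closure property and the exchange argument that justify greedy removal.
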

\iflong\begin{proof}\fi
\ifshort\begin{proof}[Proof Sketch]\fi  
  Note that the statement of the theorem for \DT{}-\SMLCEX{} follows
  immediately from\iflong~\Cref{th:DT-MLCEX}\fi\ifshort~\cite[Lemma 14]{BarceloM0S20}\fi. Therefore, it suffices to show
  the statement of the theorem for the remaining 3 problems.
  \iflong
  
  Let $(\TTT,e)$ be an instance of  \DT{}-\SMLAEX{}. We start by setting $A=\feat(\TTT)$. Using \Cref{lem:dt-sm-testsol}, we
  then test for any feature $f$ in $A$, whether $A\setminus \{f\}$
  is still a local abductive explanation for $e$ w.r.t. $\TTT$ in polynomial-time. If so,
  we repeat the process after setting $A$ to $A\setminus\{f\}$ and
  otherwise we do the same test for the next feature $f \in
  A$. Finally, if $A\setminus\{f\}$
  is not a local abductive explanation for every $f \in A$, then $A$ is an inclusion-wise
  minimal local abductive explanation and we can output $A$.
  \fi
  
  The polynomial-time algorithm for an instance $(\TTT,c)$ of
  \DT{}-\SMGAEX{} \iflong now\fi works as
  follows. Let $l$ be a $c$-leaf of $\TTT$; if no such $c$-leaf
  exists, then we can correctly output that there is no global
  abductive explanation for $c$ w.r.t. $\TTT$. Then, $\alpha_\TTT^l$
  is a global abductive explanation for $c$ w.r.t. $\TTT$. To obtain
  an inclusion-wise minimal solution, we do the following. Let $F=\feat(\alpha_\TTT^l)$ be
  the set of features on which $\alpha_\TTT^l$ is defined. We now test for
  every feature $f \in F$ whether the restriction
  $\alpha_\TTT^l[F\setminus \{f\}]$ of $\alpha_\TTT^l$ to $F\setminus
  \{f\}$ is a global abductive explanation for $c$ w.r.t. $\TTT$. This
  can clearly be achieved in polynomial-time with the help of
  \Cref{lem:dt-sm-testsol}. If this is true for any feature $f \in F$,
  then we repeat the process for $\alpha_\TTT^l[F\setminus \{f\}]$,
  otherwise we output $\alpha_\TTT^l$. \ifshort Very similar
    algorithms now also work for \DT{}-\SMGCEX{} and \DT{}-\SMLAEX{}.\fi \iflong A very similar algorithm now
  works for the \DT{}-\SMGCEX{} problem, i.e., instead of starting with a $c$-leaf
  $l$ we start with a $c'$-leaf, where $c \neq c'$.\fi
\end{proof}

The next theorem uses our result that the considered problems are in
polynomial-time for \DT{}s \ifshort(\Cref{th:FBDD-SLGA-P}) \fi
together with an \XP{}-algorithm that
transforms any \RF{} into an equivalent \DT{}.
\begin{theorem}\label{th:Rf-XP-e}
  Let $\PP \in \{\LAEX, \LCEX, \GAEX, \GCEX\}$. 
  \RF-\SPP{}$(\enssize{})$ and \RF-\MLCEX{}$(\enssize{})$ are in \XP{}.
\end{theorem}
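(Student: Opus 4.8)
The plan is to lift the polynomial-time algorithms for \DT{}s to \RF{}s by replacing the ensemble with a single, equivalent \DT{} whose size is polynomial in the input whenever the number of trees is fixed, and then to invoke \Cref{th:DT-SLGA-P} and \Cref{th:DT-MLCEX}. Given a \RF{} $\mathcal{F}=\{\TTT_1,\dotsc,\TTT_s\}$ with $s=\enssize$, I would build a \DT{} $\TTT$ over $\feat(\mathcal{F})$ by \emph{chaining} the trees: start from a copy of $\TTT_1$; at every leaf of this copy append a fresh copy of $\TTT_2$; at every leaf of the result append a copy of $\TTT_3$; and so on. After all $s$ copies have been appended, every root-to-leaf path has, for each $i$, passed through exactly one leaf of the copy of $\TTT_i$, so it ``knows'' the value $\TTT_i(e)$ that every example $e$ following it receives from $\TTT_i$; I would then label the bottom leaf with the majority of these $s$ bits. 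Whenever the chaining would insert an inner node querying a feature already fixed on the current path, I would contract that node to the relevant child, which keeps all root-to-leaf paths non-contradictory (so that \Cref{lem:dt-sm-testsol} and \Cref{th:DT-SLGA-P} apply verbatim) and can only decrease the size.

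The next step is the size and correctness bookkeeping. Appending the copies of $\TTT_1,\dotsc,\TTT_i$ produces at most $\prod_{j\le i}\som{\TTT_j}$ leaves, so $\som{\TTT}\le\prod_{j=1}^{s}\som{\TTT_j}\le N^{s}$, where $N=\som{\mathcal{F}}=\sum_{j}\som{\TTT_j}$ and each factor is at most $N$; hence $\TTT$ is constructed in time $N^{\bigoh(s)}$. Evaluating $\TTT$ on an example $e$ walks through the copy of $\TTT_1$ to exactly the leaf $\TTT_1$ would reach, then through the copy of $\TTT_2$, and so on, so the bottom leaf it reaches carries the bits $\TTT_1(e),\dotsc,\TTT_s(e)$ and is labelled $\mathcal{F}(e)$; combined with $\feat(\TTT)=\feat(\mathcal{F})$, this means $\TTT$ and $\mathcal{F}$ define the same model as far as explanations are concerned. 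Therefore a feature subset (or partial assignment) is a subset-minimal, resp.\ cardinality-minimal, explanation of the relevant type for $\mathcal{F}$ exactly when it is one for $\TTT$, and running the polynomial-time \DT{} algorithms of \Cref{th:DT-SLGA-P} (for \SPP{}) and \Cref{th:DT-MLCEX} (for \MLCEX{}) on $\TTT$ solves the \RF{} instance in time $N^{\bigoh(s)}$, giving the claimed \XP{} bounds.

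The main obstacle is that the chained tree is inherently exponential in $s$: the product $\prod_j\som{\TTT_j}$ cannot be bounded by a function of $s$ times a polynomial in $N$, so this approach only yields \XP{} rather than \FPT{}. This is in line with the theorem covering only \SPP{} and \MLCEX{}: the cardinality-minimal variants of \LAEX{}, \GAEX{} and \GCEX{} are already \NPh{} (resp.\ \Wh{1}) for a single \DT{}, so no equivalence-preserving reduction to a \DT{} could make them tractable, and the chaining trick necessarily fails there. A secondary technical point to handle with care is the bookkeeping of the recorded output bits together with the on-the-fly contraction of already-fixed features, so that the object produced is a genuine \DT{} over $\feat(\mathcal{F})$ without contradictory paths.
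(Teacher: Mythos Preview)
Your proposal is correct and follows essentially the same approach as the paper: build a single \DT{} equivalent to the \RF{} by chaining copies of $\TTT_1,\dotsc,\TTT_s$ (attaching a fresh copy of $\TTT_{i+1}$ at every leaf of the tree built so far), relabel the bottom leaves by majority vote, bound the size by $\prod_j\som{\TTT_j}\le m^{s}$, and then invoke \Cref{th:DT-SLGA-P} and \Cref{th:DT-MLCEX}. The one cosmetic difference is that you explicitly contract inner nodes querying already-fixed features during the chaining, whereas the paper leaves the tree as-is and relies on the remark in \Cref{lem:dt-sm-testsol} that contradictory paths can be simplified in polynomial time; both are fine.
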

\iflong\begin{proof}
  Let $\FFF=\{\TTT_1,\dotsc,\TTT_\ell\}$ be \RF{} given as
  an input to any of the five problems stated above. We start by constructing a
  \DT{} $\TTT$ that is not too large (of size at most $m^{|\FFF|}$,
  where $m=(\max \SB |L(\TTT_i)|\SM 1 \leq i \leq \ell\SE)$) and that is equivalent to $\mathcal{F}$
  in the sense that $\mathcal{F}(e)=\TTT(e)$ for every
  example in time at most $\bigoh(m^{|\FFF|})$. Since all five
  problems can be solved in polynomial-time on $\TTT$ (because of
  \Cref{th:DT-MLCEX,th:DT-SLGA-P}) this completes the
  proof of the theorem.

  We construct
  $\TTT$ iteratively as follows. First, we set
  $\TTT_1'=\TTT_1$. Moreover, for every $i>1$,
  $\TTT_i'$ is obtained from $\TTT_{i-1}'$ and
  $\TTT_i$ by making a copy $\TTT_i^l$ of $\TTT_i$ for every leaf $l$
  of $\TTT_{i-1}'$ and by identifying the root of
  $\TTT_i^l$ with the leaf $l$ of $\TTT_{i-1}'$. Then,
  $\TTT$ is obtained from $\TTT_{\ell}'$ changing the
  label of every leaf $l$ be a leaf of $\TTT_{\ell}'$ as
  follows. Let $P$ be the path
  from the root of $\TTT_{\ell}'$ to $l$ in
  $\TTT_{\ell}'$. By construction, $P$ goes through one copy of
  $\TTT_i$ for every $i$ with $1 \leq i \leq \ell$ and
  therefore also goes through exactly one leaf of every
  $\TTT_i$. We now label $l$ according to the majority of the
  labels of these leaves. This completes the construction of
  $\TTT$ 
  and it is easy to see that
  $\mathcal{F}(e)=\TTT(e)$ for every example $e$. Moreover, the
  size of $\TTT$ is at most
  $\Pi_{i=1}^\ell|L(\TTT_i)|\leq m^\ell$, where $m=(\max \SB |L(\TTT_i)|\SM
  1 \leq i \leq \ell\SE)$ and $\TTT$ can be constructed in time $\bigoh(m^\ell)$.
\end{proof}\fi

The following theorem uses an exhaustive enumeration of all possible
explanations together with \Cref{lem:dt-sm-testsol} to check whether a
set of features or a partial example is an explanation.
\begin{theorem}\label{th:DT-LGA-XP}
  Let $\PP \in \{\LAEX, \GAEX, \GCEX\}$. 
  \DT{}-\MPP{}$(\xpsize{})$ is in \XP{}.
\end{theorem}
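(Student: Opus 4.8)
The plan is a brute-force enumeration made efficient by the polynomial-time verification routines of \Cref{lem:dt-sm-testsol}. Throughout, let $(\TTT,e,k)$ (for $\PP=\LAEX$) or $(\TTT,c,k)$ (for $\PP\in\{\GAEX,\GCEX\}$) be the given instance, and set $n=|\feat(\TTT)|$. Since \DT{}-\MPP{} only asks whether \emph{some} explanation of size at most $k$ exists, it suffices to enumerate all candidate explanations of size at most $k$ and test each one in polynomial time.

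For $\PP=\LAEX$, a candidate explanation is a subset $A\subseteq\feat(\TTT)$, so I would enumerate all such $A$ with $|A|\le k$; there are at most $\sum_{i=0}^{k}\binom{n}{i}=\bigoh(n^k)$ of them, and they can be listed within this bound. For each $A$, \Cref{lem:dt-sm-testsol}(1) decides in polynomial time whether $A$ is a local abductive explanation for $e$ w.r.t. $\TTT$, and we answer \emph{yes} iff some enumerated $A$ passes. The total running time is $\bigoh(n^k)\cdot\poly(\som{\TTT})$, an \XP{} bound. For $\PP\in\{\GAEX,\GCEX\}$ a candidate explanation is a partial example $\tau:F\to\{0,1\}$ with $F\subseteq\feat(\TTT)$, whose size is $|F|$; here I would first enumerate all supports $F$ with $|F|\le k$ (again $\bigoh(n^k)$ many) and, for each $F$, additionally enumerate all $2^{|F|}\le 2^k$ labellings $\tau:F\to\{0,1\}$, using \Cref{lem:dt-sm-testsol}(2) to decide in polynomial time whether each $\tau$ is a global abductive (resp. global contrastive) explanation for $c$ w.r.t. $\TTT$. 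This runs in time $\bigoh(2^k\cdot n^k)\cdot\poly(\som{\TTT})$, which is again of the form $f(k)\cdot n^{g(k)}$, hence an \XP{} bound. (No separate consistency check is needed: since features are binary, every partial example has at least one total extension.)

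There is no genuinely hard step here; the theorem is essentially a repackaging of \Cref{lem:dt-sm-testsol} with exhaustive enumeration. The only point requiring a little care is that in the global variants the parameter bounds only the support of $\tau$, so one must iterate over both the support and its $\{0,1\}$-labelling, which merely contributes the harmless factor $2^k$; correctness is then immediate because \DT{}-\MPP{} is a pure existence question over exactly the candidates enumerated.
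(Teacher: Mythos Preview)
Your proof is correct and mirrors the paper's own argument almost verbatim: enumerate all candidate feature sets of size at most $k$ (and, for the global variants, additionally the at most $2^k$ $\{0,1\}$-labellings), then invoke \Cref{lem:dt-sm-testsol} to verify each candidate in polynomial time. The running-time bounds you obtain, $\bigoh(n^k)\cdot\poly(\som{\TTT})$ and $\bigoh(2^k n^k)\cdot\poly(\som{\TTT})$, are exactly those given in the paper.
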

\begin{proof}
  We start by showing the statement of the theorem for \DT{}-\MLAEX{}.
  Let $(\TTT,e,k)$ be
  an instance of \DT{}-\MLAEX{}. We first enumerate all subsets $A
  \subseteq \feat(\TTT)$ of size at most $k$ in time
  $\bigoh(|\feat(\TTT)|^k)$. For every such subset $A$, we then test
  whether $A$ is a local abductive explanation for $e$ w.r.t. $\TTT$
  in polynomial-time with the help of \Cref{lem:dt-sm-testsol}. If so,
  we output $A$ as the solution. Otherwise, i.e., if no such subset is
  a local abductive explanation for $e$ w.r.t. $\TTT$, we output
  correctly that $(\TTT,e,k)$ has no solution.

  Let $(\TTT,c,k)$ be
  an instance of \DT{}-\MGAEX{}. We first enumerate all subsets $A
  \subseteq \feat(\TTT)$ of size at most $k$ in time
  $\bigoh(|\feat(\TTT)|^k)$. For every such subset $A$, we then
  enumerate all of the at most $2^{|A|}\leq 2^k$ partial examples
  (assignments) $\tau : A \rightarrow\{0,1\}$ in time $\bigoh(2^k)$.
  For every such partial example $\tau$, we then use
  \Cref{lem:dt-sm-testsol} to test whether $\tau$ is a global
  abductive explanation for $c$ w.r.t. $\TTT$ in polynomial-time. If so,
  we output $e$ as the solution. Otherwise, i.e., if no such
  partial example is
  a global abductive explanation for $c$ w.r.t. $\TTT$, we output
  correctly that $(\TTT,c,k)$ has no solution. The total runtime of
  the algorithm is at most $2^k|\feat(\TTT)|^k|\TTT|^{\bigoh(1)}$.

  The algorithm for \DT{}-\MGCEX{} is now very similar to the above
  algorithm for \DT{}-\MGAEX{}.
\end{proof}

\subsection{DSs, DLs and their Ensembles}\label{ssec:algDSDL}

This subsection is devoted to our algorithmic results for \DS{},
\DL{}s and their ensembles. Our first algorithmic result is again
based on our meta-theorem (\Cref{the:solve-circ}) and a suitable
translation from \DSE{} and \DLE{} to a Boolean circuit.

\iflong
\begin{lemma}\label{lem:dsl-trans-circ}
  Let $\MM \in \{\DS,\DL\}$.
  There is a polynomial-time algorithm that given an $\MM$ $L$ and a
  class $c$ produces a circuit $\CIRC(L,c)$ such that:
  \begin{itemize}
  \item for every example $e$, it holds that $L(e)=c$ if and only if $e$ satisfies $\CIRC(L,c)$
  \item $\rw(\CIRC(L,c)) \leq 3\cdot 2^{3|L|}$
  \end{itemize}
\end{lemma}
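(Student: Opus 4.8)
The plan is to give an explicit gate-by-gate construction of $\CIRC(L,c)$, treating the two cases $\MM=\DS$ and $\MM=\DL$ separately, and then to bound $\rw(\CIRC(L,c))$ by deleting a small set of gates so that what is left is a forest, at which point \Cref{lem:ranktree} finishes the job. The shared ingredient: for every feature $f\in\feat(L)$ the circuit has an IN-gate $g_f$ and a NOT-gate $\overline{g_f}$ fed only by $g_f$; and for a term $t$ I use an AND-gate $A_t$ with incoming arcs $g_f$ for every literal $(f=1)\in t$ and $\overline{g_f}$ for every literal $(f=0)\in t$, so that under the input assignment given by an example $e$ the gate $A_t$ evaluates to $1$ exactly when $t$ applies to $e$. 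An empty term applies to all examples and needs no gate; I fold this into the decision-list construction below.

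For $\MM=\DS$ with $L=(T,b)$ I add one OR-gate $O=\bigvee_{t\in T}A_t$, which fires precisely when some term of $T$ applies, and declare $O$ the output if $c=1-b$, or put a single NOT-gate above $O$ as the output if $c=b$ (with a trivial constant circuit in the degenerate case $T=\emptyset$); property (1) is then immediate from the definition of the classification function of a \DS. Deleting the $|T|=|L|$ gates $\{A_t:t\in T\}$ leaves only the edges $g_f\to\overline{g_f}$, the now-isolated $O$, and at most one pendant NOT-gate, i.e.\ a forest of treewidth at most $1$; so by \Cref{lem:ranktree} the treewidth of $\CIRC(L,c)$ is at most $|L|+1$ and its rank-width at most $3\cdot 2^{|L|}\le 3\cdot 2^{3|L|}$.

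For $\MM=\DL$ with $L=(r_1,\dots,r_\ell)$, $r_i=(t_i,c_i)$, I would exploit that $L(e)=c$ iff the first applicable rule is a $c$-rule, i.e.\ iff for some $i$ with $c_i=c$ the term $t_i$ applies and no earlier term does. I encode this incrementally, introducing for each index $i$ the gate $A_i:=A_{t_i}$, a NOT-gate $\overline{A_i}$, a gate $P_i$ computing ``no $t_j$ with $j<i$ applies'' via $P_2\equiv\overline{A_1}$ and $P_i:=P_{i-1}\wedge\overline{A_{i-1}}$ for $i\ge 3$, and a ``winner'' gate $W_i:=A_i\wedge P_i$ (with $W_1:=A_1$); the output is $\bigvee_{i:\,c_i=c}W_i$. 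An empty term $t_i$ is handled by taking $W_i:=P_i$ and discarding all rules after $i$ (since $P_{i+1}$ would be identically false), and the degenerate case where $c$ occurs in no rule by a constant-$0$ output such as $g_f\wedge\overline{g_f}$. Since on every example exactly one $W_i$ fires — the one for the first applicable rule — the output fires iff that rule's class is $c$, which gives (1). For (2) I delete, for every $i$, the three gates $A_i,P_i,W_i$ (at most $3|L|$ gates); what survives are the edges $g_f\to\overline{g_f}$, the now-isolated NOT-gates $\overline{A_i}$, and the isolated output OR-gate — again a forest of treewidth at most $1$ — so \Cref{lem:ranktree} yields treewidth at most $3|L|+1$ and rank-width at most $3\cdot 2^{3|L|}$. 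In both cases $\CIRC(L,c)$ has $\bigoh(\som{L}+|\feat(L)|)$ gates and is clearly constructible in polynomial time.

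The main obstacle is not a single hard idea but making the decision-list gadget simultaneously (a) correct in the presence of empty terms and of a class $c$ appearing in no rule, (b) syntactically legal, since every AND/OR-gate must have at least one incoming arc and every NOT-gate exactly one, and (c) destroyable by removing only three gates per term, because the target bound $3\cdot 2^{3|L|}$ leaves essentially no slack over this budget. Once the deletion set is pinned down, the rest reduces to the two quoted facts: the ``first applicable rule'' semantics of \DS{}/\DL{} classification and the rank-width/treewidth inequality of \Cref{lem:ranktree}.
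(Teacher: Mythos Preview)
Your proof is correct and reaches the same bound, but your \DL{} construction differs from the paper's. You build a linear chain of prefix gates $P_i$ (``no earlier term applies'') and winner gates $W_i=A_i\wedge P_i$, then delete the at most three gates $A_i,P_i,W_i$ per rule. The paper instead first observes that every \DS{} is trivially a \DL{} with the same number of terms, so only the \DL{} case needs to be argued; it then partitions the rule sequence into maximal consecutive blocks $\rho_1,\dots,\rho_r$ of equal class, introduces one OR-gate $g_{\rho_j}$ per block, a NOT-gate above each non-$c$ block, and for each $c$-block $\rho_j$ an AND-gate $g_{\rho_j}^A$ conjoining $g_{\rho_j}$ with the negations of all earlier non-$c$ blocks, with the output OR-ing the $g_{\rho_j}^A$. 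The deletion set (everything except input gates, their NOT companions, and the output) has size $\ell+2r\le 3|L|$, matching your budget. Your route is more elementary and in fact gives the sharper bound $3\cdot 2^{|L|}$ for \DS{}s; the paper's block-grouping is slightly slicker in that it dispatches both model types at once and sidesteps the edge-case bookkeeping (empty terms, $c$ absent from all rules) that you rightly identify as the main nuisance.
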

\iflong\begin{proof}
  Since every \DS{} can be easily transformed into a \DL{} with the
  same number of terms, it suffices to show the lemma for \DL{}s. Let
  $L$ be a \DL{} with rules
  $(r_1=(t_1,c_1),\dotsc,r_\ell=(t_\ell,c_\ell))$.
  We construct the circuit $D=\CIRC(L,c)$ as follows.
  $D$ contains one input gate $g_f$
  and one NOT-gate $\overline{g_f}$, whose only incoming arc is from $g_f$,
  for every feature in $\feat(L)$. Furthermore, for every rule
  $r_i=(t_i,c_i)$, $D$ contains an AND-gate $g_{r_i}$, whose
  in-neighbours are the literals in $t_i$, i.e., if $t_i$ contains a
  literal $f=0$, then $g_{r_i}$ has $\overline{g_f}$ as an in-neighbour
  and if $t_i$ contains a literal $f=1$, then $g_{r_i}$ has $g_f$ as
  an in-neighbour. We now split the sequence
  $\rho=(r_1=(t_1,c_1),\dotsc,r_\ell=(t_\ell,c_\ell))$ into
  (inclusion-wise) maximal consecutive subsequences $\rho_i$ of rules
  that have the same class. Let $(\rho_1,\dotsc,\rho_r)$ be the
  sequence of subsequences obtained in this manner, i.e.,
  $\rho_1 \concat \rho_2 \concat \dotsb \concat \rho_r=\rho$, every
  subsequence $\rho_i$ is non-empty and contains only rules from one
  class, and the class of any rule in $\rho_i$ is different from the
  class of any rule in $\rho_{i+1}$ for every $i$ with $1 \leq i <
  r$. Now, for every subsequence $\rho_i$, we add an OR-gate
  $g_{\rho_i}$ to $D$, whose in-neighbours are
  the gates $g_{r}$ for every rule $r$ in $\rho_i$. Let $C$ be the set
  of all subsequences $\rho_i$ that only contains rules with class $c$
  and let $\overline{C}$ be the set of all other subsequences, i.e.,
  those that contain only rules whose class is not equal to $c$.
  For every subsequence $\rho$ in $\overline{C}$, we add a NOT-gate
  $\overline{g_\rho}$ to $D$, whose in-neighbour is the gate $g_\rho$.
  Moreover, for every subsequence $\rho_i$ in $C$, we add an AND-gate
  $g_{\rho_i}^A$ to $D$ whose in-neighbours are $g_{\rho_i}$ as well as
  $\overline{g_{\rho_j}}$ for every $\rho_j \in \overline{C}$ with
  $j<i$. Finally, we add an OR-gate $o$ to $D$ that also serves as the
  output gate of $D$ and whose in-neighbours are all the gates
  $g_{\rho}^A$ for every $\rho \in C$. This completes the construction
  of $D$ and it is easy to see that $L(e)=c$ if and only if $e$
  satisfies $D$ for every example $e$, which shows that $D$ satisfies (1).
  Towards showing (2), let $G$ be the set of all gates in $D$ apart
  from the gates $o$ and $g_f$ and $\overline{g_f}$ for every feature
  $f$. Then, $|G|\leq 3|L|$ and $D\setminus G$ is a forest, which
  together with \Cref{lem:ranktree}
  implies (2).
\end{proof}\fi

\begin{lemma}\label{lem:dsle-trans-circ}
  Let $\MM \in \{\DSE,\DLE\}$. There is a polynomial-time algorithm that given an $\MM$ $\mathcal{L}$ and a
  class $c$ produces a circuit $\CIRC(\mathcal{L},c)$ such that:
  \begin{enumerate}[(1)]
  \item for every example $e$, it holds that $\mathcal{L}(e)=c$ if and only if $e$ satisfies $\CIRC(\mathcal{L},c)$
  \item $\rw(\CIRC(\mathcal{L},c)) \leq 3 \cdot 2^{3\sum_{L \in \mathcal{L}}|L|}$
  \end{enumerate}
\end{lemma}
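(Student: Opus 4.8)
The plan is to build $\CIRC(\mathcal{L},c)$ from the single-model circuits provided by \Cref{lem:dsl-trans-circ} by gluing them together through a single majority gate, mirroring the construction used for \RF{}s in \Cref{lem:rf-trans-circ}. First I would observe that it suffices to treat the case $\MM=\DLE$: as already noted in the proof of \Cref{lem:dsl-trans-circ}, every \DS{} is equivalent to a \DL{} with the same number of terms, so an arbitrary \DSE{} $\mathcal{L}$ can be replaced in polynomial time by a \DLE{} $\mathcal{L}'$ with $\mathcal{L}(e)=\mathcal{L}'(e)$ for all examples $e$ and $\sum_{L\in\mathcal{L}}|L|=\sum_{L\in\mathcal{L}'}|L|$, so the stated bound is preserved.

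Given $\mathcal{L}=\{L_1,\dots,L_m\}$, I would take the (not necessarily disjoint) union of the circuits $\CIRC(L_i,c)$ from \Cref{lem:dsl-trans-circ}, identifying, for every feature $f\in\feat(\mathcal{L})$, the input gate $g_f$ (and the attached NOT-gate $\overline{g_f}$) across all $m$ circuits, so that the components share a common block of input and input-negation gates. I would then add one MAJ-gate with threshold $t=\lfloor m/2\rfloor+1$, declare it the output gate $o$ of $\CIRC(\mathcal{L},c)$, and give it one incoming arc from the output gate of each $\CIRC(L_i,c)$. For correctness (1): by \Cref{lem:dsl-trans-circ}, $e$ satisfies $\CIRC(L_i,c)$ iff $L_i(e)=c$, so the number of satisfied component outputs under $e$ is exactly $|\{i : L_i(e)=c\}|$; hence the MAJ-gate evaluates to $1$ under $e$ precisely when at least $\lfloor m/2\rfloor+1$ of the $L_i$ classify $e$ as $c$, i.e. precisely when $\mathcal{L}(e)=c$. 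Note that $\CIRC(\mathcal{L},c)$ has exactly one MAJ-gate, so it is a $1$-\BC{}, which is what is needed to later invoke \Cref{the:solve-circ}.

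For the rank-width bound (2), I would argue as in \Cref{lem:rf-trans-circ}. From the proof of \Cref{lem:dsl-trans-circ}, inside each $\CIRC(L_i,c)$ there is a set $G_i$ of at most $3|L_i|$ gates — everything except the shared input gates, their NOT-gates, and the output gate of $\CIRC(L_i,c)$ — whose removal leaves a forest. Put $X=\bigcup_{i=1}^m G_i$, so $|X|\le 3\sum_{L\in\mathcal{L}}|L|$. After deleting $X$ from $\CIRC(\mathcal{L},c)$, each component-output gate has lost all of its in-neighbours and retains only its single arc into the MAJ-gate, each $\overline{g_f}$ has lost all of its out-neighbours, and each $g_f$ retains only the arc to $\overline{g_f}$; hence $\CIRC(\mathcal{L},c)\setminus X$ consists of the disjoint edges $\{g_f,\overline{g_f}\}$ together with the star rooted at the MAJ-gate, and is therefore a forest, of treewidth at most $1$. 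By the first part of \Cref{lem:ranktree}, $\CIRC(\mathcal{L},c)$ has treewidth at most $|X|+1\le 3\sum_{L\in\mathcal{L}}|L|+1$, and the second part of \Cref{lem:ranktree} then yields $\rw(\CIRC(\mathcal{L},c))\le 3\cdot 2^{(3\sum_{L\in\mathcal{L}}|L|+1)-1}=3\cdot 2^{3\sum_{L\in\mathcal{L}}|L|}$.

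The construction and its verification are essentially mechanical once \Cref{lem:dsl-trans-circ} is available. The only point that needs a moment of care is that identifying the input and input-negation gates across the $m$ component circuits does not introduce cycles or spurious connectivity in $\CIRC(\mathcal{L},c)\setminus X$: every arc leaving a shared gate $g_f$ or $\overline{g_f}$ heads into some $G_i$, so all such arcs vanish once $X$ is removed, and the forest argument goes through unchanged. The main (very mild) obstacle is therefore just bookkeeping — using the "$3|L_i|$ extra gates" estimate from \Cref{lem:dsl-trans-circ} consistently so that the final exponent comes out to $3\sum_{L\in\mathcal{L}}|L|$ and not something larger.
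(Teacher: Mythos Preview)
Your proposal is correct and follows essentially the same approach as the paper: take the (not necessarily disjoint) union of the single-model circuits from \Cref{lem:dsl-trans-circ}, add a single MAJ-gate with threshold $\lfloor|\mathcal{L}|/2\rfloor+1$ as the new output, and bound the rank-width via \Cref{lem:ranktree} by observing that deleting the at most $3|L|$ ``extra'' gates per component leaves a forest. Your write-up is in fact more careful than the paper's (you explicitly verify the forest structure after deletion and track the $+1$ in the treewidth bound); the only superfluous step is the preliminary reduction from \DSE{} to \DLE{}, since \Cref{lem:dsl-trans-circ} already handles both model types directly.
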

\iflong\begin{proof}
  We obtain the circuit $\CIRC(\mathcal{L},c)$ from the (not
  necessarily disjoint) union of the circuits $\CIRC(L,c)$, which are
  provided in \Cref{lem:dsl-trans-circ},
  for every $L \in \mathcal{L}$ after adding a new MAJ-gate
  with threshold $\lfloor|\mathcal{L}|/2\rfloor+1$,
  which also serves as the output gate of $\CIRC(\mathcal{L},c)$, that
  has one incoming arc from the output gate of $\CIRC(L,c)$
  for every $L \in \mathcal{L}$. Clearly,
  $\CIRC(\mathcal{L},c)$ satisfies (1). Moreover, to see that it also satisfies
  (2), recall that every circuit $\CIRC(L,c)$ has only
  $3|L|$ gates apart from the input gates, the NOT-gates
  connected to the input gates, and the output gate. Therefore, after
  removing $3|L|$ gates from every circuit
  $\CIRC(L,c)$ inside $\CIRC(\mathcal{L},c)$, the remaining
  circuit is a tree, which together with \Cref{lem:ranktree} implies (2).
\end{proof}\fi
The following corollary now follows immediately from
\Cref{lem:dsle-trans-circ} and \Cref{the:solve-circ}.
\fi
\begin{corollary}\label{cor:ds-ensa}
  Let $\MM \in \{\DSE,\DLE\}$ and let  $\PP \in \{\LAEX, \LCEX, \GAEX, \GCEX\}$. 
  \MM-\MPP{}$(\enssize+\termselem)$ is in \FPT{}.
\end{corollary}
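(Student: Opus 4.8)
The plan is to compose the circuit translation of \Cref{lem:dsle-trans-circ} with the algorithmic meta-theorem \Cref{the:solve-circ}, as hinted by the sentence preceding the statement. Two points need care: that the rank-width of the produced circuit, as well as its number of MAJ-gates, is bounded by a computable function of $\enssize+\termselem$ alone; and that each of the four explanation problems for the ensemble transfers faithfully to the corresponding problem for the resulting circuit.

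First I would fix $\MM\in\{\DSE,\DLE\}$, $\PP \in \{\LAEX, \LCEX, \GAEX, \GCEX\}$, and an instance of $\MM$-$\MPP$, which consists of an ensemble $\mathcal{L}$ together with (for the local problems) an example $e$ and a bound $k$, or (for the global problems) a target class $c$ and a bound $k$. Write $r=\enssize+\termselem$. Since $\enssize=|\mathcal{L}|$ and $\termselem=\max_{L\in\mathcal{L}}|L|$, we have $\sum_{L\in\mathcal{L}}|L|\le\enssize\cdot\termselem\le r^2$. Hence, by \Cref{lem:dsle-trans-circ}, for any class $c'$ we can in polynomial time construct the circuit $\CIRC(\mathcal{L},c')$ whose rank-width satisfies $\rw(\CIRC(\mathcal{L},c'))\le 3\cdot 2^{3\sum_{L\in\mathcal{L}}|L|}\le 3\cdot 2^{3r^2}$, i.e.\ is bounded by a computable function of $r$ only. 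Moreover, by its construction $\CIRC(\mathcal{L},c')$ contains exactly one MAJ-gate, so it is a $1$-\BC{}, which matches the constant bound on the number of MAJ-gates required by \Cref{the:solve-circ}.

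Next I would reduce $\MM$-$\MPP$ to the corresponding circuit problem. The input gates of $\CIRC(\mathcal{L},c')$ are exactly the features $\feat(\mathcal{L})$, and by part~(1) of \Cref{lem:dsle-trans-circ} an example $e$ satisfies $\CIRC(\mathcal{L},c')$ if and only if $\mathcal{L}(e)=c'$. For $\PP\in\{\LAEX,\LCEX\}$ I choose $c'=\mathcal{L}(e)$; then, since the classification function of $\mathcal{L}$ and the satisfaction function of $\CIRC(\mathcal{L},c')$ coincide on all examples over $\feat(\mathcal{L})$, a set $A\subseteq\feat(\mathcal{L})$ is a local abductive (resp.\ contrastive) explanation for $e$ w.r.t.\ $\mathcal{L}$ exactly when it is one for $e$ w.r.t.\ $\CIRC(\mathcal{L},c')$. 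For $\PP\in\{\GAEX,\GCEX\}$ with target class $c$ I choose $c'=c$; then a partial assignment $\tau$ over a subset of $\feat(\mathcal{L})$ is a global abductive (resp.\ contrastive) explanation for $c$ w.r.t.\ $\mathcal{L}$ if and only if it is a global abductive (resp.\ contrastive) explanation for the output class $1$ (resp.\ $0$) w.r.t.\ $\CIRC(\mathcal{L},c)$. In all four cases candidate explanations are literally the same objects and keep the same size, so the reduction preserves the bound $k$ and therefore works for the cardinality-minimal version $\MPP$; it runs in polynomial time and hence is an fpt-reduction.

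Finally I would apply \Cref{the:solve-circ} with the constant $c=1$: since $\CIRC(\mathcal{L},c')$ is a $1$-\BC{} whose rank-width is at most $3\cdot 2^{3r^2}$, each of $\BC{}$-\MLAEX{}, $\BC{}$-\MLCEX{}, $\BC{}$-\MGAEX{}, $\BC{}$-\MGCEX{} can be solved on it in time $g(3\cdot 2^{3r^2})\cdot|\CIRC(\mathcal{L},c')|^{\bigoh(1)}$ for some computable function $g$; composed with the polynomial-time construction of the circuit and the reduction above, this yields an fpt-algorithm for $\MM$-$\MPP$ parameterized by $\enssize+\termselem$, as claimed. There is no genuinely difficult step here---the corollary is a routine composition of two earlier results---but the point to get right is precisely that the parameter relevant to \Cref{the:solve-circ} (rank-width, together with the constant bound on MAJ-gates) is governed by $\enssize+\termselem$ and not by the total model size; this is what the $2^{\bigoh(\sum_{L}|L|)}$ rank-width bound of \Cref{lem:dsle-trans-circ}, combined with $\sum_{L}|L|\le\enssize\cdot\termselem$, delivers, and it is why both parameters appear in the statement.
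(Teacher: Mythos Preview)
Your proposal is correct and follows exactly the paper's approach, which is simply to compose \Cref{lem:dsle-trans-circ} with \Cref{the:solve-circ}; your elaboration of why the parameter $\enssize+\termselem$ controls the rank-width (via $\sum_{L}|L|\le\enssize\cdot\termselem$) and why the construction yields a $1$-\BC{} is precisely what the paper leaves implicit. One small slip: in the global contrastive case, a \GCEX{} for $c$ w.r.t.\ $\mathcal{L}$ corresponds to a \GCEX{} for class $1$ (not $0$) w.r.t.\ $\CIRC(\mathcal{L},c)$, since $\mathcal{L}(e)\neq c$ is equivalent to the circuit outputting $0$, i.e.\ not $1$; this does not affect the argument, as \Cref{the:solve-circ} covers all four circuit problems regardless of the target class.
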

Unlike, \DT{}s, where \DT{}-\MLCEX{} is solvable in polynomial-time,
this is not the case for \DS{}-\MLCEX{}. Nevertheless, we are able to
provide the following result, which shows that \DS{}-\MLCEX{} (and
even \DL{}-\MLCEX{}) is fixed-parameter tractable parameterized by
\termsize and \xpsize. The algorithm is based on a novel
characterisation of local contrastive explanations for \DL{}s.

\begin{lemma}\label{lem:dl-mlcex-branch}
    Let $\MM \in \{\DS,\DL\}$. \MM{}-\MLCEX{} for 
    $M \in \MM{}$ 
    can be solved in time $\bigoh(a^k\som{M}^2)$, where $a$ is equal
    to \termsize and $k$ is equal to \xpsize{}.
\end{lemma}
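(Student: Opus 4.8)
The goal is an FPT algorithm for \MM{}-\MLCEX{} with $\MM \in \{\DS,\DL\}$, parameterized by $a = \termsize$ and $k = \xpsize$. Since every \DS{} is easily rewritten as a \DL{} with the same number of terms and the same term size (append the default rule as a trailing rule with empty term), it suffices to handle \DL{}s. Let $M = L = (r_1,\dots,r_\ell)$ with $r_i = (t_i,c_i)$, let $e$ be the given example, and write $b = L(e)$. The plan is to first reduce the problem to a purely combinatorial "flip" question via the cardinality-minimality observation already used in \Cref{th:MLCEX-XP}: a cardinality-minimal local contrastive explanation of size $\le k$ exists iff there is a set $A \subseteq \feat(L)$ with $|A| \le k$ such that $L(e_A) \ne b$, where $e_A$ flips exactly the features in $A$. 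So I want to decide whether some small flip set $A$ changes the classification.

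**The characterisation.** The key structural step is to understand which rule "fires" on $e_A$. Let $j$ be minimal such that $r_j$ applies to $e_A$ with $c_j \ne b$; on $e_A$ the first applicable rule must have class $\ne b$, and all earlier rules must fail. The crucial observation is: for the flip set $A$ to make $r_j$ applicable, $A$ must, on the features appearing in $t_j$, be exactly the set of positions where $e$ disagrees with $t_j$ — call this $D_j = \{f : (f=z) \in t_j,\ e(f) \ne z\}$; so $A \supseteq D_j$ and $|D_j| \le |t_j| \le a$. Moreover, for each earlier rule $r_i$ ($i<j$) with $c_i = b$... wait, rules with $c_i = b$ are harmless, but rules $r_i$ with $c_i \ne b$ and $i<j$ must be made *non*-applicable: since $r_i$ was already non-applicable to $e$ (as $L(e)=b$ and no earlier-than-the-original-firing rule with the wrong class fired... actually $r_i$ could have been applicable to $e$ only if $c_i = b$), we in fact need $A$ to leave at least one literal of $t_i$ violated. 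This gives a branching structure: guess $j$ (polynomially many choices), force $A \supseteq D_j$, then for every $i < j$ with $c_i \ne b$ that $D_j$ does not already kill, branch over which literal of $t_i$ we keep violated — but instead of a full branch per bad rule (which could be unbounded in number), branch on which literal *of $t_j$* we do NOT flip back, or more carefully: since $|A| \le k$ and $A \supseteq D_j$, only $k - |D_j|$ extra features can be in $A$; branch over all $\le a^{k}$ choices of how to complete $A$ within the union of the relevant term-literal sets. The bound $a^k$ arises because at each of at most $k$ "slots" we are choosing a feature from among the $\le a$ literals of some term.

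**Turning it into an $a^k$ branching.** Concretely: fix the guessed firing index $j$, set $A_0 = D_j$; if $|A_0| > k$ discard this $j$. Now we must extend $A_0$ to some $A$ of size $\le k$ so that (i) no rule $r_i$ with $i<j$ becomes applicable and (ii) $r_j$ stays applicable (automatic, since $A \supseteq D_j$ and we only add features outside $t_j$ — or if we add features inside $t_j$ we must not re-break it, which we forbid). For (i): a rule $r_i$ with $c_i \ne b$, $i < j$, is applicable to $e_A$ iff $A$ flips exactly the disagreement set $D_i$ of $r_i$ on $e$; so $r_i$ is *killed* as long as $A$ differs from $D_i$ on some literal of $t_i$, i.e. either misses some element of $D_i$ or includes some element of $t_i \setminus D_i$ — but $A \supseteq A_0$ is already fixed on part of $t_i$, so this is a local constraint. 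Run a bounded-depth search tree: maintain a partial assignment of "in $A$ / not in $A$" to features; repeatedly pick an as-yet-unkilled bad rule $r_i$, and branch over its $\le a$ literals, in each branch committing the corresponding feature to the value that violates that literal of $r_i$ (if such a commitment is consistent with $A_0$ and with earlier commitments). Each branch either commits a feature to be *in* $A$ (consuming one of the $\le k - |A_0|$ budget units) or commits it to be *out* of $A$. Since a "keep out" commitment does not consume budget, I need to bound those: but a literal $(f=z)$ of $r_i$ with $e(f)=z$ is violated by flipping $f$, i.e. by putting $f$ *in* $A$; a literal with $e(f)\ne z$ is violated by *not* flipping, i.e. $f \notin A$. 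The latter type: these are the elements of $D_i$, and committing them "out" is the cheap branch. To keep the tree finite, observe that once we commit such an $f$ to be out of $A$, rule $r_i$ is permanently killed and we never revisit it; and the number of bad rules is at most $\ell$, so the tree has depth $\le \ell$ in the cheap direction — that is still polynomial, not a problem, as long as the *expensive* (in-$A$) branches number $\le k$ along any root-to-leaf path. So the total number of leaves is $\le a^{k} \cdot \poly$, since along any path at most $k$ of the branchings "spend budget" and each branching has $\le a$ children; after the budget is exhausted every remaining bad rule must be killable by a cheap (out-of-$A$) commitment, which is a deterministic linear scan. At each leaf, verify in $\bigoh(\som{L})$ time that the resulting $A$ (completed arbitrarily to size $\le k$ using still-free features if needed, or just as-is) satisfies $L(e_A) \ne b$ and $|A| \le k$, and additionally run the minimality check from \Cref{th:MLCEX-XP} — or simply take any valid $A$ and then shrink it greedily to cardinality-minimal, which only helps.

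**Main obstacle and wrap-up.** I expect the main difficulty to be precisely the bookkeeping that keeps the search tree's branching factor at $a$ and its budget-consuming depth at $k$ simultaneously: one must argue that "cheap" commitments (features forced out of $A$) never need to be branched on in a budget-consuming way, and that they terminate the consideration of a rule forever, so they contribute only a polynomial factor, not an exponential one. A clean way to state this is the promised "novel characterisation": $A$ of size $\le k$ is a local contrastive explanation iff there is an index $j$ with $c_j \ne b$ such that $D_j \subseteq A$, $|A| \le k$, $A \cap F(t_j) = D_j$ (no literal of $r_j$ re-broken), and for every $i<j$ with $c_i \ne b$ we have $A \cap F(t_i) \ne D_i'$ for the appropriate restriction — equivalently, some literal of $t_i$ is satisfied by $e_A$. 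Given this, the branching is: guess $j$ ($\le \ell$ ways), set $A \supseteq D_j$, then for each surviving bad $r_i$ branch on its $\le a$ literals to pick the one that will remain satisfied. Counting: each "flip a literal whose current value already matches" child spends a unit of the $k$-budget; everything else is deterministic cleanup. This yields running time $\bigoh(\ell \cdot a^k \cdot \som{L}) = \bigoh(a^k \som{M}^2)$, after noting $\ell \le \som{M}$ and each verification costs $\bigoh(\som{M})$, and absorbing the \DS-to-\DL{} conversion, which is linear-time and preserves both $a$ and $k$. This matches the claimed bound.
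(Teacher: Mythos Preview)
Your overall strategy---guess the target rule $r_j$, then branch to kill earlier rules---matches the paper's, but your branching scheme does not deliver the $a^k$ bound, and there is also a confusion about which earlier rules must be killed.

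On the class of bad rules: you write that rules with $c_i = b$ are harmless and those with $c_i \ne b$ must be made inapplicable. This is backwards. To obtain $L(e_A) \ne b$ with $r_j$ applicable and $c_j \ne b$, you must make every earlier rule with $c_i = b$ inapplicable (those are the dangerous ones); earlier rules with $c_i \ne b$ are harmless, since if one of them fires the classification is still $\ne b$. This slip is easily repaired, but it recurs throughout the write-up.

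On the tree size, which is the real gap: your search tree branches, for each unkilled bad rule, on all $\le a$ of its literals, including ``cheap'' children that commit a feature to be \emph{out} of $A$. You then claim $\le a^k \cdot \poly$ leaves because at most $k$ branchings along any path spend budget. This does not follow: cheap branchings still multiply the number of leaves. Concretely, take $\ell$ bad rules $r_1,\dots,r_\ell$ with $t_i = \{(f_i=1),(g_i=1)\}$ and $e(f_i)=e(g_i)=0$, all features distinct from those of $t_j$. None of these rules is applicable to $e_{D_j}$, so the answer is simply $A=D_j$; yet under your scheme each $r_i$ is ``unkilled'' (both $f_i$ and $g_i$ are free), you branch two ways on each, and the tree has $2^\ell$ leaves. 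Your remark that ``after the budget is exhausted everything is deterministic'' does not help, since here no budget is ever spent.

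The paper avoids this by branching only on rules that are \emph{currently applicable} to $e_{A'}$, where $A'$ is the set of features already committed in. For such a rule $r_\ell$, every literal is satisfied by $e_{A'}$, so killing it necessarily means flipping an additional feature $f \in F(t_\ell)\setminus(A'\cup F(t_j))$; thus every branch is expensive and increases $|A'|$ by one. This gives depth $\le k$, branching factor $\le a$, hence $a^k$ leaves. Correctness holds because for any optimal $A^*$ with $A'\subseteq A^*$, the rule $r_\ell$ is inapplicable to $e_{A^*}$, so some feature of $A^*\setminus A'$ lies in $F(t_\ell)$ and can be chosen. Under this formulation your ``cheap'' branches vanish: if $r_\ell$ is applicable to $e_{A'}$ then every $f\in D_\ell$ already lies in $A'$, so committing such an $f$ out is inconsistent anyway.
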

\iflong\begin{proof}\fi
\ifshort\begin{proof}[Proof Sketch]\fi  
  Since any \DS{} can be easily translated into a \DL{} without
  increasing the size of any term, it suffices to show the lemma for \DL{}s.  
  Let $(L,e,k)$ be an instance of \DL{}-\MLCEX{}, where
  $L=(r_1=(t_1,c_1),\dotsc,r_\ell=(t_\ell,c_\ell))$ is a \DL{},
   and let $r_i$ be the rule that
  classifies $e$, i.e., the first rule that applies to $e$.

  Let $R$ be the set of all rules $r_j$ of $L$ with $c_j\neq c_i$.
  For a subset $A \subseteq \feat(L)$, let
  $e_A$ be the example obtained from $e$ after setting
  $e_A(f)=1-e(f)$ for every $f \in A$. 
  We claim that:
  \begin{enumerate}[(1)]
  \item For every $r \in R$ and every set $A \subseteq \feat(L)$
    such that $e_A$ is classified by $r$, it holds that $A$ is a local
    contrastive explanation for $e$ w.r.t. $L$.
  \item Every local contrastive explanation $A$ for $e$
    w.r.t. $L$ contains a subset $A' \subseteq A$ for which 
    there is a rule $r \in R$ such that $e_{A'}$ is
    classified by $r$.
  \end{enumerate}
  \ifshort
    Because of (1) and (2), it holds that a set $A \subseteq
    \feat(L)$ is a local contrastive explanation if and only if
    there is a rule $r \in R$ such that $e_A$ is classified by
    $r$. Therefore, it is sufficient to be able to compute
    a minimum set of features $A$ such that $e_A$ is classified by $r$
    for every rule $r \in R$, which can be 
    achieved via a bounded-depth branching algorithm.
  \fi
  \iflong

  Towards showing (1), let $r \in R$ and let $A \subseteq \feat(L)$
  such that $e_A$ is classified by $r$. Then, $e_A$ differs from $e$
  only on the features in $A$ and moreover $M(e)\neq M(e_A)$ because
  $r \in R$. Therefore, $A$ is a local contrastive explanation for $e$
  w.r.t. $L$.

  Towards showing (2), let $A \subseteq \feat(A)$ be a local
  contrastive explanation for $e$ w.r.t. $L$. Then, there is an
  example $e'$ that differs from $e$ only on some set $A'\subseteq A$
  of features such
  that $e'$ is classified by a rule $r \in R$. Then, $e_{A'}$ is
  classified by $r$, showing (2).
    
  \begin{algorithm}[tb]
    \caption{Algorithm used in \Cref{lem:dl-mlcex-branch} to compute a
      local contrastive explanation for example $e$ w.r.t. a \DL{} $L$
      of size at most $k$ if such an explanation exists. The algorithm
      uses the function \textbf{findLCEXForRule}($L$, $e$, $r_j$) as a
      subroutine, which is illustrated in \Cref{alg:findLCEXForRule}}\label{alg:findLCEX}
    \small
    \begin{algorithmic}[1]
      \INPUT \DL{} $L=(r_1=(t_1,c_1),\dotsc,r_\ell=(t_\ell,c_\ell))$,
      example $e$ and integer $k$ (global variable).
      \OUTPUT return a cardinality-wise minimum local contrastive
      explanation $A \subseteq \feat(L)$ with $|A|\leq k$ for $e$
      w.r.t. $L$ or \NULL{} if such an explanation does not exists.
      \Function{\textbf{findLCEX}}{$L$, $e$}
      \State $r_i \gets$ rule of $L$ that classifies $e$
      \State $R \gets \SB r_j \in L \SM c_j\neq c_i\}$
      \State $A_b\gets \NULL$
      \For{$r_j \in R$}
      \State $A \gets$ \Call{findLCEXForRule}{$L$, $e$, $r_j$}
      \If{$A \neq \NULL$ and ($A_b=\NULL$ or $|A_b|>|A|$)}
      \State $A_b\gets A$
      \EndIf
      \EndFor
      \State \Return $A_b$
      \EndFunction
    \end{algorithmic}
  \end{algorithm}
  \begin{algorithm}[tb]
    \caption{Algorithm used as a subroutine in \Cref{alg:findLCEX}
      to compute a smallest set $A$
      of at most $k$ features such that $e_A$ is classified by the
      rule $r_j$ of a \DL{} $L$.}\label{alg:findLCEXForRule}
    \small
    \begin{algorithmic}[1]
      \INPUT \DL{} $L=(r_1=(t_1,c_1),\dotsc,r_\ell=(t_\ell,c_\ell))$,
      example $e$, rule $r_j$, and integer $k$ (global variable).
      \OUTPUT return a smallest set $A \subseteq \feat(L)$ with $|A|\leq k$
      such that $e_A$ is classified by $r_j$ if such a set exists and
      otherwise return \NULL{}.
      \Function{\textbf{findLCEXForRule}}{$L$, $e$, $r_j$}
      \State $A_0 \gets \SB f \SM (f=1-e(f)) \in t_j\SE$
      \State \Return \Call{findLCEXForRuleRec}{$L$, $e$, $r_j$, $A_0$}
      \EndFunction
      \Function{\textbf{findLCEXForRuleRec}}{$L$, $e$, $r_j$, $A'$}
      \If{$|A'|>k$}\label{algLLCEXRecOne}
      \State \Return \NULL
      \EndIf
      \State $r_\ell\gets$ any rule $r_\ell$ with $\ell<j$ that is satisfied by $e_{A'}$
      \If{$r_\ell=\NULL{}$}
      \State \Return $A'$\label{algLLCEXRecTwo}
      \EndIf
      \State $F_j\gets \SB f \SM (f=b) \in t_j \land b \in \{0,1\}\SE$
      \State $B \gets \SB f \SM (f=e_{A'}) \in t_\ell\SE\setminus (A'\cup F_j)$\label{algLLCEXRecThree}
      \If{$B=\emptyset$}
      \State \Return \NULL\label{algLLCEXRecFour}
      \EndIf
      \State $A_b\gets \NULL$
      \For{$f \in B$}
      \State $A\gets $\Call{findLCEXForRuleRec}{$L$, $e$, $r_j$, $A'\cup \{f\}$}\label{algLLCEXRecFive}
      \If{$A\neq \NULL$ and $|A|\leq k$}
      \If{($A_b==\NULL$ or $|A_b|>|A|$)}
      \State $A_b \gets A$
      \EndIf
      \EndIf
      \EndFor
      \Return $A_b$\label{algLLCEXRecSix}
      \EndFunction
    \end{algorithmic}
  \end{algorithm}

  Note that due to (1) and (2), we can compute a smallest local
  contrastive explanation for $e$ w.r.t. $L$ by the algorithm
  illustrated in \Cref{alg:findLCEX}. That is, the algorithm computes
  the smallest set $A$ of at most $k$ features such that $e_A$ is
  classified by $r_j$ for every rule $r_j \in R$. It then, returns the
  smallest such set over all rules $r_j$ in $R$ if such a set existed
  for at least one of the rules in $R$. The main ingredient of the
  algorithm is the function \Call{findLCEXForRule}{$L$, $e$, $r_j$},
  which is illustrated in \Cref{alg:findLCEXForRule}, and
  that for a rule $r_j \in R$ computes the smallest set $A$ of at most
  $k$ features such that $e_A$ is classified by $r_j$. The function
  \Cref{alg:findLCEXForRule} achieves this as follows.
  It first computes the set $A_0=\SB f \SM (f=1-e(f)) \in t_j\SE$ of
  features that need to be part of $A$ in order for $e_A$ to satisfy
  $r_j$, i.e., all features $f$ where $e(f)$ differs from the literal
  in $t_j$. It then computes the set $A$ recursively via the
  (bounded-depth) branching algorithm given in
  \Call{findLCEXForRuleRec}{$L$, $e$, $r_j$, $A$}, which given a set
  $A'$ of features such that $e_{A'}$ satisfies $r_j$ computes a
  smallest extension (superset) $A$ of $A'$ of size at most $k$ such
  that $e_A$ is classified by $r_j$. To do so the function first
  checks in \Cref{algLLCEXRecOne} whether $|A'|>k$ and if so correctly returns
  \NULL{}. Otherwise, the function checks whether there is any rule $r_\ell$ with
  $\ell<j$ that is satisfied by $e_{A'}$. If that is not the case, it
  correctly returns $A'$ (in \Cref{algLLCEXRecTwo}). Otherwise, the
  function computes the set $B$ of features in \Cref{algLLCEXRecThree} that occur in $t_\ell$ --
  and therefore can be used to falsify $r_\ell$ -- but do not occur in
  $A'$ or in $t_j$. Note that we do not want to include any feature in
  $A'$ or $t_j$ in $B$ since changing those features would either
  contradict previous branching decisions made by our algorithm or it
  would prevent us from satisfying $r_j$. The function then returns
  \NULL{} if the set $B$ is empty in \Cref{algLLCEXRecFour}, since in this case it is no longer
  possible to falsify the rule $r_\ell$. Otherwise, the algorithm
  branches on every feature in $f \in B$ and tries to extend $A'$ with
  $f$ using a recursive call to itself with $A'$ replaced by $A'\cup
  \{f\}$ in \Cref{algLLCEXRecFive}. It then returns the best solution
  found by any of those recursive calls in \Cref{algLLCEXRecSix}.
  This completes the description of the algorithm, which can be easily
  seen to be correct using (1) and (2).

  We are now ready to analyse the runtime of the algorithm, i.e.,
  \Cref{alg:findLCEX}. First note that all operations in
  \Cref{alg:findLCEX} and \Cref{alg:findLCEXForRule} that are not recursive
  calls take time at most $\bigoh(\som{L})$. We start by 
  analysing the run-time of the function \Call{findLCEXForRule}{$L$,
    $e$, $r_j$}, which is at most $\bigoh(\som{L})$ times the number
  of recursive calls to the function
  \Call{findLCEXForRuleRec}{$L$, $e$, $r_j$, $A$}, which in turn can be easily
  seen to be at most $a^k$ since the function branches into at most
  $a$ branches at every call and the depth of the branching is at most $k$.
  Therefore, we obtain $\bigoh(a^k\som{L})$ as the total runtime of
  the function \Call{findLCEXForRule}{$L$, $e$, $r_j$}. Since this
  function is called at most $\som{L}$ times from the function
  \Call{findLCEX}{$L$, $e$}, this implies
  a total runtime of the algorithm of $\bigoh(a^k\som{L}^2)$.\fi
\end{proof}



The following lemma is now a natural extension of
\Cref{lem:dl-mlcex-branch} for ensembles of \DL{}s.
\begin{lemma}\label{lem:dle-mlcex-branch}
    Let $\MM \in \{\DSE,\DLE\}$. \MM{}-\MLCEX{} for $M \in \MM$
    can be solved in time $\bigoh(m^sa^k\som{M}^2)$, where $m$ is
    \termselem, $s$ is \enssize, $a$ is \termsize, and $k$ is \xpsize.
\end{lemma}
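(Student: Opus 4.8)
The plan is to lift the characterisation and the bounded‑depth branching behind \Cref{lem:dl-mlcex-branch} from a single list to a majority ensemble. Again it suffices to translate \DSE{} to \DLE{} (each decision set becomes a decision list with the same terms, exactly as at the start of the proof of \Cref{lem:dl-mlcex-branch}), so assume $M=\{L_1,\dots,L_s\}\in\DLE{}$. Write $c=M(e)$ and, for $A\subseteq\feat(M)$, let $e_A$ be $e$ with every feature in $A$ flipped. As already noted for \Cref{th:MLCEX-XP}, a cardinality‑minimum local contrastive explanation $A$ satisfies $M(e_A)\neq M(e)$, and conversely every $A$ with $M(e_A)\neq c$ is such an explanation; hence it is enough to compute a smallest $A$ with $|A|\le k$ and $M(e_A)\neq c$, or report none exists. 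The value $M(e_A)$ is fully determined by which rule of each $L_i$ classifies $e_A$. So for a tuple $\bar r=(r^{(1)},\dots,r^{(s)})$ with $r^{(i)}$ a rule of $L_i$ (there are at most $\prod_i|L_i|\le m^{s}$ such tuples, $m=$ \termselem), say that a set $A$ \emph{realises} $\bar r$ if for every $i$ the example $e_A$ satisfies the term $t^{(i)}$ of $r^{(i)}$ and falsifies every rule strictly before $r^{(i)}$ in $L_i$; then $L_i(e_A)=c(r^{(i)})$ for all $i$, so $M(e_A)$ is just the majority of the classes $c(r^{(i)})$. Call $\bar r$ \emph{good} if this majority differs from $c$ (checkable in time $\bigoh(s)$), and discard $\bar r$ outright if $\bigcup_i t^{(i)}$ is contradictory. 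By the same argument as in \Cref{lem:dl-mlcex-branch}, $A$ (with $|A|\le k$) is a local contrastive explanation iff it realises some good, non‑contradictory tuple, and a smallest one equals the minimum, over all such tuples, of the smallest $A$ realising it.

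For a fixed good tuple I would compute the smallest realising $A$ of size $\le k$ by running \Cref{alg:findLCEXForRule} across all lists simultaneously. Set the forced set $A_0=\bigcup_{i}\{f:(f=1-e(f))\in t^{(i)}\}$ — the features that must be flipped to satisfy every $t^{(i)}$ — and the \emph{protected} set $F^{\ast}=\bigcup_{i}\{f:f\text{ occurs in }t^{(i)}\}$, which must never be flipped beyond $A_0$ on pain of unsatisfying some $t^{(i)}$. Maintaining $A'\supseteq A_0$: if $|A'|>k$ fail; else, if some list $L_i$ has a rule $r$ strictly before $r^{(i)}$ that is still satisfied by $e_{A'}$, pick one such $(i,r)$, let $B$ be the features occurring in the term of $r$ that lie outside $A'\cup F^{\ast}$, fail if $B=\emptyset$, and otherwise branch over $f\in B$ recursing with $A'\cup\{f\}$; if no such $(i,r)$ exists, return $A'$. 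When the recursion returns a set it is a genuine realiser (each $t^{(i)}$ is satisfied because its non‑$A_0$ literals agree with $e$ and those features are protected, and every earlier rule is falsified by the loop condition), so every returned set is a valid contrastive explanation; conversely any realiser $A$ with $|A|\le k$ must contain $A_0$ and must have $A\setminus A_0$ disjoint from $F^{\ast}$ (flipping a protected feature not in $A_0$ would break some $t^{(i)}$), which lets the branching follow the flips inside $A$ down to a leaf with $A'\subseteq A$; hence the returned minimum is at most $|A|$, and taking $A$ to be the global optimum shows correctness.

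For the running time, each branching step adds one new feature to $A'$ and the recursion is cut off at $|A'|>k$, so its depth is at most $k$; a node branches into at most $|\text{term of }r|\le a=$ \termsize{} children, giving at most $a^{k}$ leaves and $\bigoh(a^k)$ nodes overall, each doing $\bigoh(\som{M})$ work to scan the lists for a violated constraint and to form $B$. That is $\bigoh(a^{k}\som{M})$ per good tuple; multiplying by the at most $m^{s}$ tuples (each pre‑screened in $\bigoh(s)$ time) and absorbing $s,k\le\som{M}$ into one extra $\som{M}$ factor yields the claimed $\bigoh(m^{s}a^{k}\som{M}^{2})$.

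The step I expect to take the most care is the optimality half of the branching argument, i.e.\ the invariant that no realiser ever flips a protected feature outside the forced set $A_0$; this is what makes it sound to run the single‑list routine of \Cref{alg:findLCEXForRule} across all lists in lockstep, and it also underlies the claim that a returned leaf is genuinely a realiser. Everything else — the reduction from \DSE{} to \DLE{}, the goodness/consistency pre‑checks, and the running‑time accounting — is routine bookkeeping that parallels the single‑list case in \Cref{lem:dl-mlcex-branch}.
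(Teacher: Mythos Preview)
Your proposal is correct and follows essentially the same approach as the paper: enumerate the at most $m^{s}$ rule tuples (one rule per list), filter by the majority condition, and for each surviving tuple run the single-list bounded-depth branching of \Cref{alg:findLCEXForRule} with the forced set $A_0$ and the protected set $F^{\ast}$ now taken as unions over all $s$ target terms. Your write-up is in fact slightly more careful than the paper's in that you explicitly discard tuples whose target terms are mutually contradictory and you spell out the key invariant (a realiser never flips a protected feature outside $A_0$) that justifies both soundness of returned sets and completeness of the branching.
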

\iflong\begin{proof}
  Since any \DSE{} can be easily translated into a \DLE{} without
  increasing the size of any term and where every ensemble element has
  at most one extra rule using the empty term, it suffices to show the lemma for~\DLE{}s.  
  The main ideas behind the algorithm, which is illustrated in \Cref{alg:findELCEX}, are similar to the ideas behind
  the algorithm used in \Cref{lem:dl-mlcex-branch} for a single \DL{}.

  \begin{algorithm}[tb]
    \caption{Algorithm used in \Cref{lem:dle-mlcex-branch} to compute a
      local contrastive explanation for example $e$ w.r.t. a \DLE{}
      $\dle=\{\dl_1,\dotsc,\dl_\ell\}$
      with $\dl_i=(r_1^i=(t_1^i,c_1^i),\dotsc,r_\ell^i=(t_\ell^i,c_\ell^i))$
      of size at most $k$ if such an explanation exists. The algorithm
      uses the function \textbf{findELCEXForRules}($\dle$, $e$, $(r_{j_1}^1,\dotsc,r_{j_\ell}^\ell)$) as a
      subroutine, which is illustrated in \Cref{alg:findELCEXForRules}.}\label{alg:findELCEX}
    \small
    \begin{algorithmic}[1]
      \INPUT \DLE{} $\dle=\{\dl_1,\dotsc,\dl_\ell\}$ with $\dl_i=(r_1^i=(t_1^i,c_1^i),\dotsc,r_\ell^i=(t_\ell^i,c_\ell^i))$,
      example $e$ and integer $k$ (global variable).
      \OUTPUT return a cardinality-wise minimum local contrastive
      explanation $A \subseteq \feat(\dle)$ with $|A|\leq k$ for $e$
      w.r.t. $\dle$ or \NULL{} if such an explanation does not exists.
      \Function{\textbf{findELCEX}}{$\dle$, $e$}
      \For{$o \in [k]$}
      \State $R^o \gets \SB r \in \dl_o \SE$
      \EndFor

      \State $A_b\gets \NULL$
      \For{$(r_{j_1}^1,\dotsc,r_{j_\ell}^\ell) \in R^1\times \dotsb
        \times R^\ell$}\label{algLELCEXOne}
      \State $n_{\neq}\gets |\SB o \in [\ell] \SM c_{j_o}^o\neq
      \dle(e)\SE|$
      \State $n_= \gets |\SB o \in [\ell] \SM c_{j_o}^o= \dle(e)\SE|$
      \If{$n_{\neq} > n_=$}\label{algLELCEXTwo}
      \State $A \gets$ \Call{findELCEXR}{$\dle$, $e$, $(r_{j_1}^1,\dotsc,r_{j_\ell}^\ell)$}
      \If{$A \neq \NULL$ and ($A_b=\NULL$ or $|A_b|>|A|$)}
      \State $A_b\gets A$
      \EndIf
      \EndIf
      \EndFor
      \State \Return $A_b$
      \EndFunction
    \end{algorithmic}
  \end{algorithm}

  \begin{algorithm}[tb]
    \caption{Algorithm used as a subroutine in \Cref{alg:findELCEX}
      to compute a smallest set $A$
      of at most $k$ features such that $e_A$ is classified by the
      rule $r_j^o$ for every \DL{} $\dl_o$.}\label{alg:findELCEXForRules}
    \small
    \begin{algorithmic}[1]
      \INPUT \DL{} \DLE{} $\dle=\{\dl_1,\dotsc,\dl_\ell\}$ with $\dl_i=(r_1^i=(t_1^i,c_1^i),\dotsc,r_\ell^i=(t_\ell^i,c_\ell^i))$,
      example $e$, rules $(r_{j_1}^1,\dotsc,r_{j_\ell}^\ell)$, and integer $k$ (global variable).
      \OUTPUT return a smallest set $A \subseteq \feat(L)$ with $|A|\leq k$
      such that $e_A$ is classified by $r_{j_o}^o$ for every \DL{} $\dl_o$ if such a set exists and
      otherwise return \NULL{}.
      \Function{\textbf{findELCEXR}}{$\dle$, $e$, $(r_{j_1}^1,\dotsc,r_{j_\ell}^\ell)$}
      \State $A_0 \gets \SB f \SM (f=1-e(f)) \in t_{j_o}^o \land o \in
      [\ell]\SE$
      \State \Return \Call{findELCEXRR}{$\dle$, $e$, $(r_{j_1}^1,\dotsc,r_{j_\ell}^\ell)$, $A_0$}
      \EndFunction
      \Function{\textbf{findELCEXRR}}{$\dle$, $e$, $(r_{j_1}^1,\dotsc,r_{j_\ell}^\ell)$, $A'$}
      \If{$|A'|>k$}\label{algLELCEXRecOne}
      \State \Return \NULL
      \EndIf
      \State $r_\ell^o\gets$ any rule with $\ell<j_o$ that
      is satisfied by $e_{A'}$ for $o \in [\ell]$
      \If{$r_\ell^o=\NULL{}$}
      \State \Return $A'$\label{algLELCEXRecTwo}
      \EndIf
      \State $F'\gets \SB f \SM (f=b) \in t_{j_o}^o \land o \in [\ell] \land b \in \{0,1\}\SE$
      \State $B \gets \SB f \SM (f=e_{A'}) \in t_\ell^o\SE\setminus (A'\cup F')$\label{algLELCEXRecThree}
      \If{$B=\emptyset$}
      \State \Return \NULL\label{algLELCEXRecFour}
      \EndIf
      \State $A_b\gets \NULL$
      \For{$f \in B$}
      \State $A\gets $\Call{findELCEXRR}{$\dle$, $e$, $(r_{j_1}^1,\dotsc,r_{j_\ell}^\ell)$, $A'\cup \{f\}$}\label{algLELCEXRecFive}
      \If{$A\neq \NULL$ and $|A|\leq k$}
      \If{($A_b==\NULL$ or $|A_b|>|A|$)}
      \State $A_b \gets A$
      \EndIf
      \EndIf
      \EndFor
      \Return $A_b$\label{algLELCEXRecSix}
      \EndFunction
    \end{algorithmic}
  \end{algorithm}

  Let $(\dle,e,k)$ be an instance of \DLE{}-\MLCEX{} \DLE{} with
  $\dle=\{\dl_1,\dotsc,\dl_\ell\}$ and
  $\dl_i=(r_1^i=(t_1^i,c_1^i),\dotsc,r_\ell^i=(t_\ell^i,c_\ell^i))$
  for every $i \in [\ell]$. First note that if $A$ is a local contrastive explanation
  for $e$ w.r.t. $\dle$, then there is an example $e'$ that differs
  from $e$ only on the features in $A$ such that $\dle(e)\neq
  \dle(e')$. Clearly $e'$ is classified by exactly one rule of every
  \DL{} $\dl_i$. The first idea behind \Cref{alg:findELCEX} is
  therefore to enumerate all possibilities for the rules that classify
  $e'$; this is done in \Cref{algLELCEXOne} of the algorithm. Clearly,
  only those combinations of rules are relevant that lead to a
  different classification of $e'$ compared to $e$ and this is ensured
  in \Cref{algLELCEXTwo} of the algorithm. For every such combination $(r_{j_1}^1,\dotsc,r_{j_\ell}^\ell)$
  of rules the algorithm then calls the subroutine
  \Call{findELCEXR}{$\dle$, $e$,
    $(r_{j_1}^1,\dotsc,r_{j_\ell}^\ell)$}, which is illustrated in
  \Cref{alg:findELCEXForRules}, and computes the smallest set $A$ of
  at most $k$ features such that $e_A$ is classified by the rule
  $r_{j_o}^o$ of $\dl_o$ for every $o \in [\ell]$. This subroutine
  and the proof of its correctness work very similar to the subroutine \textbf{findLCEXForRule}($L$,
  $e$, $r_j$) of \Cref{alg:findLCEXForRule} used in
  \Cref{lem:dl-mlcex-branch} for a single \DL{} and is therefore not
  repeated here. In essence the
  subroutine branches over all possibilities for such a set $A$.

  We are now ready to analyse the runtime of the algorithm, i.e.,
  \Cref{alg:findELCEX}. First note that the function
  \Call{findELCEX}{$\dle$, $e$} makes at most $m^s$ calls to the
  subroutine \Call{findELCEXR}{$\dle$, $e$,
    $(r_{j_1}^1,\dotsc,r_{j_\ell}^\ell)$} and apart from those calls
  all operations take time at most $\bigoh(\som{\dle})$. Moreover, the
  runtime of the subroutine \Call{findELCEXR}{$\dle$, $e$,
    $(r_{j_1}^1,\dotsc,r_{j_\ell}^\ell)$} is the same as the runtime
  of the subroutine \Call{findLCEXForRule}{$L$, $e$, $r_j$} given in
  \Cref{alg:findLCEXForRule}, i.e.,
  $\bigoh(a^k\som{\dle}^2)$. Therefore, we obtain
  $\bigoh(m^sa^k\som{\dle}^2)$ as the total run-time of the algorithm.
\end{proof}\fi

The following result now follows immediately from \Cref{lem:dle-mlcex-branch}.

\begin{corollary}\label{cor:DS-MLCEX-FPT-const-ens}
  Let $\MM \in \{\DS,\DL\}$. \MM-\MLCEX{}$(\termselem+\xpsize)$ is
  in \FPT{}, when \enssize{} is constant.
\end{corollary}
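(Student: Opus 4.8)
The plan is to reuse the bounded-depth branching algorithm underlying \Cref{lem:dle-mlcex-branch} but to drive its branching factor down from the term size to a function of the number of terms. Recall that, after fixing a tuple of target rules whose joint satisfaction flips the classification, that algorithm repeatedly picks a currently-satisfied forbidden rule $r_\ell$ and branches over its set $B$ of flippable agreeing features, flipping one of them to falsify $r_\ell$; since the depth is capped at $k=\xpsize$, the search tree has at most $|B|^{k}\le a^{k}$ leaves, where $a=\termsize$. This suffices for the parameter $\termsize+\xpsize$, but for the parameter $\termselem+\xpsize$ the factor $a^{k}$ is merely \XP{} in $\xpsize$, because $a$ is part of the input rather than a parameter. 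So the first thing I would do is replace this branching by one whose fan-out depends only on $m=\termselem$ (and on $s=\enssize$, which is held constant).

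The idea is to branch over equivalence classes of features instead of individual features. I would call two features equivalent at a given node of the search if they occur in exactly the same rules across the at most $s$ ensemble elements and, in each such rule, their literal currently agrees (respectively disagrees) with the working example in the same way. There are at most $sm$ rules in total, and each rule assigns a feature one of three states (absent, present-and-agreeing, present-and-disagreeing), so there are at most $3^{sm}$ classes. At each node I would branch over the at most $3^{sm}$ classes meeting $B$, taking one representative per class; with $s$ constant this caps the fan-out at $g(m)=3^{sm}$, a function of the parameter $m$ alone.

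The step that needs care is soundness, via an exchange argument. If two features $f,f'$ share a class, then flipping either one toggles agreement only on that feature's own literals, and since $f$ and $f'$ lie in the same rules with the same prior agreement, both flips yield the same set of satisfied rules; the two residual instances are then isomorphic, with the unused twin playing the symmetric role, so they admit local contrastive explanations of the same minimum size. Hence retaining one representative per class preserves optimality. Cascading effects --- where falsifying one rule re-satisfies a previously falsified one --- are absorbed automatically, because the re-check performed at every node (inherited from \Cref{lem:dle-mlcex-branch}) recomputes the satisfied-rule set, and therefore the class structure, from scratch at each recursive call.

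Putting the pieces together, the search tree for each target tuple has at most $g(m)^{k}=3^{smk}$ leaves, the per-node cost is polynomial, and there are at most $m^{s}$ target-rule tuples, so the total time is $m^{s}\,3^{smk}\,\som{M}^{\bigoh(1)}$, which is $f(m+k)\cdot\som{M}^{\bigoh(1)}$ once $s$ is constant. This places \MM-\MLCEX{} in \FPT{} parameterized by $\termselem+\xpsize$ whenever \enssize{} is constant. I expect the main obstacle to be making the exchange argument airtight over the entire cascading recursion rather than at a single branching step --- specifically, confirming that equal-class features stay interchangeable after intervening flips, which is exactly what the fresh recomputation of classes at each node is there to guarantee.
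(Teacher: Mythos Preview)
The paper's own proof is a single sentence: the corollary ``follows immediately from \Cref{lem:dle-mlcex-branch}.'' You are right that that lemma gives running time $\bigoh(m^{s}a^{k}\som{M}^{2})$, which for constant $s=\enssize$ is \FPT{} in $a+k=\termsize+\xpsize$ but \emph{not} in $m+k=\termselem+\xpsize$, since $a$ is unbounded. In fact the statement almost certainly contains a typo: the row of \Cref{fig:DSDLresults} that cites this corollary has $\termsize$ marked ``p'' and $\termselem$ marked ``\N'', so the intended parameterization is $\termsize+\xpsize$, for which \Cref{lem:dle-mlcex-branch} is indeed immediately sufficient and no further argument is needed.

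Your proposal therefore targets a different statement than the one the paper actually proves. It is worth noting that the statement you are after (\FPT{} in $\termselem+\xpsize$ for constant $\enssize$) also follows without any new work: \Cref{cor:ds-ensa} already gives \FPT{} in $\enssize+\termselem$ for \MLCEX{}, and with $\enssize$ held constant this is \FPT{} in $\termselem$ alone, hence a fortiori in $\termselem+\xpsize$. So under either reading of the corollary, the result is immediate from an earlier statement in the paper.

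That said, your equivalence-class branching idea is a genuinely different route and appears sound. Features that occur in exactly the same rules with the same current agreement status are interchangeable: flipping either yields the same set of satisfied rules, and the two residual instances are related by the swap $f\leftrightarrow f'$, so their optimal extensions have equal size. Recomputing the classes at each node handles cascading effects. Compared to \Cref{cor:ds-ensa}, which relies on the \MSOE{}/rank-width meta-theorem, your argument is elementary and gives an explicit bound of the form $m^{s}\cdot 3^{smk}\cdot\som{M}^{\bigoh(1)}$. The one place to be careful is when a single term contains literals on both $f$ and $f'$: there the swap does not preserve individual literal agreement, but it does preserve rule satisfaction (since both literals toggle roles symmetrically), which is all you need.
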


\section{Hardness Results}\label{sec:hardness}

In this section, we provide our algorithmic lower
bounds. We start by showing a close connection between the complexity
of all of our explanation problems to the following two problems. As
we will see the hardness of finding explanations comes from the
hardness of deciding whether or not a given model classifies all
examples in the same manner. More specifically, from the \HOM{}
problem defined below, which asks whether a given model has an example that is classified
differently from the all-zero example, i.e., the example being $0$ on
every feature. We also need the \kHOM{} problem, which is a parameterized
version of \HOM{} that we use to show parameterized hardness results
for deciding the existence of local contrastive explanations.

In the following, let $\MM$ be a model type.
\pbDef{$\MM{}$-\textsc{Homogeneous (\HOM{}})}{A model $M \in \MM{}$.}{
Is there an example $e$ such that $M(e) \neq M(e_0)$, where $e_0$ is the
all-zero example?}

\pbDef{$\MM$-p-\textsc{Homogeneous (\kHOM{}})}{A model $M \in \MM{}$ and
  integer $k$.}{
Is there an example $e$ that sets at most $k$ features to $1$ such that $M(e) \neq M(e_0)$, where $e_0$ is the all-zero example?}

The following lemma now shows the connection between \HOM{} and the
considered explanation problems.

\begin{lemma}\label{lem:HOM-reduction}
  Let $M \in \MM$ be a model, $e_0$ be the all-zero example, and let $c=M(e_0)$.
  The following statements are equivalent:  
  \begin{enumerate}[(1)]
  \item $M$ is a no-instance of \MM{}-\HOM{}.
  \item The empty set is a solution for the instance $(M,e_0)$ of
    \MM{}-\SMLAEX{}.
  \item $(M,e_0)$ is a no-instance of \MM{}-\SMLCEX{}.    
  \item The empty set is a solution for the instance $(M,c)$ of \MM{}-\SMGAEX{}.
  \item The empty set is a solution for the instance $(M,1-c)$ of
    \MM{}-\SMGCEX{}.
  \item $(M,e_0,0)$ is a yes-instance of \MM{}-\MLAEX{}.
  \item $(M,e_0)$ is a no-instance of \MM{}-\MLCEX{}.    
  \item $(M,c,0)$ is a yes-instance of \MM{}-\MGAEX{}.
  \item $(M,1-c,0)$ is a yes-instance of \MM{}-\MGCEX{}.
  \end{enumerate}
\end{lemma}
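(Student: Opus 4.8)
The plan is to reduce everything to a single condition. Set $c=M(e_0)$, and observe that each of (1)--(9) will be shown equivalent to
\[
(\star)\qquad M(e)=c \text{ for every example } e \text{ over } \feat(M),
\]
i.e.\ to $M$ being the constant function~$c$; once this is established for every $i\in\{1,\dots,9\}$, the mutual equivalence of the nine statements follows at once. Before carrying out the unfoldings I would isolate two elementary facts. First, every example agrees with $e_0$ on the empty set of features and agrees with the empty partial assignment, the empty set is (trivially) subset-minimal among sets of features, and the empty set/empty partial assignment has size~$0$. Second, a local contrastive explanation for $e_0$ w.r.t.\ $M$ is closed under taking supersets and always has size at most $|\feat(M)|$, so such an explanation exists if and only if a subset-minimal one does, and this happens exactly when $M$ is \emph{not} constant: if $M(e')\neq M(e_0)$ for some $e'$ then $A=\{f\in\feat(M):e'(f)\neq e_0(f)\}$ is a local contrastive explanation, and conversely if $M\equiv c$ then no example is misclassified, so no such $A$ exists.

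With these facts in hand the nine equivalences become direct translations through the definitions in Section~\ref{sec:problems}. Statement~(1) is $(\star)$ restated, by the definition of \HOM{}. For (2) and (6), ``$\emptyset$ is a local abductive explanation for $e_0$ w.r.t.\ $M$'' unfolds---using that every example agrees with $e_0$ on $\emptyset$---to ``$M(e_0)=M(e')$ for all $e'$'', which is $(\star)$; the extra requirement of subset-minimality, resp.\ of size $\le 0$, is automatic for $\emptyset$. For (4) and (8), ``the empty partial assignment is a global abductive explanation for $c$'' unfolds to ``$M(e)=c$ for all $e$'', i.e.\ $(\star)$. For (5) and (9), ``the empty partial assignment is a global contrastive explanation for $1-c$'' unfolds to ``$M(e)\neq 1-c$ for all $e$'', which over $\{0,1\}$ is precisely $(\star)$. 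Finally, (3) and (7) both assert exactly that no local contrastive explanation for $e_0$ w.r.t.\ $M$ exists---for (7) reading the \MLCEX{} instance with the trivial size bound $|\feat(M)|$, which every contrastive explanation respects---and the second elementary fact identifies this with $(\star)$.

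There is no real obstacle here; the proof is a sequence of definition chases. The only points worth a sentence of care are (i) noting that the ``subset-minimal'' and ``size $\le 0$'' qualifiers are vacuous for the empty set and the empty partial assignment, so that the \SMLAEX{}/\MLAEX{}, \SMGAEX{}/\MGAEX{}, and \SMGCEX{}/\MGCEX{} variants collapse to the same condition, and (ii) handling (3) and (7) through the upward-closure and size bound for local contrastive explanations, so that ``some explanation exists'', ``a minimal explanation exists'', and ``$M$ is non-constant'' all coincide, whence their negations coincide with $(\star)$.
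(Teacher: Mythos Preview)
Your proposal is correct and follows precisely the same route as the paper: both reduce each of (1)--(9) to the single condition that $M(e)=c$ for every example $e$. The paper's proof is in fact just this one sentence, so your write-up is a more detailed version of the same argument.
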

\begin{proof}
  It is easy to verify that all of the statements (1)--(9) are
  equivalent to the following statement (and therefore equivalent to
  each other): $M(e)=M(e_0)=c$ for every example $e$. 
\end{proof}
While \Cref{lem:HOM-reduction} is sufficient for most of our hardness
results, we also need the following lemma to show certain
parameterized hardness results for deciding the existence of local
contrastive explanations.
\begin{lemma}\label{lem:kHOM-reduction}
    Let $M \in \MM$ be a model and let $e_0$ be the all-zero example.
    The following problems are equivalent:  
    \begin{enumerate}[(1)]
    \item $(M,k)$ is a yes-instance of \MM{}-\kHOM{}.
    \item $(M,e_0,k)$ is a yes-instance of \MM{}-\MLCEX{}.
    \item $(M,e_0,k)$ is a yes-instance of \MM{}-\MLCEX{}.
  \end{enumerate}
\end{lemma}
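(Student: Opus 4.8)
The plan is to mimic the proof of \Cref{lem:HOM-reduction}: I would isolate a single combinatorial statement that each of items (1)--(3) is visibly equivalent to, so that the lemma follows by transitivity. The natural candidate is: \emph{there exist a set $A \subseteq \feat(M)$ with $|A| \le k$ and an example $e'$ that agrees with $e_0$ on $\feat(M) \setminus A$ and satisfies $M(e') \neq M(e_0)$.} Because $e_0$ is the all-zero example, an example $e'$ agreeing with $e_0$ outside $A$ is exactly an example whose set of features set to $1$ is contained in $A$; conversely, the set of features set to $1$ by any example $e'$ with $M(e') \neq M(e_0)$ is itself such a set $A$. Hence this displayed statement is merely a reformulation of statement (1), i.e.\ of $(M,k)$ being a yes-instance of $\MM$-\kHOM{}.

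First I would prove $(1) \Rightarrow (2)$. Given a witness $e'$ for \kHOM{}, i.e.\ an example setting at most $k$ features to $1$ with $M(e') \neq M(e_0)$, let $A$ be the set of features that $e'$ sets to $1$. Then $|A| \le k$, $e'$ differs from $e_0$ only on the features in $A$, and $M(e') \neq M(e_0)$, so by definition $A$ is a local contrastive explanation for $e_0$ w.r.t.\ $M$ of size at most $k$; thus $(M, e_0, k)$ is a yes-instance of $\MM{}$-\MLCEX{}. For $(2) \Rightarrow (1)$, a local contrastive explanation $A$ for $e_0$ with $|A| \le k$ comes with an example $e'$ differing from $e_0$ only on $A$ and satisfying $M(e') \neq M(e_0)$; since $e_0$ is all-zero, $e'$ sets to $1$ only features in $A$, hence at most $k$ features in total, giving a witness for \kHOM{}.

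For the remaining equivalence (the third item), if it is literally the same statement as (2) there is nothing to prove; in the intended reading it is the subset-minimal counterpart, and then I would invoke the standard monotonicity observation already used for \DT{}s: every local contrastive explanation of size at most $k$ contains an inclusion-wise minimal local contrastive explanation, which is then also of size at most $k$, and an inclusion-wise minimal one is in particular a local contrastive explanation of size at most $k$. Hence a size-$\le k$ local contrastive explanation for $e_0$ exists iff a subset-minimal one of size $\le k$ does, closing the chain.

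I do not anticipate a real obstacle: the whole argument is an unwinding of the definitions of \LCEX{} and \kHOM{}, entirely parallel to \Cref{lem:HOM-reduction}. The one point deserving a moment's care is the dictionary between ``flipping $e_0$ on $A$'' and ``setting the features in $A$ to $1$'': these coincide precisely because $e_0$ is the all-zero example, and it is exactly this coincidence that makes the parameter $k$ (a bound on the number of $1$'s) line up with $|A|$ (the size of the explanation). Once that is spelled out, each implication is immediate.
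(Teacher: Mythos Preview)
Your proposal is correct and takes essentially the same approach as the paper: the paper's proof simply states that all the items are equivalent to the single statement ``there is an example $e$ that sets at most $k$ features to $1$ such that $M(e)\neq M(e_0)$,'' which is exactly the combinatorial statement you isolate. Your write-up is more detailed (and you prudently handle the case that item~(3) is meant to be the subset-minimal variant), but the underlying argument is identical.
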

\iflong\begin{proof}
    The lemma follows because both statements are equivalent to
    the following statement:
    There is an example $e$ that sets at most $k$ features to $1$ such
    that $M(e)\neq M(e_0)$. 
\end{proof}\fi

We will often reduce from the following problem, which is well-known
to be \NPh{} and also \Wh{1} parameterized by $k$~\cite{DowneyFellows13}.

\pbDef{{\sc Multicoloured Clique} (\MCC)}
{A graph $G$ with a proper $k$-colouring of $V(G)$.}
{Is there a clique of size $k$ in $G$?}

  \newcommand{\dup}{set-modelling}
  \newcommand{\dupa}{(a)-\dup}
  \newcommand{\dupab}{(a,b)-\dup}
  \newcommand{\dupc}{(2)-\dup}
  \newcommand{\dupcc}{(2,1)-\dup}
  \newcommand{\pies}{subset-modelling}
  \newcommand{\piesa}{(a)-\pies}
  \newcommand{\piesab}{(a,b)-\pies}
  \newcommand{\piesc}{(2)-\pies}
  \newcommand{\piescc}{(2,1)-\pies}


\subsection{A Meta-Theorems for Models Explanations}\label{ss:meta_hardness}
    We now introduce the notions of \pies{} and \dup{}, which are structural properties of classes of models.  
    Intuitively, these properties capture whether certain canonical models can be efficiently constructed within a given class.
    In \Cref{lem:meq_dt,lem:mss_ds_dl},  
    we show that all the considered model classes satisfy at least one of these properties.  
    Moreover, together with the
    hardness results in
    \Cref{lem:new_wh1,lem:new_wh2,lem:new_ens_wh1}, these properties 
    account for most, but not all, of the hardness results presented in this paper.

  Let $\FFF$ be a family of sets of features and let $c \in \{0,1\}$.
  Let $M_{\FFF,c}^=$ be a model such that
  for every example $e$
  with $\bigcup \FFF \subseteq F(e)$,
  $M_{\FFF,c}^=$ $c$-classifies $e$ if and only if 
  there exists $F \in \FFF$
  such that $e(f) = 1 \Leftrightarrow f \in F$ for every feature $f \in (\bigcup \FFF)$.
  Let $M_{\FFF,c}^\subseteq$ be a model such that
  for every example $e$ with $\bigcup \FFF \subseteq F(e)$,
  $M_{\FFF,c}^\subseteq$ $c$-classifies $e$ if and only if 
  there exists $F \in \FFF$ 
  such that $e(f) = 1$ for every $f \in F$.

  We say that class of models $\MM$ is \emph{\dupab}
  if and only if 
  for every 
   $c \in \{0,1\}$ and
   every family $\FFF$ of set of features
  such that $\max_{F \in \FFF} |F| \leq a$ and $|\FFF|\leq b$,
  we can construct $M \in \MM$
  in time $\poly(a + b)$,
  such that for every example $e$ with $\bigcup \FFF \subseteq F(e)$,
  $M(e) = M_{\FFF,c}^=(e)$.
  We say that a class of models $\MM$ is \emph{\dupa}
  if and only if for every $b$,  $\MM$ is \emph{\dupab}.
  We say that a class of models $\MM$ is \emph{\dup}
  if and only if for every $a$,  $\MM$ is \emph{\dupa}.
  We analogously define \emph{\piesab}, \emph{\piesa} and
  \emph{\pies} using $M_{\FFF,c}^\subseteq$ instead of $M_{\FFF,c}^=$.

\begin{lemma}\label{lem:new_wh2}
  Let $\MM^=$ and $\MM^\subseteq$ be classes of models such that $\MM^=$ is \dup{} and $\MM^\subseteq$ is \pies{}.
  Let $\PP \in \{\MLAEX, \MGAEX, \MGCEX, \MLCEX\}$. 
  Then the following hold:
  \begin{itemize}
      \item $\MM^{=}$-\MLAEX{} is \NPh{};
      \item $\MM^{=}$-\MLAEX{}$(\xpsize{})$ is \Wh{2};
      \item $\MM^{\subseteq}$-$\PP$ is \NPh{};
      \item $\MM^{\subseteq}$-$\PP(\xpsize{})$ is \Wh{2}.
  \end{itemize}
\end{lemma}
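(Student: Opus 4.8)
The plan is to reduce from \textsc{Hitting Set}: given a family $\FFF$ of non-empty sets over a universe $U$ and an integer $k$, decide whether there is a set $A\subseteq U$ with $|A|\le k$ meeting every member of $\FFF$. This problem is \NPh{}, and \Wh{2} when parameterized by the solution size $k$~\cite{DowneyFellows13}. Crucially the set sizes are unbounded, which is precisely why the meta-theorem needs \dup{}/\pies{} to hold for \emph{every}~$a$. One first discards the degenerate instances ($\FFF=\emptyset$, or $\emptyset\in\FFF$) in polynomial time, so that afterwards $\FFF\neq\emptyset$ and all sets are non-empty. In each reduction below the image model is built in polynomial time from the assumed \dup{} or \pies{} property applied to $\FFF$, and the parameter $k$ is passed unchanged to \xpsize{}; hence it suffices to choose, for each problem, a base example (or target class) whose explanations are \emph{exactly} the hitting sets of $\FFF$. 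This immediately makes minimum explanations correspond to minimum hitting sets, so both \NP{}-hardness and \W{2}-hardness transfer.

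For \dup{} classes and \MLAEX{}, take $M$ to be a model guaranteed by \dup{} for $\FFF$ and class $0$; as features outside $\bigcup\FFF$ are ignored by $M$ we may assume $\feat(M)=\bigcup\FFF$, so $M$ classifies as $0$ exactly the examples $e_F$ with $e_F(f)=1\Leftrightarrow f\in F$ for some $F\in\FFF$, and in particular $M(e_0)=1$ for the all-zero example $e_0$. Output $(M,e_0,k)$. A set $A\subseteq\feat(M)$ is a local abductive explanation for $e_0$ iff every example agreeing with $e_0$ on $A$ is classified $1$; the only examples that can break this are the $e_F$, and $e_F$ agrees with $e_0$ on $A$ exactly when $F\cap A=\emptyset$. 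Hence $A$ is a local abductive explanation iff $A$ meets every member of $\FFF$, and $(M,e_0,k)$ is a yes-instance of \MLAEX{} iff $\FFF$ has a hitting set of size at most $k$, giving the \NP{}- and \W{2}-hardness claims for $\MM^{=}$.

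For \pies{} classes, take $M$ guaranteed by \pies{} for $\FFF$ and class $0$, so $M$ classifies an example as $0$ iff its set of $1$-features contains some $F\in\FFF$ (again assume $\feat(M)=\bigcup\FFF$). For each of the four problems we pick the instance so that ``$M$-value $1$'' means ``the $1$-features avoid every $F\in\FFF$'', forcing an abductive/contrastive explanation to hit every $F$:
\begin{itemize}
\item \MLAEX{}: output $(M,e_0,k)$; an example that is $0$ on $A$ has its $1$-features inside $\feat(M)\setminus A$, which contains some $F\in\FFF$ iff $A$ misses $F$, so the local abductive explanations for $e_0$ are exactly the hitting sets of $\FFF$.
\item \MLCEX{}: output $(M,e_1,k)$ where $e_1$ is the all-one example (note $M(e_1)=0$ since $\FFF\neq\emptyset$); any $e'$ differing from $e_1$ only on $A$ has $1$-features $\supseteq\feat(M)\setminus A$, so $M(e')=0$ unless $A$ meets every $F\in\FFF$, whence the local contrastive explanations for $e_1$ are exactly the hitting sets of $\FFF$.
\item \MGAEX{}: output $(M,1,k)$; a partial example $\tau$ is a global abductive explanation for class $1$ iff every extension of $\tau$ is classified $1$, which (inspecting the extension with the largest $1$-feature set, $\feat(M)\setminus\tau^{-1}(0)$) holds iff every $F\in\FFF$ meets $\tau^{-1}(0)$, and a smallest such $\tau$ has $\tau^{-1}(1)=\emptyset$, so its size equals the minimum hitting set size.
\item \MGCEX{}: output $(M,0,k)$, using that a global contrastive explanation for class $0$ is the same object as a global abductive explanation for class $1$, reducing to the previous item.
\end{itemize}
In every case the reduction runs in polynomial time with \xpsize{}$=k$, so \NP{}-hardness and \W{2}-hardness of \textsc{Hitting Set} carry over to $\MM^{\subseteq}$.

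The one genuinely delicate point — and the only place where the four \pies{} reductions differ — is the choice of base example / target class. For \pies{} classes the ``containment'' direction is cheap: an example is made $0$-classified just by turning on the features of a single $F\in\FFF$, so with the naive base example $e_0$ the problems \MLCEX{} and the class-$0$ version of \MGAEX{} are polynomial-time solvable, which is why we must switch to the all-one example $e_1$ (respectively to class $1$) to expose the hardness. Once the right instance is fixed, the remaining work — checking that its explanations coincide \emph{exactly} with the hitting sets of $\FFF$, and hence that minimum explanations are minimum hitting sets — is routine, and the degenerate cases $\FFF=\emptyset$ and $\emptyset\in\FFF$ are handled by the trivial preprocessing noted above.
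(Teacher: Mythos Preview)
Your proposal is correct and follows essentially the same approach as the paper: both reduce from \textsc{Hitting Set}, build the canonical model $M^{=}_{\FFF,c}$ or $M^{\subseteq}_{\FFF,c}$ over the feature set $\bigcup\FFF$, and argue that the explanations of the appropriately chosen base instance coincide with the hitting sets of~$\FFF$. The only cosmetic difference is that you set the distinguished class to~$0$ whereas the paper sets it to~$1$, so the roles of the classes are swapped throughout (your \MGAEX{} instance $(M,1,k)$ corresponds to the paper's $(M^{\subseteq}_{\BBB,1},0,k)$, your \MLCEX{} instance $(M,e_1,k)$ to the paper's $(M^{\subseteq}_{\BBB,1},e_B,k)$, etc.); the underlying monotonicity argument and the choice of the all-one example for \MLCEX{} are identical.
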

\begin{proof}    

  We provide a parameterized reduction from the \textsc{Hitting Set}
  problem, which is well-known to be \NPh{} and \Wh{2}
  parameterized by the size of the
  solution~\cite{DowneyFellows13}. That is, given a family $\FFF$
  of sets over a universe $U$ and an integer $k$, the
  \textsc{Hitting Set} problem is to decide whether $\FFF$ has a
  \emph{hitting set} of size at most $k$, i.e., a subset $S \subseteq
  U$ with $|S|\leq k$ such that $S\cap F\neq \emptyset$ for every $F
  \in \FFF$.
  Let $B$ be the set of features, containing one binary feature 
  $f_u$ for each $u \in \bigcup \FFF$. 
  Define 
  $\BBB = \SB \SB f_u\SM  u \in F \SE \SM  F \in \FFF \SE$
  as a family of subsets of $B$. 
  Let $M^=_{\BBB,1} \in \MM^=$ 
  ($M^\subseteq_{\BBB,1} \in \MM^\subseteq$)
  be defined over set of features $B$.
  Note that we can construct 
  $M^=_{\BBB,1}$ ($M^\subseteq_{\BBB,1}$)
  in $\poly(||\FFF||)$ time, because 
  $\MM^=$ is \dup{} ($\MM^\subseteq$ is \pies{}).

  We claim that $(M^=_{\BBB,1},e_\emptyset,k)$, $(M^\subseteq_{\BBB,1},e_\emptyset,k)$,
  $(M^\subseteq_{\BBB,1},e_B,k)$, $(M^\subseteq_{\BBB,1},0,k)$,
  $(M^\subseteq_{\BBB,1},1,k)$ are instances of \MLAEX{},
  \MLAEX{}, \MLCEX{}, \MGAEX{}, and \MGCEX{}, respectively, that are
  equivalent to a
  given instance $(U,\FFF,k)$ of \textsc{Hitting Set}.

  Let $e_{B'}$ be the example 
  over $B$, where only features in $B'\subseteq B$ are positive,
  i.e., for each $f \in B$, 
  $e_{B'}(f) = 1$ if and only if $f \in B'$.
  In particular, $e_\emptyset$ is the all-zero example and 
  $e_B$ is the all-one example.
  For every pair of examples $e,e' \in E(B)$,
  we say that $e \leq e'$ if and only if
  for each $f \in B$, $e(f) \leq e'(f)$.
  
  Let $E_\BBB = \SB e_{B'} \SM B' \in \BBB \SE$.
  Observe that:
  \begin{enumerate}
  \item[(1)] For every example $e \in E(B)$, the following holds:
    \begin{itemize}
    \item $M^=_{\BBB,1}(e)=1$ if and only if $e \in E_\BBB$ and
    \item $M^\subseteq_{\BBB,1}(e)=1$ if and only if
      there exists $e' \in E_\BBB$ with $e \geq e'$.
    \end{itemize}
  \item[(2)] For every two examples $e,e' \in E(B)$, it holds that
    if $e\leq e'$, then $M^\subseteq_{\BBB,1}(e) \leq M^\subseteq_{\BBB,1}(e')$.
  \end{enumerate}

  We start by showing that $(U,\FFF,k)$ is a yes-instance of
  \textsc{Hitting Set} if and only if $(M^=_{\BBB,1},e_\emptyset,k)$/$(M^\subseteq_{\BBB,1},e_\emptyset,k)$ is a yes-instances of \MLAEX{}.
  Towards showing the forward direction, let $S_U$ be a hitting set for
  $\FFF$ of size at most $k$ and let $S = \SB f_u \SM u \in S_U \SE$.
  Note that for every $e \in E(B)$,
  $e \leq e_{B\setminus S}$
  if and only if 
  $e$ agrees with $e_\emptyset$ on all features in $S$.
  Since $S_U$ is a hitting set,
  we obtain that
  for every $e \in \SB e_{B'} \SM B' \in \BBB \SE$,
  we have
  $e \not\leq e_{B\setminus S}$.
  Therefore, $S$ is a local abductive explanation for $e_\emptyset$ w.r.t. $M^=_{\BBB,1}$.
  Additionally, using (2), we conclude that 
  $S$ is a local abductive explanation for $e_\emptyset$ w.r.t. $M^\subseteq_{\BBB,1}$.

  Towards showing the reverse direction, let $S$ be a local abductive  
  explanation of size at most $k$ for $e_\emptyset$ w.r.t. $M^=_{\BBB,1}$.
  We claim that $S_U=\SB u \SM f_u \in S\SE$ is a hitting set for $\FFF$.
  Suppose otherwise, and let $F' \in \FFF$ be a set such that 
  $F'\cap S_U=\emptyset$.
  Let $B' = \SB f_u \SM u \in F'\SE$ then, 
  the example $e_{B'}$
  agrees with $e_\emptyset$ on every
  feature in $S$.
  However, from (1) we have
  $M^=_{\BBB,1}(e_{B'})=1\neq M^=_{\BBB,1}(e_{\emptyset})$,
  contradicting our assumption that $S$ is
  a local abductive explanation for $e_\emptyset$ w.r.t. $M^\subseteq_{\BBB,1}$.
  By an analogous argument, we conclude that $S$ is also a 
  local abductive explanation for $e_\emptyset$ w.r.t.
  $M^\subseteq_{\BBB,1}$.

  Next, we show the equivalence between the instances
  $(M^\subseteq_{\BBB,1},e_\emptyset,k)$ and
  $(M^\subseteq_{\BBB,1},e_B,k)$ of \MLAEX{} and \MLCEX{}, respectively.
  That is, we claim that  
  $S$ is a local abductive explanation for $e_\emptyset$ w.r.t. $M^\subseteq_{\BBB,1}$  
  if and only if  
  $S$ is a local contrastive explanation for $e_B$ w.r.t. $M^\subseteq_{\BBB,1}$.  
  To prove the forward direction,  
  observe that  
  $e_{B\setminus S}$ agrees with $e_\emptyset$ on all features in $S$.  
  Since $M^\subseteq_{\BBB,1}(e_{B\setminus S}) = 0$  
  and $M^\subseteq_{\BBB,1}(e_B) = 1$,  
  it follows that $S$ satisfies the conditions for a local contrastive explanation.  
  For the reverse direction,  
  given that $M^\subseteq_{\BBB,1}(e_{B\setminus S}) = 0$ and using (2),  
  we conclude that for every example $e \in E(B)$,  
  if $e$ agrees with $e_\emptyset$ on all features in $S$,  
  then $M^\subseteq_{\BBB,1}(e) = 0$.  
  Thus, $S$ also satisfies the conditions for a local abductive explanation.

  We now show the case of \MGAEX{}. Namely, we claim that $S$ is a
  local abductive explanation for $e_\emptyset$
  w.r.t. $M^\subseteq_{\BBB,1}$ if and only if there is an assignment $\tau:
  S\rightarrow\{0,1\}$ that is a global abductive
  explanation for $0$ w.r.t. $M^\subseteq_{\BBB,1}$.

  Towards showing the forward direction, note that setting $\tau(s)=0$
  for every $s \in S$ shows the result.
  To prove the reverse direction,
  let $\tau:S\rightarrow \{0,1\}$ be 
  a global abductive explanation. First note that, because of (2), we can
  assume that $\tau(s)=0$ for every $s \in S$. 
  Therefore, $S$ is  local abductive explanation.
  
  The same reasoning applies for the case of 
  global contrastive explanations for $1$ w.r.t. $M^\subseteq_{\BBB,1}$.
\end{proof}

While \Cref{lem:new_wh2} addresses the hardness of explanation problems for single models and specific types of explanations,
\Cref{lem:new_ens_wh1,lem:new_wh1} deal with ensembles of models and the new problems \HOM{} and \kHOM{}.  
These results, combined with \Cref{lem:HOM-reduction,lem:kHOM-reduction},  
will later allow us to derive hardness results for explanation problems in the ensemble setting.
Moreover, \Cref{lem:new_ens_wh1,lem:new_wh1} establish hardness results under more restricted variants of \dup{} and \pies{},  
which will be used to ensure that certain model parameters (such as \sizeelem{}) remain bounded by constants.

\begin{lemma}\label{lem:new_ens_wh1}
    Let $\MM^=$ and $\MM^\subseteq$ be classes of models  
    where $\MM^=$ is \dupc{} and $\MM^\subseteq$ is \piesc{}.
    Let $g \in \{=,\subseteq\}$.
    Then the following hold:
  \begin{itemize}
      \item $\MM^g_\MAJ$-\HOM{} is \NPh{};
      \item $\MM^g_\MAJ$-\HOM{}$(\enssize{})$ is \Wh{1};
      \item $\MM^g_\MAJ$-\kHOM{}$(\enssize{} + k)$ is \Wh{1};
  \end{itemize}
\end{lemma}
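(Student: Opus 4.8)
The plan is to give, for each $g\in\{=,\subseteq\}$, one polynomial-time reduction from \MCC{} --- which is \NPh{} and \Wh{1} parameterized by the clique size $k$ --- whose output ensemble has $O(k^2)$ members and for which the natural number-of-ones bound is exactly $k$; this settles all three items at once. First I would preprocess the input graph $G$ with colour classes $V_1,\dots,V_k$ by repeatedly deleting any vertex that misses a neighbour in some class $V_j$ (such a vertex lies in no multicoloured $k$-clique, so the answer is preserved), after which every vertex of $V_i$ has a neighbour in every $V_j$ with $j\ne i$; if this empties some class, I output a trivial no-instance. I introduce a feature $f_v$ for each surviving vertex $v$ plus a few dummy features, and I build an ensemble $\EEE$ consisting of \emph{constraint} models --- arranged so that an example $e$ satisfies all of them iff $\{v:e(f_v)=1\}$ is a multicoloured $k$-clique (dummy coordinates being irrelevant) --- together with \emph{padding} models.

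For $g={=}$ I take, for each pair $i<j$, the family $\FFF_{ij}=\{\{f_u,f_w\}:u\in V_i,\ w\in V_j,\ uw\in E(G)\}$ (all sets of size $2$, and $\bigcup\FFF_{ij}=V_i\cup V_j$ by the preprocessing) and the set-modelling model $M^{=}_{\FFF_{ij},1}\in\MM^{=}$, which accepts $e$ iff $e$ restricted to $V_i\cup V_j$ is exactly one edge with one endpoint in each class; if $e$ satisfies all $\binom{k}{2}$ of these, then every colour contributes exactly one chosen vertex and all cross-pairs are edges, so the chosen vertices form a multicoloured $k$-clique, and conversely. For $g=\subseteq$, since subset-modelling models are monotone, I would instead use, for each $i$, the model $M^{\subseteq}_{\{\{f_v\}:v\in V_i\},1}$ (``colour $i$ is used'') and, for each pair $i<j$, the model $M^{\subseteq}_{\FFF'_{ij},0}$ where $\FFF'_{ij}$ collects all pairs inside $V_i$, all pairs inside $V_j$, and all non-edges between $V_i$ and $V_j$ (``at most one chosen vertex of colour $i$ and of colour $j$, and no chosen cross non-edge''); these $k+\binom{k}{2}$ models jointly accept $e$ iff the chosen vertices form a multicoloured $k$-clique.

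The main obstacle is that a \emph{majority} of the constraint models is not their conjunction: an example violating just one constraint still satisfies a strict majority. I will overcome this by padding $\EEE$ with $r-1$ always-rejecting models, where $r$ is the number of constraints ($\binom{k}{2}$ for $g={=}$, and $k+\binom{k}{2}$ for $g=\subseteq$), so that the majority threshold becomes exactly $r$, i.e.\ ``all constraints hold''. Such models live inside $\MM^{g}$: for $g={=}$, letting $\FFF$ be \emph{all} subsets of size at most $2$ of a two-element set of fresh dummy features, $M^{=}_{\FFF,0}$ rejects every example (its restriction always matches a member of $\FFF$); for $g=\subseteq$, the model $M^{\subseteq}_{\{\emptyset,\{d\}\},0}$ rejects every example (every support contains $\emptyset$). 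Using a fresh dummy-feature set per padding model makes them pairwise distinct and distinct from the constraint models, so $\EEE$ is genuinely a set with $|\EEE|=2r-1=O(k^2)$. It then remains to check the routine facts that $\EEE$ rejects the all-zero example $e_0$ (every constraint rejects it), accepts the example that encodes a multicoloured $k$-clique, and rejects every other example whose vertex part is not a multicoloured $k$-clique. Hence $\EEE$ is a yes-instance of $\MM^{g}_{\MAJ}$-\HOM{} iff $G$ has a $k$-clique, which shows $\MM^{g}_{\MAJ}$-\HOM{} is \NPh{} and, since $|\EEE|=O(k^2)$, that $\MM^{g}_{\MAJ}$-\HOM{}$(\enssize{})$ is \Wh{1}.

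For \kHOM{} I would reuse the same ensemble $\EEE$ with the number-of-ones bound set to $k$: any example with at most $k$ ones that $\EEE$ accepts must satisfy all constraints, so its vertex part is a multicoloured $k$-clique, which already consumes all $k$ ones, forcing the example to be exactly the clique example with no dummy feature set. Thus $(\EEE,k)$ is a yes-instance of $\MM^{g}_{\MAJ}$-\kHOM{} iff $G$ has a $k$-clique, and as $|\EEE|+k=O(k^2)$ this is an fpt-reduction for the parameter $\enssize{}+k$, giving the last item. (Together with \Cref{lem:HOM-reduction,lem:kHOM-reduction}, these reductions are exactly what is needed afterwards to transfer hardness to the explanation problems.)
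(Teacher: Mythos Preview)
Your proposal is correct and follows essentially the same route as the paper: reduce from \MCC{}, encode ``the set $\{v:e(f_v)=1\}$ is a multicoloured $k$-clique'' as a conjunction of $O(k^2)$ constraint models built via (subset-/set-)modelling, and pad with $r-1$ always-rejecting models so that a majority among the $2r-1$ ensemble members coincides with the full conjunction; the paper differs only in inessential details (for $g=\subseteq$ it uses a single ``no non-edge is selected'' model $M^{\subseteq}_{\FFF,0}$ instead of your $\binom{k}{2}$ per-pair models, and it omits your preprocessing step, which in fact makes your $g={=}$ argument cleaner). One small slip to fix: for $g=\subseteq$ your parenthetical ``every constraint rejects $e_0$'' is false, since each $M^{\subseteq}_{\FFF'_{ij},0}$ \emph{accepts} the all-zero example; the ensemble still rejects $e_0$, but only because the number of accepting models is $\binom{k}{2}<r=k+\binom{k}{2}$.
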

\begin{proof}
  We give a parameterized reduction from \MCC{} that is also a
  polynomial-time reduction. That is, given an
  instance $(G,k)$ of \MCC{} with $k$-partition $(V_1,\dotsc,V_k)$ of $V(G)$, we
  will construct an ensemble $\EEE^=$ of type $\MM^{=}_\MAJ$  
  (an ensemble $\EEE^\subseteq$ of type $\MM^{\subseteq}_\MAJ$)
  with $|\EEE^=|= 2\binom{k}{2}-1$
  ($|\EEE^\subseteq|= 2k -1$)
  such
  that $G$ has a $k$-clique if and only if $\EEE^=$ 
  ($\EEE^\subseteq$) classifies at least
  one example positively.
  This will already suffice to show the stated
  results for $\MM{}^=_\MAJ$-\HOM{} ($\MM{}^\subseteq_\MAJ$-\HOM{}).
  Moreover, to show the results for
  $\MM{}^=_\MAJ$-\kHOM{} ($\MM{}^\subseteq_\MAJ$-\kHOM{}) 
  we additional show that $G$ has a $k$-clique if and only if
  $\EEE^=$ 
  ($\EEE^\subseteq$)
  classifies an example positively that sets at most $k$
  features to $1$.

  Both $\EEE^=$ and $\EEE^\subseteq$
  will use the set of features $F =\bigcup_{i \in [k]} F_i$, where $F_i = \SB f_{v} \SM v \in V_i\SE$.
  Moreover both $\EEE^=$ and $\EEE^\subseteq$
  will classify an example $e \in E(F)$ positively
  if and only if all the following:
  \begin{itemize}
      \item[(1)] For every $i \in [k]$, there exists at least one $f \in F_i$, such that $e(f) = 1$,
      \item[(2)] For every $i \in [k]$, there exists at most one $f \in F_i$, such that $e(f) = 1$,
      \item[(3)] For every $i,j \in [k]$, $v \in V_i$ and $u \in V_j$,
        if $i<j$ and $e(f_v) = e(f_u) = 1$, then $vu \in E(G)$.
  \end{itemize}

  For every $1\leq i<j \leq k$,
  let $\FFF_{i,j} = \SB \{f_v,f_u\} \SM uv \in E(G) \land v \in V_i \land v \in V_j \SE$.
  Note that, 
  for every example $e \in E(F)$,
  $e$ agrees with (1), (2) and (3) 
  if and only if for every $1\leq i<j \leq k$, 
  $M^=_{\FFF_{i,j},1}(e) = 1$.
  To construct $\EEE^=$ we will use $M^=_{\FFF_{i,j},1}$ for every $1\leq i<j \leq k$,
  and $\binom{k}{2}-1$ models $M^{=}_{\emptyset,0}$ that classifies all examples negatively.
  Therefore, $\EEE^=$ classifies an example $e \in E(F)$ positively
  if and only if
  all $M^=_{\FFF_{i,j},1}$ classifies $e$ positively.
  Therefore, $\EEE^=$ agrees with (1),(2) and (3).
  Moreover, $M^=_{\FFF_{i,j},1}$ and $M^{=}_{\emptyset,0}$
  are in $\MM^=$ and can therefore be constructed in $\poly(|V(G)|)$ time.

  For every $i \in [k]$,
  let $\FFF_i = \SB \{f\} \SM f \in F_i \SE$ 
  and
  let $\FFF = \SB \{f_v,f_u\} \SM uv \notin E(G) \SE$.
  Note that, 
  for every example $e \in E(F)$,
  $e$ agrees with (1) 
  if and only if for every $i \in [k]$, 
  $M^\subseteq_{\FFF_{i},1}(e) = 1$.
  Moreover,
  for every example $e \in E(F)$,
  $e$ agrees with (2) and (3) 
  if and only if 
  $M^{\subseteq}_{\FFF,0}(e) = 1$.
  To construct $\EEE^\subseteq$ we will use $M^\subseteq_{\FFF_{i},1}$ for every $i \in [k]$,
  $M^{\subseteq}_{\FFF,0}$ 
  and $k$ models $M^{\subseteq}_{\emptyset,0}$
  that classifies all examples negatively.
  Therefore, $\EEE^\subseteq$ classifies an example $e \in E(F)$ positively
  if and only if
  $M^{\subseteq,\bot}_\FFF$ and
  all $M^\subseteq_{\FFF_{i},1}$ classify $e$ positively.
  Therefore, $\EEE^\subseteq$ agrees with (1),(2) and (3).
  Moreover, $M^\subseteq_{\FFF_{i},1}$, $M^{\subseteq}_{\FFF,0}$ 
  and $M^{\subseteq}_{\emptyset,0}$
  are in $\MM^\subseteq$ and can therefore be constructed in $\poly(|V(G)|)$ time.

  Towards showing the
  forward direction, let $C=\SB v_1,\dotsc,v_k\SE$ be a $k$-clique of
  $G$, where $v_i \in V_i$ for every $i \in [k]$. We
  claim that the example $e$ that is $1$ exactly at the features
  $f_{v_1},\dotsc,f_{v_k}$ (and otherwise $0$) is classified
  positively by 
 $\EEE^=$ ($\EEE^\subseteq$).
  Note that $e$ agrees with (1), (2) and (3), 
  therefore $\EEE^=$ ($\EEE^\subseteq$) classifies $e$ positively.

  Towards showing the reverse direction, 
  suppose that there is an
  example $e$ that is classified positively by $\EEE^=$ ($\EEE^\subseteq$).
  Since $e$ agrees with (1) and (2),
  we obtain that for every $i\in [k]$,
  there exists exactly one $v_i \in V_i$ such that $e(f_{v_i}) = 1$.
  Moreover since $e$ agrees with (3)
  we obtain that for every $i,j\in [k]$,
  $v_i$ and $v_j$ are adjacent in $G$.
   Therefore,
  $C=\{v_1,\dotsc,v_k\}$ is a $k$-clique of $G$.
\end{proof}

\begin{lemma}\label{lem:new_wh1}
    Let $\MM^=$ and $\MM^\subseteq$ be classes of models  
    where $\MM^=$ is \dupcc{} and $\MM^\subseteq$ is \piescc{}.    
    Let $g \in \{=,\subseteq\}$. 
    Then the following hold:
    \begin{itemize}
        \item $\MM^g_\MAJ$-\HOM{} is \NPh{};
        \item $\MM^g_\MAJ$-\kHOM{}$(k)$ is \Wh{1}.
    \end{itemize}
\end{lemma}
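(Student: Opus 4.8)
The plan is to give a single polynomial-time parameterized reduction from \textsc{Clique} (given a graph $G$ and integer $k$, does $G$ contain a clique of size~$k$?), which is \NPh{} and \Wh{1} parameterized by $k$. Since for a one-element family $\FFF$ the models $M^=_{\FFF,c}$ and $M^\subseteq_{\FFF,c}$ compute the same Boolean function, the two cases $g\in\{=,\subseteq\}$ are identical, and \dupcc{} (resp.\ \piescc{}) lets us build in polynomial time exactly the models used below. The conceptual point, in contrast to \Cref{lem:new_ens_wh1}, is that each model now depends on at most two features and encodes at most one set; in particular we cannot force ``at least one feature of a group is set'' inside a single model, so instead we realise the \emph{whole ensemble} as a weighted threshold function, the weights coming from taking multiple copies of a model and the threshold being tuned by padding with constant models.

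Concretely, given $(G,k)$ with $n=|V(G)|$ (we may assume $1\le k\le n$, all other cases being decided directly and mapped to a fixed yes/no instance) and $\bar e$ non-edges, I would use features $F=\{f_v : v\in V(G)\}$ and build the ensemble $\EEE$ from: (i) for every $v\in V(G)$, one model $M^g_{\{\{f_v\}\},1}$, which classifies an example $e$ as $1$ iff $e(f_v)=1$; (ii) for every non-edge $uv$ of $G$, exactly $w:=n-k+1$ copies of $M^g_{\{\{f_u,f_v\}\},0}$, each classifying $e$ as $1$ iff not both $e(f_u)=e(f_v)=1$; and (iii) polynomially many constant-$0$ and constant-$1$ models $M^g_{\emptyset,\cdot}$, chosen by a routine calculation so that the majority vote of $\EEE$ equals $1$ precisely when the number of models of types (i)--(ii) voting $1$ is at least $\theta:=k+w\bar e$. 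Writing $U_e=\{v : e(f_v)=1\}$ and letting $b(U_e)$ be the number of non-edges of $G$ with both endpoints in $U_e$, the number of $1$-votes among models of types (i)--(ii) is $|U_e|+w(\bar e-b(U_e))$, so $\EEE(e)=1$ iff $|U_e|-w\,b(U_e)\ge k$.

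It then remains to read off the combinatorics. If $b(U_e)\ge 1$ then $|U_e|-w\,b(U_e)\le n-(n-k+1)=k-1<k$, so $\EEE(e)=0$; and if $b(U_e)=0$ the condition is just $|U_e|\ge k$. Hence $\EEE$ classifies $e$ as $1$ iff $U_e$ is a clique of $G$ with $|U_e|\ge k$. In particular $\EEE$ classifies the all-zero example $e_0$ (where $U_{e_0}=\emptyset$) as $0$, so $\EEE$ is a yes-instance of $\MM^g_\MAJ$-\HOM{} iff $G$ has a clique of size $\ge k$, i.e.\ iff $(G,k)$ is a yes-instance of \textsc{Clique}; this yields the \NPh{}ness. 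For the parameterized statement the same reduction sends $(G,k)$ to the instance $(\EEE,k)$ of $\MM^g_\MAJ$-\kHOM{}, and an example $e$ with at most $k$ ones is classified $1$ by $\EEE$ iff $U_e$ is a clique with $k\le|U_e|\le k$, i.e.\ a clique of size exactly~$k$; since the parameter is preserved and \textsc{Clique} parameterized by $k$ is \Wh{1}, we conclude that $\MM^g_\MAJ$-\kHOM{}$(k)$ is \Wh{1}.

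The only genuinely delicate point I foresee is the threshold bookkeeping: one must verify that $n-k+1$ copies per non-edge is large enough that a single bad non-edge always dominates the at most $n$ possible ``select'' votes (this forces $w=n-k+1$), and then pick the numbers of constant-$0$ and constant-$1$ padding models so that $\lfloor N/2\rfloor+1$, with $N$ the total number of models, lands exactly on the intended threshold $\theta$ plus the number of constant-$1$ models; getting the parity of $N$ right is the fiddliest part but is entirely mechanical. Everything is computable in time polynomial in $|V(G)|$, so the reduction is simultaneously a polynomial-time reduction and an fpt-reduction.
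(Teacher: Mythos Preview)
Your proposal is correct and follows essentially the same approach as the paper: both reduce from (a variant of) \textsc{Clique} by taking one vertex-selector model $M^g_{\{\{f_v\}\},1}$ per vertex, many copies of a non-edge-killing model $M^g_{\{\{f_u,f_v\}\},0}$ per non-edge, and constant models as padding so that the majority threshold lands exactly where needed. The only cosmetic differences are that the paper reduces from \textsc{Multicoloured Clique} rather than plain \textsc{Clique}, uses $n$ copies per non-edge instead of your tighter $n-k+1$, and pads only with constant models of one sign; your unification of the $=$ and $\subseteq$ cases via the observation that single-set families yield identical functions is a nice simplification that the paper leaves implicit.
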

\begin{proof}
  We give a parameterized reduction from \MCC{} that is also a
  polynomial-time reduction. That is, given an
  instance $(G,k)$ of \MCC{} with $k$-partition $\{V_1,\dotsc,V_k\}$ of $V(G)$, we
  will construct an ensemble $\MMM$ of type $\MM^{g}_\MAJ$  
  with $|\MMM|= 2n(\binom{n}{2}-m) + 2k -1$
  such
  that $G$ has a $k$-clique if and only if $\MMM$ 
  classifies at least
  one example positively.
  This already suffices to show the stated
  results for $\MM^g_\MAJ$-\HOM{}.
  Moreover, to show the results for
  $\MM{}^g_\MAJ$-\kHOM{}, 
  we additional show that $G$ has a $k$-clique if and only if
  $\MMM$ 
  classifies an example positively that sets at most $k$
  features to $1$.

  $\MMM$ contains the following ensemble elements:
  \begin{itemize}
  \item For every non-edge $uv \notin E(G)$ with $u \neq v$,
    we add $n$ models $M^g_{\{\{f_u,f_v\}\},0}$ to $\MMM$.
  \item For every vertex $v \in V(G)$, we add $1$ model $M^g_{\{\{f_v\}\},1}$ to $\MMM$.
  \item We add $n(\binom{n}{2}-m)-n+2k-1$ models $M^g_{\emptyset,0}$ to $\MMM$.
  \end{itemize}
  Note that each individual model in $\MMM$ can be constructed in
  polynomial time because $\MM^=$ is \dupcc{} and $\MM^\subseteq$ is \piescc{}.  
  Thus, the entire $\MMM$ can also be constructed in polynomial time.
  Moreover, $\MMM(e_0)=0$ because only 
  $n(\binom{n}{2}-m)$ out of a total of $2n(\binom{n}{2}-m) + 2k -1$
  models in $\MMM$ classify $e_0$ positively.
    
  Towards showing the forward direction, let $C=\{v_1,\dotsc,v_k\}$ be
  a $k$-clique of $G$ with $v_i \in V_i$ for every $i \in [k]$. We
  claim that the example $e$ that sets all features in $\SB f_v \SM v
  \in C \SE$ to $1$ and all other features to $0$ satisfies
  $\MMM(e)\neq \MMM(e_0)$. 
  Because $C$ is a clique in $G$, we obtain
  that 
  all $n(\binom{n}{2}-m)$ models from $\MMM$ 
  of structure $M^g_{\{\{f_u,f_v\}\},0}$ 
  $1$-classify $e$. 
  Moreover, because $e$ sets exactly $k$
  features to $1$, it holds that 
  exactly $k$ out of $n$ models from $\MMM$ 
  of structure $M^g_{\{\{f_v\}\},1}$ 
  $1$-classify $e$.
  Therefore, $e$ is classified positively by
  exactly $(\binom{n}{2}-m)+k$ models in $\MMM$ and $e$ is
  classified negatively by exactly
  $n-k+n(\binom{n}{2}-m)-n+2k-1=n(\binom{n}{2}-m)+k-1$ models
  in $\MMM$, which shows that $\MMM(e)=1\neq \MMM(e_0)$.
  
  Towards proving the reverse direction,  
  suppose there exists an example $e$  
  that is classified as $1$ by $\MMM$.  
  Let $F' \subseteq F$ be the set of all features assigned positively by $e$.  
  If there exist distinct vertices $v, u \in V(G)$  
  such that $uv \notin E(G)$,  
  then at least $n$ models of the structure $M^g_{\{\{f_u,f_v\}\},0}$  
  and $n(\binom{n}{2}-m)-n+2k-1$ models of $M^g_{\emptyset,0}$ in $\MMM$  
  would classify $e$ as $0$.  
  This would imply that $e$ is classified as $0$ by $\MMM$,  
  contradicting our assumption.  
  Thus, the set $V_{F'} = \SB v \SM f_v \in F' \SE$  
  must form a clique in $G$.  
  Since $G$ is $k$-colorable, it follows that $|F'| \leq k$.  
  If $|F'| < k$, then there are at least  
  $n-k+1 + n(\binom{n}{2}-m)-n+2k-1 = n(\binom{n}{2}-m) + k$  
  models of the structure $M^g_{\emptyset,0}$  
  or $M^g_{\{\{f_u\}\},1}$ in $\MMM$  
  that classify $e$ as $0$,  
  which again contradicts our assumption that $e$ is classified as $1$ by $\MMM$.  
  Therefore, we conclude that $k = |F'| = |V_{F'}|$,  
  and thus, $V_{F'}$ forms a $k$-clique in $G$.
\end{proof}

\subsection{DTs and their Ensembles}

Here, we present our algorithmic lower bounds for \DT{}s, \DTO{}s, and their ensembles.  
Since \DTO{} and \DTOEO{} are more restrictive variants of \DT{} and \RF{}, respectively,  
we state all results in terms of \DTO{}s and \DTOEO{}.  
We first introduce an auxiliary lemma,  
which will simplify the descriptions of our reductions.
\begin{lemma}\label{lem:DT-construction}
  Let $E \subseteq E(F)$ be a set of examples over a set of features
  $F$ and let $<$ be a total ordering of $F$.  
  In time $\bigoh(|E||F|)$, we can construct a \DTO{} $\TTT_E$  that
  respects $<$, has size at most $2|E||F| + 1$, and has exactly $|E|$ positive leaves,  
  such that for every example $e$ with $F \subseteq F(e)$,  
  it holds that $\TTT_E(e) = 1$ if and only if there exists $e' \in E$ such that $e$ agrees with $e'$.
\end{lemma}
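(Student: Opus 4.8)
The plan is to take $\TTT_E$ to be the binary trie (prefix tree) of the set $E$ with respect to the given ordering. Write $F=\{f_1,\dots,f_m\}$ with $f_1<f_2<\cdots<f_m$, so $m=|F|$. For a partial example $\alpha$ whose domain is a prefix $\{f_1,\dots,f_j\}$ ($0\le j\le m$), let $E_\alpha=\{\,e'\in E \mid e' \text{ agrees with } \alpha\,\}$. I would then define a node $\mathrm{build}(\alpha)$ for every such $\alpha$ with $E_\alpha\neq\emptyset$, recursively: if $j=m$ then $\mathrm{build}(\alpha)$ is a leaf labelled $1$ (here $E_\alpha$ is necessarily a singleton, since $\alpha$ fixes every feature of $F$ and $E\subseteq E(F)$); otherwise $\mathrm{build}(\alpha)$ is an inner node labelled $f_{j+1}$ whose $b$-child, for $b\in\{0,1\}$, is a leaf labelled $0$ if $E_{\alpha\cup\{f_{j+1}\mapsto b\}}=\emptyset$, and is $\mathrm{build}(\alpha\cup\{f_{j+1}\mapsto b\})$ otherwise. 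If $E\neq\emptyset$ I set $\TTT_E=\mathrm{build}(\alpha_\emptyset)$ for $\alpha_\emptyset$ the empty assignment; if $E=\emptyset$ I let $\TTT_E$ be a single leaf labelled $0$. This is a well-formed decision tree: since $E_\alpha=E_{\alpha\cup\{f_{j+1}\mapsto 0\}}\cup E_{\alpha\cup\{f_{j+1}\mapsto 1\}}$, whenever $E_\alpha\neq\emptyset$ at least one child of $\mathrm{build}(\alpha)$ leads to a recursive call while the other is again a recursive call or a $0$-leaf, so every inner node has exactly two children.

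For correctness I would trace the classification of an example $e$ with $F\subseteq F(e)$. By induction on $j$ one checks that the classification procedure, starting at the root of $\TTT_E$, reaches the node $\mathrm{build}(e_{|\{f_1,\dots,f_j\}})$ precisely for those $j$ with $E_{e_{|\{f_1,\dots,f_j\}}}\neq\emptyset$ (here $e_{|X}$ is the restriction of $e$ to $X$, and $E_{e_{|\{f_1,\dots,f_j\}}}$ is the set of examples in $E$ still consistent with $e$ on the first $j$ features). If $e_{|F}\in E$, this holds for all $j\le m$, so the procedure reaches $\mathrm{build}(e_{|F})$, a $1$-leaf, giving $\TTT_E(e)=1$. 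Otherwise the largest such $j$ is some $j^\ast\le m-1$, and at $\mathrm{build}(e_{|\{f_1,\dots,f_{j^\ast}\}})$ the procedure descends to its $e(f_{j^\ast+1})$-child, which is a $0$-leaf because $E_{e_{|\{f_1,\dots,f_{j^\ast+1}\}}}=\emptyset$; hence $\TTT_E(e)=0$. The case $E=\emptyset$ is immediate. Since $e_{|F}\in E$ iff some $e'\in E$ agrees with $e$, this yields the required classification property.

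It then remains to verify the three structural claims and the running time. Every root-to-leaf path of $\TTT_E$ visits inner nodes labelled, in order, $f_1,f_2,\dots,f_j$ for some $j\le m$, so $\TTT_E$ respects $<$ and is thus a $\DTO$. The $1$-leaves are exactly the nodes $\mathrm{build}(\alpha)$ with $\alpha$ a complete assignment and $E_\alpha\neq\emptyset$, i.e.\ $\alpha\in E$; distinct elements of $E$ give distinct such leaves, and by the trace above every $\alpha\in E$ is in fact reached, so $\TTT_E$ has exactly $|E|$ positive leaves. For the size, the inner nodes are the $\mathrm{build}(\alpha)$ with $\mathrm{dom}(\alpha)=\{f_1,\dots,f_\ell\}$ for some $\ell\in\{0,\dots,m-1\}$, and for each $\ell$ there are at most $|E|$ such prefixes, so $\TTT_E$ has at most $m|E|=|F||E|$ inner nodes; being a full binary tree it therefore has at most $|F||E|+1\le 2|E||F|+1$ leaves, i.e.\ $\som{\TTT_E}\le 2|E||F|+1$. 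Finally, $\TTT_E$ is built in time $\bigoh(|E||F|)$ by inserting the $|E|$ examples one at a time into an initially empty trie (each insertion walking down, creating at most $m$ inner nodes, and labelling the depth-$m$ node with $1$, in time $\bigoh(m)$) and then attaching a $0$-leaf to every still-empty child slot among the $\bigoh(|F||E|)$ inner nodes. There is no real obstacle here; the only points requiring care are the degenerate cases $E=\emptyset$ and $F=\emptyset$ (both absorbed by the single-leaf convention) and the observation, already made, that every inner node ends up with exactly two children.
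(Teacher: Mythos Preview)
Your proof is correct and builds essentially the same object as the paper: the trie of $E$ with respect to the ordering~$<$. The paper presents the construction iteratively (start from a single $0$-leaf and splice in one path per example), whereas you give a clean top-down recursive definition and then, for the running-time bound, revert to exactly the same one-example-at-a-time insertion; so the two arguments are the same in substance, with your version supplying a slightly more explicit correctness invariant and a marginally tighter leaf count ($|F||E|+1$ versus the stated $2|E||F|+1$).
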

\begin{proof}
  Let $(f_1, \dotsc, f_n)$ denote the sequence of features in $F$, ordered according to $<$.
  For every $e \in E$, we construct a simple \DTO $\TTT_e=(T_e,\lambda_e)$ that
  classifies examples that agrees with $e$ as $1$ and all other examples as $0$.
  $\TTT_e$ has one inner node $t_i^e$ for every $i \in [n]$ with
  $\lambda_e(t_i^e)=f_i$. Moreover, for $i\in [n-1]$, $t_i^e$ has $t_{i+1}^e$
  as its $e(f_i)$-child and a new $0$-leaf as its other
  child. Finally, $t_n^e$ has a new $1$-leaf as its $e(f_n)$-child and
  a $0$-leaf as its other child. Clearly, $\TTT_e$ can be constructed
  in time $\bigoh(|F|)$.
  
  We now construct $\TTT_E$ iteratively starting from $\TTT_\emptyset$
  and adding one example from $E$ at a time (in an arbitrary order).
  We set $\TTT_\emptyset$ to be the \DTO{} that only consists of a
  $0$-leaf. Now to obtain $\TTT_{E'\cup \{e\}}$ from $\TTT_{E'}$
  for some $E'\subseteq E$ and $e \in E \setminus E'$, we do the following. Let
  $l$ be the $0$-leaf of $\TTT_{E'}$ that classifies
  $e$ and let $f_i$ be the feature assigned to the parent of
  $l$. Moreover, let $\TTT_e'$ be the sub-\DTO{} of $\TTT_e$ rooted at
  $t^e_{i+1}$ or if $i=n$ let $\TTT_e'$ be the \DTO{} consisting only of a $1$-leaf.
  Then, $\TTT_{E'\cup \{e\}}$ is obtained from the
  disjoint union of $\TTT_{E'}$ and $\TTT_e'$ after
  identifying the root of $\TTT_e'$ with $l$. 
  Clearly, $\TTT_E$ satisfies the statement of the lemma.
\end{proof}

Next, we show that the class of \DTO{}s satisfies the \dup{} property.  
Moreover, we describe how the model parameters behave as functions of $a$ and $b$.

\begin{lemma}\label{lem:meq_dt}
    \DTO{} is \dup.
    Moreover, for every $a$, $b$, and order $<$, there exists
    $\MM{} \subseteq \DTO{}$,
    such that \MM{} is \dupab{}
    and for every $M \in \MM$, $M$ respects $<$,
    the $\mnlsize$ of $M$ is at most $b$
    and the $\sizeelem$ of $M$ is at most $2ab^2+1$.
\end{lemma}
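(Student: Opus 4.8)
The plan is to derive the lemma almost directly from \Cref{lem:DT-construction}. Fix $a$, $b$, and a total ordering $<$ of the features, and let $\MM$ be the class of all \DTO{}s that respect $<$, have at most $2ab^2+1$ leaves, and have $\MNL(\cdot)\le b$. Every member of $\MM$ trivially satisfies the three stated properties, so it suffices to show that this particular $\MM$ is \dupab{}. Since $a$ and $b$ are arbitrary, this also yields that $\DTO{}$ is \dupab{} for all $a$ and $b$, which is exactly the statement that $\DTO{}$ is \dup{}.

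To prove that $\MM$ is \dupab{}, I would take an arbitrary $c\in\{0,1\}$ and an arbitrary family $\FFF$ of feature sets with $\max_{F\in\FFF}|F|\le a$ and $|\FFF|\le b$, and set $F'=\bigcup\FFF$ (so $|F'|\le ab$). For every $F\in\FFF$ let $e_F\in E(F')$ be the total assignment of $F'$ given by $e_F(f)=1\Leftrightarrow f\in F$, and put $E=\{\,e_F\mid F\in\FFF\,\}$; distinct sets in $\FFF$ give distinct assignments, so $|E|=|\FFF|\le b$. Feeding $E$, $F'$, and $<$ into \Cref{lem:DT-construction} produces, in time $\bigoh(|E||F'|)=\bigoh(ab^2)$, a \DTO{} $\TTT_E$ that respects $<$, has at most $2|E||F'|+1\le 2ab^2+1$ leaves and exactly $|E|\le b$ positive leaves, and classifies an example $e$ with $F'\subseteq F(e)$ positively iff $e$ agrees with some $e_F$, i.e.\ iff there is $F\in\FFF$ with $e(f)=1\Leftrightarrow f\in F$ for all $f\in F'$. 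The model to output is $M=\TTT_E$ when $c=1$, and $M=\TTT_E$ with all leaf labels flipped when $c=0$; flipping leaf labels changes neither the tree nor the feature ordering along any root-to-leaf path nor the number of leaves, and merely interchanges $L_0$ with $L_1$.

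The remaining verification is routine. The output $M$ respects $<$ and has at most $2ab^2+1$ leaves by construction, and since $\MNL$ is symmetric under interchanging $L_0$ and $L_1$, $\MNL(M)=\MNL(\TTT_E)=\min\{|L_0(\TTT_E)|,|L_1(\TTT_E)|\}\le|L_1(\TTT_E)|=|E|\le b$. For correctness, \Cref{lem:DT-construction} gives $\TTT_E(e)=1$ exactly when some $F\in\FFF$ satisfies $e(f)=1\Leftrightarrow f\in F$ on $F'$; hence $M(e)=\TTT_E(e)=M_{\FFF,1}^=(e)$ for $c=1$ and $M(e)=1-\TTT_E(e)=M_{\FFF,0}^=(e)$ for $c=0$, on every example $e$ with $F'\subseteq F(e)$. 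Thus $M\in\MM$ and $M$ realises $M_{\FFF,c}^=$. There is no genuine obstacle here; the one point that requires care is the $c=0$ case, where one must bound $\MNL(M)$ using the (at most $b$) positive leaves of $\TTT_E$ rather than its possibly $\Theta(ab^2)$ negative leaves — which is precisely what the symmetry of $\MNL$ delivers.
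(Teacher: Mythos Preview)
Your proposal is correct and follows essentially the same approach as the paper: encode each $F\in\FFF$ as the example $e_F$ over $\bigcup\FFF$, invoke \Cref{lem:DT-construction} on the resulting set $E$, and flip all leaf labels when $c=0$. You are in fact slightly more careful than the paper in two places: you explicitly justify the $\MNL$ bound via the symmetry of $\MNL$ under swapping $L_0$ and $L_1$, and you correctly compute the size bound as $2|E||F'|+1\le 2b\cdot ab+1=2ab^2+1$ (the paper's proof text contains the apparent typo $2ab+1$, though the lemma statement has $2ab^2+1$).
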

\begin{proof}
  Let $c \in \{0,1\}$ and let $\FFF$ be a family of sets of features  
  such that $\max_{F \in \FFF} |F| \leq a$ and $|\FFF| \leq b$.
  and let $<$ be an order.
  To prove the statement, it suffices to show that in time  
  $\poly(a + b)$ we can construct  
  a \DTO{} $\TTT$ such that $\TTT$ classifies every example identically to $M^-=_{\FFF,c}$,
  $\TTT$ respects $<$, and satisfy the required size constraints.

  Let $\FFF = \SB F_1, \dots, F_{b'} \SE$ with $b' \leq b$.
  Define $E = \SB e_i \SM i \in [b'] \SE$,  
  where each $e_i$ is an example over $\bigcup \FFF$ that assigns value $1$ 
  only to the features in $F_i$, i.e.,  
  for every $f \in \bigcup \FFF$, we have $e_i(f) = 1 \Leftrightarrow f \in F_i$.

  Let $\TTT_E$ be the \DTO{} obtained from \Cref{lem:DT-construction} using the set $E$ and the order $<$.  
  If $c = 1$, then we define $\TTT = \TTT_E$;  
  otherwise, we obtain $\TTT$ by flipping the label of every leaf in $\TTT_E$.
  Since $|\bigcup \FFF| \leq ab' \leq ab$ and $|E| = |\FFF| \leq b$,  
  it follows that the \sizeelem{} of $\TTT$ is at most $2ab + 1$,  
  and the \mnlsize{} of $\TTT$ is at most $b$.  
  Moreover, $\TTT$ can be computed in time $\poly(a + b)$.
  Clearly, $\TTT$ classifies examples in exactly the same way as $M^=_{\FFF,c}$
  and respects $<$.
\end{proof}

The next theorem immediately follows from~\Cref{lem:meq_dt,lem:new_wh2}.
We note that an analogous theorem for \DT{}s (instead of \DTO{}s)  
can be derived from a result in~\cite[Proposition 6]{BarceloM0S20} for
free binary decision diagrams.

\begin{theorem}\label{th:DT-MLAEX-W2}
    The following holds:
    \begin{itemize}
        \item \DTO{}-\MLAEX{} is \NPh{};
        \item \DTO{}-\MLAEX{}$(\xpsize{})$ is \Wh{2}.
    \end{itemize}
\end{theorem}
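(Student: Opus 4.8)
The plan is to derive this theorem as an immediate instantiation of the hardness meta-theorem \Cref{lem:new_wh2}. The only ingredient needed is that the class \DTO{} is \dup{}, which is exactly \Cref{lem:meq_dt}. Taking $\MM^{=}=\DTO{}$ in \Cref{lem:new_wh2}, its first two bullets state precisely that $\MM^{=}$-\MLAEX{} is \NPh{} and that $\MM^{=}$-\MLAEX{}$(\xpsize{})$ is \Wh{2}, which are the two claims of the theorem. So the write-up should amount to a two-line appeal to \Cref{lem:meq_dt,lem:new_wh2}.

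Unfolding the cited results, the underlying reduction is from \textsc{Hitting Set} (\NPh{}, and \Wh{2} when parameterized by the solution size): from a family $\FFF$ over a universe $U$ and a bound $k$ one forms the family of feature sets $\BBB=\SB\,\SB f_u\SM u\in F\,\SE\SM F\in\FFF\,\SE$ and builds, in polynomial time via the \dup{} property (i.e.\ the explicit decision-tree construction of \Cref{lem:DT-construction}), the set-modelling tree $M^{=}_{\BBB,1}$. One then checks that a set $S$ of features is a local abductive explanation of the all-zero example w.r.t.\ $M^{=}_{\BBB,1}$ if and only if $\SB u\SM f_u\in S\,\SE$ meets every member of $\FFF$; hence hitting sets of size at most $k$ correspond exactly to local abductive explanations of size at most $k$. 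All of this is already carried out inside the proof of \Cref{lem:new_wh2}, so no further argument is required beyond the instantiation. (An analogous statement for \DT{} rather than \DTO{} also follows from the free-binary-decision-diagram construction of \cite{BarceloM0S20}.)

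The main obstacle here is essentially pre-paid. The technically delicate parts---showing that decision trees respecting a fixed feature order can exactly model an arbitrary bounded family of feature sets (\Cref{lem:meq_dt}, which rests on \Cref{lem:DT-construction}) and verifying correctness of the \textsc{Hitting Set} reduction for all four explanation types (\Cref{lem:new_wh2})---are isolated in the earlier lemmas. What remains is only to confirm that \MLAEX{} is one of the problems covered by \Cref{lem:new_wh2} and that the \xpsize{} parameterization matches, which it does verbatim. The one point worth a sanity check is that the finer size-bound part of \Cref{lem:meq_dt} (bounding \mnlsize{} and \sizeelem{}) is not needed for this particular statement---it is used elsewhere for the ensemble and bounded-parameter variants---so the plain \dup{} property suffices here.
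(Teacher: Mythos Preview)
Your proposal is correct and matches the paper's own proof essentially verbatim: the paper simply states that the theorem follows immediately from \Cref{lem:meq_dt,lem:new_wh2}. Your additional unfolding of the \textsc{Hitting Set} reduction and the remark that the finer size bounds in \Cref{lem:meq_dt} are unnecessary here are accurate and go slightly beyond what the paper writes, but the core argument is identical.
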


The next two theorems cover ensembles of \DTO{}s, which naturally are
harder to explain.

\begin{theorem}\label{th:RF-W1-ES}
  Let $\PP \in \{\LAEX{}, \GAEX{}, \GCEX{}\}$. 
  Then the following holds:
  \begin{itemize}
      \item  \RFO{}-\SPP{}$(\enssize{})$ is \coWh{1};
      \item \RFO{}-\MPP{}$(\enssize{})$ is \coWh{1} even if \xpsize{} is constant;
      \item \RFO{}-\SMLCEX{}$(\enssize{})$ is \Wh{1};
      \item \RFO{}-\MLCEX{}$(\enssize{}+\xpsize{})$ is \Wh{1}.
  \end{itemize}
\end{theorem}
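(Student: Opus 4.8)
The plan is to derive all four items from the hardness of the homogeneity problems \RFO{}-\HOM{} and \RFO{}-\kHOM{}, using \Cref{lem:HOM-reduction} and \Cref{lem:kHOM-reduction} to transport this hardness to the explanation problems.

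First I would show that \RFO{}-\HOM{}$(\enssize{})$ and \RFO{}-\kHOM{}$(\enssize{}+k)$ are \Wh{1}. By \Cref{lem:meq_dt} the class \DTO{} is \dup{}, hence in particular \dupc{}; in addition, the strengthened (``moreover'') part of \Cref{lem:meq_dt} lets us realise the required set-modelling \DTO{}s so that they all respect one common, prescribed feature ordering. Feeding \DTO{} into \Cref{lem:new_ens_wh1} as the set-modelling class (the case $g={=}$, for which only the hypothesis on the set-modelling class matters) thus yields a polynomial-time parameterized reduction from \MCC{} whose output is an ensemble of $\bigoh(k^{2})$ many \DTO{}s that all respect a fixed common ordering, i.e., an \RFO{}. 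Since \MCC{} is \Wh{1} parameterized by the clique size $k$ and \enssize{} is bounded by a function of $k$, this gives the claimed \Wh{1}-hardness of \RFO{}-\HOM{}$(\enssize{})$ and of \RFO{}-\kHOM{}$(\enssize{}+k)$.

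Next I would transport these bounds. For the contrastive variants this is immediate: by \Cref{lem:HOM-reduction} an ensemble $\EEE$ is a yes-instance of \HOM{} exactly when $(\EEE,e_{0})$ admits a (subset-minimal) local contrastive explanation, so $\EEE\mapsto(\EEE,e_{0})$ is an fpt-reduction from \RFO{}-\HOM{}$(\enssize{})$ to \RFO{}-\SMLCEX{}$(\enssize{})$; and by \Cref{lem:kHOM-reduction} the map $(\EEE,k)\mapsto(\EEE,e_{0},k)$ is an fpt-reduction from \RFO{}-\kHOM{}$(\enssize{}+k)$ to \RFO{}-\MLCEX{}$(\enssize{}+\xpsize{})$; both maps leave \enssize{} unchanged, so \Wh{1}-hardness carries over. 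For the abductive and global contrastive variants I would instead use the equivalences of \Cref{lem:HOM-reduction} in their complemented form: $\EEE$ is a \emph{no}-instance of \HOM{} exactly when the empty set is a subset-minimal solution of the corresponding \SMLAEX{}, \SMGAEX{}, or \SMGCEX{} instance, and exactly when the corresponding \MLAEX{}, \MGAEX{}, or \MGCEX{} instance with explanation bound $k=0$ is a yes-instance (with target class $c=\EEE(e_{0})$, which equals $0$ for the ensembles constructed above). Hence the complement of \RFO{}-\HOM{} fpt-reduces to each of \RFO{}-\SPP{} and \RFO{}-\MPP{} for $\PP\in\{\LAEX{},\GAEX{},\GCEX{}\}$, and since \RFO{}-\HOM{}$(\enssize{})$ is \Wh{1}-hard all of these problems are \coWh{1}-hard; moreover in the cardinality-minimal cases this holds already when \xpsize{} is fixed to the constant $0$. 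Putting the two steps together yields the theorem.

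The only step that is more than routine bookkeeping is making sure that the ensemble produced by \Cref{lem:new_ens_wh1} is an \RFO{} rather than merely an ensemble of \DTO{}s with possibly incompatible orderings; this is exactly what the strengthened part of \Cref{lem:meq_dt} supplies, by letting us fix one common ordering respected by every ensemble element. Everything else amounts to keeping track of which equivalence in \Cref{lem:HOM-reduction} is being used and whether it introduces a complementation.
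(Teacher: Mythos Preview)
Your proposal is correct and follows essentially the same approach as the paper: use \Cref{lem:meq_dt} (with its ``moreover'' part to fix a common ordering $<^\star$) to obtain a \dupc{} subclass of \DTO{}s, feed this into \Cref{lem:new_ens_wh1} to get \Wh{1}-hardness of \RFO{}-\HOM{}$(\enssize{})$ and \RFO{}-\kHOM{}$(\enssize{}+k)$, and then transport via \Cref{lem:HOM-reduction,lem:kHOM-reduction}. Your proof plan is in fact more explicit than the paper's about the complementation bookkeeping and about why the common-ordering requirement of \RFO{} is met.
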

\begin{proof}
  Let $<^\star$ be an arbitrary total order over the features.  
  By \Cref{lem:meq_dt},  
  there exists a class $\MM \subseteq \DTO{}$ such that $\MM$ is \dupc{},  
  and every model in $\MM$ respects $<^\star$.
  From \Cref{lem:new_ens_wh1}, we obtain that:
  $\MM_\MAJ$-\HOM{}$(\enssize{})$ and
  $\MM_\MAJ$-\kHOM{}$(\enssize{} + k)$ are \Wh{1}.
  Since $\MM \subseteq \DTO{}$
  and every model in $\MM$ respects $<^\star$,
  the same hardness results apply to the \RFO{}. 
  From \Cref{lem:HOM-reduction,lem:kHOM-reduction},  
  we obtain the claimed hardness results for the respective problems in the theorem.
\end{proof}

\begin{theorem}\label{th:RF-paraNP}
  Let $\PP \in \{\LAEX{}, \GAEX{}, \GCEX{}\}$. 
  Then the following holds:
    \begin{itemize}
      \item \RFO{}-\SPP{} is \coNPh{} even if  $\mnlsize{} + \sizeelem{}$ is constant;
      \item \RFO{}-\MPP{} is \coNPh{} even if  $\mnlsize{}+ \sizeelem{} + \xpsize{}$ is constant;
      \item \RFO{}-\SMLCEX{} is \NPh{} even if  $\mnlsize{} + \sizeelem{}$ is constant;
      \item \RFO{}-\MLCEX{}$(\xpsize{})$ is \Wh{1} even if $\mnlsize{} + \sizeelem{}$ is constant.
  \end{itemize}
\end{theorem}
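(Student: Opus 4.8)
The plan is to follow the recipe used for \Cref{th:RF-W1-ES}, but to replace the appeal to \Cref{lem:new_ens_wh1} by one to \Cref{lem:new_wh1}. This works while keeping $\mnlsize$ and $\sizeelem$ bounded by absolute constants because the \MCC{}-reductions inside \Cref{lem:new_wh1} use only models witnessing the \dupcc{} property, i.e.\ families $\FFF$ with $|\FFF|\le 1$ consisting of a single set of size at most $2$; instantiating \Cref{lem:meq_dt} with $a=2$ and $b=1$ therefore yields \DTO{}s with $\mnlsize\le 1$ and $\sizeelem\le 2\cdot 2\cdot 1^2+1=5$.

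Concretely, I would fix an arbitrary total order $<^\star$ of the features and invoke \Cref{lem:meq_dt} with $a=2$, $b=1$ and this order, obtaining a subclass $\MM\subseteq\DTO{}$ that is \dupcc{}, all of whose members respect $<^\star$ and satisfy $\mnlsize\le 1$ and $\sizeelem\le 5$. Since all members of $\MM$ respect a common order, every ensemble built from $\MM$ is a legitimate \RFO{} (i.e.\ a \DTOEO{}) with $\mnlsize+\sizeelem\le 6$, so any hardness statement about $\MM_\MAJ$ transfers to \RFO{} while keeping $\mnlsize+\sizeelem$ constant. Applying \Cref{lem:new_wh1} with $\MM$ in the role of $\MM^=$ then gives that $\MM_\MAJ$-\HOM{} is \NPh{} and $\MM_\MAJ$-\kHOM{}$(k)$ is \Wh{1}, hence the same holds for \RFO{}-\HOM{} and \RFO{}-\kHOM{}$(k)$ with $\mnlsize+\sizeelem$ constant.

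It remains to push these two facts through \Cref{lem:HOM-reduction,lem:kHOM-reduction}, which leave the ensemble (and hence its parameters) untouched. By statements (2), (4) and (5) of \Cref{lem:HOM-reduction}, a model $M$ is a no-instance of \HOM{} exactly when the empty set is a valid \SMLAEX{}, \SMGAEX{}, resp.\ \SMGCEX{} solution for the corresponding instance, so \NPh{}-ness of \RFO{}-\HOM{} gives \coNPh{}-ness of \RFO{}-\SPP{} for $\PP\in\{\LAEX,\GAEX,\GCEX\}$; by statements (6), (8) and (9) these same no-instances are exactly the yes-instances of the cardinality-minimal problems with bound $0$, giving \coNPh{}-ness of \RFO{}-\MPP{} even with $\xpsize=0$. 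Statement (3) of \Cref{lem:HOM-reduction} identifies the yes-instances of \HOM{} with the yes-instances of \SMLCEX{} on the all-zero example, so \RFO{}-\SMLCEX{} is \NPh{}; and \Cref{lem:kHOM-reduction} identifies yes-instances of \kHOM{} with yes-instances of \MLCEX{} on the all-zero example, so \RFO{}-\MLCEX{}$(\xpsize)$ is \Wh{1}.

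The only \emph{genuine} content — and the reason \Cref{lem:new_wh1} is split off from \Cref{lem:new_ens_wh1} — already sits inside \Cref{lem:new_wh1}: realising the \MCC{}-reduction with models that each recognise only a single vertex-feature or a single non-edge, tuning the majority threshold by the right number of padding copies of $M^=_{\emptyset,0}$. At the level of \Cref{th:RF-paraNP} there is no further obstacle; the remaining work is the bookkeeping above, namely checking the constant parameter bounds coming out of \Cref{lem:meq_dt} for $a=2,b=1$, verifying that a common feature order makes the resulting ensembles genuine \RFO{}s, and noting that the reductions of \Cref{lem:HOM-reduction,lem:kHOM-reduction} preserve the model.
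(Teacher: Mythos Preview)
Your proposal is correct and follows essentially the same approach as the paper's proof: fix a total order $<^\star$, invoke \Cref{lem:meq_dt} with $a=2$, $b=1$ to obtain a \dupcc{} subclass $\MM\subseteq\DTO{}$ respecting $<^\star$ with $\mnlsize\le 1$ and $\sizeelem\le 5$, apply \Cref{lem:new_wh1} to get the \HOM{}/\kHOM{} hardness for $\MM_\MAJ$, transfer to \RFO{}, and finish via \Cref{lem:HOM-reduction,lem:kHOM-reduction}. Your account is actually more explicit than the paper's about which items of \Cref{lem:HOM-reduction} yield which bullet points, but the underlying argument is identical.
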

\begin{proof}
  Let $<^\star$ be an arbitrary total order over the features.  
  By \Cref{lem:meq_dt},  
  there exists a class $\MM \subseteq \DTO{}$ such that
  $\MM$ is \dupcc{},
  every model in $\MM$ respects $<^\star$ and
  every \DTO{} $\TTT \in \MM$ has $\mnlsize{} \leq 1$ and $\sizeelem{} \leq 5$.
  From \Cref{lem:new_wh1}, we obtain that:
  $\MM_\MAJ$-\HOM{} is \NPh{} even if $\mnlsize{} + \sizeelem{}$ is constant;
  $\MM_\MAJ$-\kHOM{}$(k)$ is \Wh{1} even if $\mnlsize{} + \sizeelem{}$ is constant.
  Since $\MM \subseteq \DTO{}$  and every model in $\MM$ respects $<^\star$,
  the same hardness results apply to the \RFO{}. 
  From \Cref{lem:HOM-reduction,lem:kHOM-reduction},  
  we obtain the claimed hardness results for the respective problems in the theorem.
\end{proof}

The following theorem can be seen as an analogue of \Cref{th:DT-MLAEX-W2} 
for global abductive and global contrastive explanations.  
Interestingly, while for algorithms it was not necessary  
to distinguish between local abductive explanations and global explanations,  
this distinction becomes crucial when establishing lower bounds.  
Moreover, while the following result establishes \W{1}-hardness  
for \DTO{}-\MGAEX{}$(\xpsize{})$ and \DTO{}-\MGCEX{}$(\xpsize{})$,  
this is achieved via fpt-reductions that are not polynomial-time reductions,  
a behaviour rarely seen in natural parameterized problems.  
Thus, while it remains unclear whether these problems are \NPh{},  
the result still shows that they are not solvable in polynomial time unless $\FPT = \W{1}$,  
which is widely believed to be unlikely~\cite{DowneyFellows13}.



\begin{theorem}\label{th:DT-MGAEX-W1}
  \DTO{}-\MGAEX{}$(\xpsize{})$ and \DTO{}-\MGCEX{}$(\xpsize{})$ are \Wh{1}. 
  Moreover, there is no polynomial time algorithm for solving \DTO{}-\MGAEX{} and \DTO{}-\MGCEX{},
  unless $\FPT = \W{1}$.
\end{theorem}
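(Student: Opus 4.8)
The plan is to give an fpt-reduction from \textsc{Multicoloured Clique} (\MCC{}), parameterized by the clique size $k$, which is \Wh{1} and \NPh{} as recalled above. First observe that \MGCEX{} and \MGAEX{} are, up to relabelling the class, the same problem: a partial assignment $\tau$ is a global contrastive explanation for class $c$ w.r.t.\ a model $M$ exactly when it is a global abductive explanation for class $1-c$ w.r.t.\ $M$ (both state that every example agreeing with $\tau$ is classified $1-c$). Hence it suffices to prove the two claims for \DTO{}-\MGAEX{}.

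Given an instance $(G,k)$ of \MCC{} with colour classes $V_1,\dots,V_k$, I would use one feature $f_v$ per vertex $v\in V(G)$, fix the total order $<$ that lists the features of $V_1$ first, then those of $V_2$, and so on, and construct a \DTO{} $\TTT$ respecting $<$ which accepts an example $e$ if and only if the set $S_e=\{v:e(f_v)=1\}$ of positive vertices contains the vertex set of a multicoloured $k$-clique of $G$. The reduction outputs $(\TTT,1,k)$, i.e.\ it asks for a global abductive explanation for class $1$ of size at most $k$. Crucially, $\TTT$ will have size, and be computable in time, $f(k)\cdot\poly(|V(G)|)$ for some computable function $f$; this polynomial-in-$n$ but (necessarily) super-polynomial-in-$k$ bound is precisely why the construction is an fpt-reduction and not a polynomial-time reduction.

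For correctness I would show that $G$ has a multicoloured $k$-clique if and only if $\TTT$ admits a global abductive explanation for class $1$ of size at most $k$. If $C$ is such a clique, then the partial assignment $\tau_C$ with $\tau_C(f_v)=1$ for $v\in C$ has the property that every example agreeing with it has positive set a superset of $C$ and is therefore accepted, so $\tau_C$ is a \GAEX{} for class $1$ of size $k$. Conversely, if $\tau$ is a \GAEX{} for class $1$ of size at most $k$, consider the example $e^{\star}$ obtained by setting the features forced to $1$ by $\tau$ to $1$ and all remaining features to $0$; then $\TTT(e^{\star})=1$, so $S_{e^{\star}}$ contains a multicoloured $k$-clique, while $|S_{e^{\star}}|\le|\tau|\le k$, whence $S_{e^{\star}}$ is itself a multicoloured $k$-clique of $G$. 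Since this is an fpt-reduction whose target parameter is $k$, the \W{1}-hardness of \MCC{} transfers to \DTO{}-\MGAEX{}$(\xpsize{})$ and, by the first paragraph, to \DTO{}-\MGCEX{}$(\xpsize{})$. For the remaining claim: a polynomial-time algorithm for \DTO{}-\MGAEX{} would in particular be an \FPT{} algorithm, and composing it with the fpt-reduction would put \MCC{} in \FPT{}, forcing $\FPT{}=\W{1}$; the same argument applies to \DTO{}-\MGCEX{}.

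The main obstacle is building $\TTT$ so that it respects a fixed feature order and yet has only $f(k)\cdot\poly(n)$ nodes. A direct ordered decision tree for ``the positive vertices contain a multicoloured $k$-clique'' has size $n^{\Theta(k)}$ — along a root-to-leaf path one is essentially forced to record which vertices of each already-processed colour class are positive — which is merely \XP{} in $k$ and hence insufficient for an fpt-reduction. The real work is therefore to replace this condition by a logically weaker one that is still, for the purpose of the reduction, equivalent to the existence of a $k$-clique (for instance by additionally accepting every example that sets more than $k$ features to $1$, which does not affect either direction of the equivalence above), and to choose the feature order together with the per-colour bookkeeping carefully so that along almost every root-to-leaf path the classification becomes constant after reading only $f(k)$ further features; verifying that the resulting tree genuinely has size $f(k)\cdot\poly(n)$ while still respecting a single order is the delicate part of the argument.
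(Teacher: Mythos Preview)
Your overall framework and both directions of the correctness argument are sound and mirror the paper's: reduce from \MCC{}, observe that \MGAEX{} and \MGCEX{} coincide up to complementing the target class, and use the all-zero completion $e^\star$ of a small explanation to extract a clique. The gap is precisely where you place it, but the relaxation you propose does not close it. With the function ``accept iff more than $k$ features are $1$ or $S_e$ contains a multicoloured $k$-clique'', restrict to examples that are $0$ on all of $V_1$: no multicoloured $k$-clique is then possible and your tree must compute the pure threshold ``more than $k$ ones among the remaining features''. Any decision tree for the threshold $T^m_{k+1}$ has at least $\binom{m}{k}$ zero-leaves, because the partial assignment at a $0$-leaf must fix at least $m-k$ features to $0$, so each input with exactly $k$ ones determines, and is determined by, the leaf it reaches. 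Hence your tree has $n^{\Theta(k)}$ nodes and the reduction is \XP{} but not fpt. The same obstruction hits any function that agrees with the $k$-clique predicate on inputs with exactly one selected vertex per colour class; ``per-colour bookkeeping'' alone will not circumvent it.

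The paper sidesteps this by an entirely different construction that does \emph{not} compute a clique-containment predicate. It introduces $\Theta(k^2 2^k)$ fresh auxiliary features: the top of $\TTT$ is a complete binary tree of depth $k$ on auxiliary features, and below each of its $2^k$ leaves hangs (via a further selector of depth $\lceil\log\binom{k}{2}\rceil$, again on fresh auxiliary features) one copy of a pair-gadget $\TTT_{i,j}$ for every $i<j$. Each $\TTT_{i,j}$ has size $O(|V(G)|^2)$ and outputs $1$ iff either no $V_i$-vertex is selected, or exactly one $V_i$-vertex $v$ is selected and none of its $V_j$-neighbours is. The target class is $0$, not $1$. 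If $C$ is a $k$-clique, forcing its $k$ vertex-features to $1$ makes every $\TTT_{i,j}$ output $0$. Conversely, a size-$k$ explanation can constrain at most $k$ auxiliary features, and the $2^k$-fold duplication guarantees that some full block of $\binom{k}{2}$ gadgets survives untouched by auxiliary constraints; inside that block the explanation is forced to spend its entire budget on one vertex-feature per colour class, and those vertices must be pairwise adjacent. The exponential-in-$k$ auxiliary blow-up is exactly why the reduction runs in fpt-time but not polynomial time.
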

\iflong\begin{proof}\fi
\ifshort\begin{proof}[Proof Sketch]\fi 
  We provide a parameterized reduction from the \textsc{Multicoloured
    Clique} (\MCC)
  problem, which is well-known to be \Wh{1} parameterized by the size
  of the solution. Given an instance $(G,k)$
  of the \MCC{} problem with $k$-partition $(V_1,\dotsc,V_k)$ of $V(G)$, we will construct an equivalent instance
  $(\TTT,c,k)$ of \MGAEX{} in fpt-time. Note that since
  a partial example $e'$ is a global abductive explanation for $c$
  w.r.t. $\TTT$ if and only if $e'$ is a global contrastive
  explanation for $1-c$ w.r.t. $\TTT$, this then also implies the
  statement of the theorem for \MGCEX{}.
  $\TTT$ uses one binary feature~$f_v$ for every $v \in V(G)$
  and $2^{k + \lceil \log (k(k-1))\rceil}$ new auxiliary features.

  For every $i \in [k]$, let $F_i = \SB f_v \SM v \in V_i \SE$.  
  Let $<^\star$ be an arbitrary total order over the features,
  such that for all $i, j \in [k]$, $f \in F_i$, and $f' \in F_j$,  
  if $i < j$, then $f <^\star f'$.
  We start by constructing the \DTO{} $\TTT_{i,j}$ that respects $<^\star$
  and
  for every $i,j \in
  [k]$ with $i< j$ satisfying the
  following: (*) $\TTT_{i,j}(e) = 1$ for an example $e$ if and only if
  either $e(f_v) = 0$ for every $v \in V_i$ or 
  there exists 
  $v\in V_i$ such that $e(f_{v}) = 1$ and $e(f_{u}) = 0$
  for every $u \in (V_i\setminus \{v\}) \cup (N_G(v) \cap V_j)$.
  Let $\TTT_{i}$ be the \DTO{} obtained using
  \Cref{lem:DT-construction} for the order $<^\star$ and 
  for the set of examples $\{e_0\} \cup \SB
  e_v \SM v\in V_i \SE$ defined on the features in $F_i=\SB f_v\SM
  v\in V_i \SE$. Here, $e_0$ is the all-zero example and for every $v
  \in V_i$, $e_v$ is the example that is $1$ only at the feature $f_v$
  and $0$ otherwise.
  Moreover, for every $v\in V_i$, let $\TTT^{v}_{j}$ be the \DTO{} obtained using
  \Cref{lem:DT-construction} for the order $<^\star$ and for the set of examples containing only
  the all-zero example defined on the features in
  $\SB f_{v'}\SM v'\in N_G(v)\cap V_j\SE$.
  Then, $\TTT_{i,j}$ is obtained from $\TTT_i$ after replacing
  the $1$-leaf that classifies $e_v$  with $\TTT^v_j$ for every $v \in
  V_i$.
  Clearly, $\TTT_{i,j}$ satisfies (*) and since 
  $\TTT_{i}$ has at most $|V_i|^2$ inner nodes and $|V_i| + 1$ $1$-leaf nodes,
  and $\TTT^v_{j}$ has at most
  $|V_j|$ inner nodes, we obtain that $\TTT_{i,j}$ has at most
  $\bigoh(|V(G)|^2)$ nodes.

  For an integer $\ell$, we denote by $\DTO(\ell)$ the
  complete \DTO{} of height $\ell$,
  where every inner node is
  assigned to a fresh auxiliary feature and every of the exactly
  $2^\ell$ leaves is a $0$-leaf. 
  Let $\TTT_\Delta$ be the \DTO{} obtained from the disjoint union of
  $\TTT_U=\DL(k)$ and $2^{k}$ copies $\TTT_D^1,\dotsc,
  \TTT_D^{2^{k}}$ of \DTO($\lceil \log (k(k-1)/2)\rceil$) by
  identifying the $i$-th leaf of $\TTT_U$ with the root of $\TTT_D^i$
  for every $i$ with $1 \leq i \leq 2^{k}$; each
  copy is equipped with its own set of fresh features.

  Then,
  $\TTT$ is obtained from $\TTT_\Delta$ after doing the following with
  $\TTT_D^\ell$ for every $\ell \in [2^k]$.
  For every $i,j \in [k]$ with $i< j$,
  we replace a private leaf of $\TTT_D^\ell$ with the \DTO{}
  $\TTT_{i,j}$; note that this is possible because
  $\TTT_D^\ell$ has at least $k(k-1)/2$ leaves. Also note that
  $\TTT$ has size at most 
  $\bigoh(|\TTT_\Delta||V(G)|^2)$
  $=\bigoh(2^{k + \lceil \log (k(k-1))\rceil}|V(G)|^2)=\bigoh(k^2 2^k|V(G)|^2)$.
  This completes the construction of $\TTT$ and we set $c=0$.
  Clearly, $\TTT$ can be constructed from $G$ in fpt-time with respect to $k$,  
  and there exists an order $<'$ (extending $<^\star$) such that $\TTT$ respects $<'$.
  It remains to show that
  $G$ has a $k$-clique if and only if there is a global abductive
  explanation of size at most $k$ for $c$ w.r.t. $\TTT$.
  \iflong 

  Towards showing the forward direction, let $C=\{v_1,\dotsc,v_k\}$ be
  a $k$-clique of $G$ with $v_i \in V_i$ for every $i \in [k]$. We
  claim that $\alpha : \SB f_v \SM v \in V(C) \SE \rightarrow \{1\}$ is a
  global abductive explanation for $c$ w.r.t. $\TTT$, which
  concludes the proof of the forward direction.   
  To see that this is
  indeed the case consider any example $e$ that agrees with $\alpha$,
  i.e., $e$ is $1$ at any feature $f_{v}$ with $v \in V(C)$.
  For this is suffices to show that $\TTT_{i,j}(e)=0$ for every
  $i,j\in [k]$. Because $\TTT_{i,j}$ satisfies (*) and
  because $e(f_{v_i})=1$, it has to
  hold that $e$ is $1$ for at least one feature in
  $\SB f_v \SM v \in N_G(v_i)\cap V_j\SE$. But this clearly holds
  because $C$ is a $k$-clique in $G$.

  Towards showing the reverse direction, let $\alpha : C' \rightarrow
  \{0,1\}$ be a global abductive explanation of size at most $k$ for
  $c$ w.r.t. $\TTT$. 
  We claim that $C=\SB v \SM f_v \in C'\SE$ is a
  $k$-clique of $G$.
  Let $\TTT_\alpha$ be the partial \DTO{} obtained from
  $\TTT$ after removing all nodes that can never be reached by an
  example that is compatible with $\alpha$, i.e., we obtain
  $\TTT_\alpha$  from $\TTT$ by removing the subtree rooted at the $1-\alpha(t)$-child for every node $t$ of $\TTT$
  with $\lambda(t) \in C'$. Then, $\TTT_\alpha$ contains
  only $0$-leaves. We first show that there is an $\ell \in
  [2^{k}]$
  such that $\TTT_\alpha$ contains $\TTT_D^\ell$ completely, i.e., this in particular means that $C'$ contains no feature of $\TTT_D^\ell$. To see this, let $x$ be the number of features in
  $C'$ that are assigned to a node of $\TTT_U$. Then, $\TTT_\alpha$
  contains the root of at least $2^{k-x}$ \DTO{}s
  $\TTT_D^i$. Moreover, since $2^{k-x}\geq k-x$ there is at least
  one $\TTT_D^i$ say $\TTT_D^\ell$, whose associated features are not in
  $C'$.  Therefore, for every $i,j \in [k]$ with $i \neq j$, $\TTT_\alpha$ contains at least the root of $\TTT_{i,j}$. Since $\TTT_{i,j}$ satisfies (*), it follows that for every $\ell \in [k]$ there is $v_\ell \in V_\ell$ such that $\alpha(f_{v_{\ell}})=1$. Because $|C'|\leq k$, we obtain that $C'$ contains exactly one feature $f_{v_\ell} \in F_\ell$ for every $\ell$. Finally, using again that $\TTT_{i,j}$ satisfies (*), we obtain that $v_{i}$ and $v_{j}$ are adjacent in $G$, showing that $\{v_1,\dotsc,v_k\}$ is a $k$-clique of $G$.
 \fi
\end{proof}

\subsection{DSs, DLs  and their Ensembles}

Here, we establish hardness results for \DS{}s, \DL{}s,  
and their ensembles.  
Interestingly, there is no major distinction between \DS{}s and \DL{}s regarding explainability,  
and both are considerably harder to explain than \DT{}s.  
In particular, we show that the class of \DS{}s and \DL{}s satisfies the \pies{} property,  
and we analyse how their parameters behave as functions of $a$ and $b$.

\begin{lemma}\label{lem:mss_ds_dl}
    \DS{} and \DL are \pies{}.
    Moreover, for every $a$ and $b$, there exists
    $\MM{} \subseteq \DS{}$ ($\MM{} \subseteq \DL{}$),
    such that \MM{} is \piesab{}
    and for every $M \in \MM$,
    the $\termsize{}$ of $M$ is at most $a$,
    the $\termselem{}$ of $M$ is at most $b$ and
    the $\sizeelem{}$ of $M$ is at most $ab + b +  1$.
\end{lemma}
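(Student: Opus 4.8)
The plan is to encode each set of the family as a single conjunctive term and to rely on the semantics of the default rule. For a set $F$ of features, write $t_F := \SB (f=1) \SM f \in F \SE$; by definition $t_F$ applies to an example $e$ exactly when $e(f)=1$ for every $f \in F$, i.e.\ exactly when $F$ is contained in the set of features set to $1$ by $e$. Fix $a$, $b$, a class $c \in \{0,1\}$, and a family $\FFF$ with $\max_{F \in \FFF}|F| \leq a$ and $|\FFF| \leq b$, and use $\bigcup\FFF$ as the feature set of the model. For the \DS{} case I would take $\ds := (\SB t_F \SM F \in \FFF \SE,\ 1-c)$; by the semantics of decision sets, $\ds(e) = 1-(1-c) = c$ precisely when some term $t_F$ applies, hence precisely when some $F \in \FFF$ is contained in the positive features of $e$, which is exactly the behaviour of $M_{\FFF,c}^\subseteq$ on every example $e$ with $\bigcup\FFF \subseteq F(e)$. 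For the \DL{} case, writing $\FFF = \SB F_1,\dots,F_m \SE$, I would take $\dl := ((t_{F_1},c),\dots,(t_{F_m},c),(\emptyset,1-c))$; the first applicable rule is a $c$-rule iff some $t_{F_i}$ applies, and otherwise the trailing empty-term rule fires and outputs $1-c$, so $\dl$ again agrees with $M_{\FFF,c}^\subseteq$ on all relevant examples.

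Next I would read off the parameters by direct counting. For the decision set, the number of terms is $|\FFF| \leq b$, every term $t_F$ has size $|F| \leq a$, and $\som{\ds} = \sum_{F \in \FFF}|F| + 1 \leq ab+1 \leq ab+b+1$. For the decision list there are $|\FFF| + 1 \leq b+1$ rules, the largest term has size at most $a$ (the trailing term is empty), and $\som{\dl} = \sum_{i=1}^{m}(|F_i|+1) + 1 \leq ab+b+1$. Since $|\bigcup\FFF| \leq ab$ and the whole description of either model has size $\bigoh(ab)$, it can be produced in time $\poly(a+b)$. This establishes that \DS{} and \DL{} are \piesab{} for all $a,b$, hence \pies{}; for the ``Moreover'' part one takes $\MM$ to be the class of all models output by the above construction as $c$ ranges over $\{0,1\}$ and $\FFF$ over all admissible families, and observes that each satisfies the stated bounds on \termselem{}, \termsize{} and \sizeelem{} by the counting above.

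I expect no real obstacle here — the statement is a direct encoding — but two points need care: (i) the default classification must be $1-c$, not $c$, so that ``some rule applies'' corresponds to class $c$; and (ii) a \DL{} is required to end with an empty-term rule, which is precisely the default rule used above and which is what makes the list length $|\FFF|+1$. The degenerate cases $\FFF = \emptyset$ and $\emptyset \in \FFF$ need no separate treatment: the empty term applies to every example, so the construction yields a model classifying everything as $1-c$ in the first case and as $c$ in the second, matching $M_{\FFF,c}^\subseteq$ in both.
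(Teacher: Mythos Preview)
Your proposal is essentially identical to the paper's own proof: the same terms $t_F=\{(f=1)\mid f\in F\}$, the same \DS{} $(\{t_F\mid F\in\FFF\},1-c)$ and \DL{} $((t_{F_1},c),\dots,(t_{F_m},c),(\emptyset,1-c))$, and the same parameter counting (including the same $|\FFF|+1\le b+1$ bound on the number of \DL{} rules, which in both your write-up and the paper's slightly overshoots the ``at most $b$'' claimed in the lemma statement). Your explicit handling of the degenerate cases $\FFF=\emptyset$ and $\emptyset\in\FFF$ is a small addition not spelled out in the paper.
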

\begin{proof}
  Let $c \in \{0,1\}$ and let $\FFF$ be a family of sets of features  
  such that $\max_{F \in \FFF} |F| \leq a$ and $|\FFF| \leq b$.
  To prove the statement, it suffices to show that in time  
  $\poly(a + b)$ we can construct  
  a \DS{} $\ds$ and a \DL{} $\dl$ such that both models classify every example identically to $M^\subseteq_{\FFF,c}$, and
  both satisfy the required size constraints.

  Let $\FFF = \SB F_1, \dots, F_{b'} \SE$ with $b' \leq b$.  
  For each $i \in [b']$, let the term $t_i$ consist of positive literals over the features in $F_i$, i.e.,  
  $t_i = \SB (f = 1) \SM f \in F_i \SE$.
  We define 
  \DS $\ds = (\SB t_i \SM i \in [b'] \SE,\ 1 - c)$ and
  \DL $\dl = ((t_1, c), \dots, (t_{b'}, c), (\emptyset, 1-c))$.

  Clearly, both $\ds$ and $\dl$ can be constructed in time  
  $\poly(a + b)$. Moreover, both models classify  
  examples exactly as $M^\subseteq_{\FFF,c}$ does.
  It remains to verify the size bounds.
  The $\termsize{}$ of $L$ ($S$) is at most $\max_{i \in [b']}|t_i| = \max_{i \in [b']}|F_i| \leq a$,
  the $\termselem{}$ of $L$ ($S$) is at most $|\FFF|+1 = b' +1\leq b+1$ and
  the $\sizeelem{}$ of $L$ ($S$) is at most $(\sum_{i \in [b']}|t_i| + 1)  \leq (a+1)b' + 1 \leq ab + b + 1$.
\end{proof}

The proof of the next three theorems makes strong use of \Cref{lem:mss_ds_dl}.
\begin{theorem}\label{th:DS-SMLCEX-W1}
  Let $\MM \in \{\DS,\DL\}$ and $\PP \in \{\MLAEX, \MGAEX, \MGCEX, \MLCEX\}$. 
  Then the following holds:
  \begin{itemize}
      \item $\MM$-$\PP$ is \NPh{};
      \item $\MM$-$\PP(\xpsize{})$ is \Wh{2}.
  \end{itemize}
\end{theorem}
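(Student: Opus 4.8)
The plan is to obtain this theorem as an immediate instantiation of the subset-modelling meta-hardness result. First I would invoke \Cref{lem:mss_ds_dl}, which establishes that both \DS{} and \DL{} are \pies{}. Then, applying \Cref{lem:new_wh2} with $\MM^{\subseteq}\in\{\DS,\DL\}$, the third and fourth bullets of that lemma state precisely that $\MM^{\subseteq}$-$\PP$ is \NPh{} and that $\MM^{\subseteq}$-$\PP(\xpsize{})$ is \Wh{2} for every $\PP\in\{\MLAEX,\MGAEX,\MGCEX,\MLCEX\}$, which is exactly the claim. (Note that the single-model hardness of \MLCEX{} also follows separately from the other three via \Cref{lem:HOM-reduction,lem:kHOM-reduction}, but here it already comes packaged in \Cref{lem:new_wh2}.)

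For concreteness I would also unwind what the underlying reduction looks like when specialised to decision sets and decision lists. Given a \textsc{Hitting Set} instance $(U,\FFF,k)$ (\NPh{}, and \Wh{2} parameterized by solution size), \Cref{lem:new_wh2} forms the family $\BBB=\{\{f_u : u\in F\} : F\in\FFF\}$ of feature sets and uses the model $M^{\subseteq}_{\BBB,1}$; by the construction in \Cref{lem:mss_ds_dl} this model is realised by the decision set $\ds=(\{t_F : F\in\FFF\},0)$ with $t_F=\{(f_u=1) : u\in F\}$, and equivalently by the decision list consisting of the rules $(t_F,1)$ for $F\in\FFF$ followed by an empty default $0$-rule. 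The instances $(\ds,e_\emptyset,k)$, $(\ds,e_B,k)$, $(\ds,0,k)$, and $(\ds,1,k)$ of \MLAEX{}, \MLCEX{}, \MGAEX{}, and \MGCEX{} are then equivalent to the hitting-set instance, and the explanation size in each equals the hitting-set solution size $k$, so the parameter is preserved and the \Wh{2} claim is genuinely about \xpsize{}.

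Since the whole reduction, together with the equivalences between the four explanation problems on subset-modelling models, has already been carried out inside the proof of \Cref{lem:new_wh2}, no genuinely new argument is needed here; the theorem is a one-line corollary. Accordingly there is no real obstacle. The only points worth double-checking are that \DS{} and \DL{} satisfy the \emph{unrestricted} \pies{} property (so that the bounds $a$ and $b$ on term size and number of terms are allowed to grow with the input), which is precisely the content of \Cref{lem:mss_ds_dl}, and that what gets transferred as the parameter is \xpsize{} and not one of the model parameters.
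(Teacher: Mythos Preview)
Your proposal is correct and matches the paper's own proof exactly: the paper's proof is the single sentence ``This follows immediately from~\Cref{lem:mss_ds_dl,lem:new_wh2}.'' Your additional unwinding of the underlying \textsc{Hitting Set} reduction and the explicit description of the \DS{}/\DL{} realisations of $M^{\subseteq}_{\BBB,1}$ are accurate and provide useful detail beyond what the paper states, but the core argument is identical.
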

\begin{proof}
  This follows immediately from~\Cref{lem:mss_ds_dl,lem:new_wh2}.
\end{proof}

\begin{theorem}\label{th:DSE-SMLCEX-W1}
  Let $\MM \in \{\DS,\DL\}$. 
  Then,
  $\MM_\MAJ$-\MLCEX{}$(\enssize{}+\xpsize{})$ is \Wh{1} even if \termsize{} is constant.
\end{theorem}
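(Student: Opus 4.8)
The plan is to chain together three facts already established in the paper: that \DS{} and \DL{} are \pies{} with bounded term size (\Cref{lem:mss_ds_dl}), that ensembles of \piesc{} model classes are hard for \kHOM{} parameterized by \enssize{} plus the solution size (\Cref{lem:new_ens_wh1}), and that \kHOM{} reduces to \MLCEX{} on the all-zero example (\Cref{lem:kHOM-reduction}). Fix $\MM\in\{\DS,\DL\}$ throughout; the argument is verbatim the same in both cases.

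First I would extract from \Cref{lem:mss_ds_dl} a single subclass of $\MM$ that witnesses the \piesc{} property and has constant term size. Instantiating that lemma with $a=2$ gives, for every $b$, a subclass $\MM_b\subseteq\MM$ that is $(2,b)$-\pies{} and in which every model has \termsize{} at most $2$ (and \termselem{} at most $b$). Putting $\MM^{\subseteq}:=\bigcup_{b\ge 1}\MM_b$ yields a single subclass of $\MM$ that is $(2,b)$-\pies{} for every $b$---hence \piesc{}---while every model in it still has \termsize{} at most the constant $2$; here I only use that the property ``$(a,b)$-\pies{}'' is inherited by any superclass of models. For the \dupc{} hypothesis of \Cref{lem:new_ens_wh1} I would take $\MM^{=}:=\DTO$, which is \dup{} (hence \dupc{}) by \Cref{lem:meq_dt}.

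Next I would apply \Cref{lem:new_ens_wh1} to $\MM^{=},\MM^{\subseteq}$ with $g={\subseteq}$ and obtain that $\MM^{\subseteq}_\MAJ$-\kHOM{}$(\enssize{}+k)$ is \Wh{1}. Since $\MM^{\subseteq}\subseteq\MM$, every instance produced by that reduction is simultaneously an instance of $\MM_\MAJ$-\kHOM{} with the same yes/no answer, all of whose ensemble elements are models of \termsize{} at most $2$; hence $\MM_\MAJ$-\kHOM{}$(\enssize{}+k)$ is \Wh{1} even when \termsize{} is a constant. Finally I would invoke \Cref{lem:kHOM-reduction}: the map $(M,k)\mapsto(M,e_0,k)$, with $e_0$ the all-zero example, is a polynomial-time (hence fpt-) reduction from $\MM_\MAJ$-\kHOM{} to $\MM_\MAJ$-\MLCEX{} that leaves the ensemble---and therefore both \enssize{} and \termsize{}---untouched and sends $k$ to $\xpsize{}=k$. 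Composing the two reductions proves the theorem.

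All three ingredients are in hand, so there is no genuinely difficult step; the only thing that needs care is bookkeeping. Concretely, one must check that the union $\bigcup_{b\ge 1}\MM_b$ is really \piesc{} (and not merely $(2,b)$-\pies{} for one fixed $b$) while the constant term-size bound survives the union, and that every reduction in the chain preserves the combined parameter $\enssize{}+\xpsize{}$ together with the constant bound on \termsize{}. The last step is exactly the special case of \Cref{lem:kHOM-reduction} that was set up for this situation, so no additional argument is required there.
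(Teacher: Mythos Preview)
Your proof is correct and follows essentially the same route as the paper: invoke \Cref{lem:mss_ds_dl} to obtain a \piesc{} subclass of $\MM$ with \termsize{} bounded by $2$, apply \Cref{lem:new_ens_wh1} (with $g={\subseteq}$) to get \Wh{1}-hardness of \kHOM{} parameterized by $\enssize+k$, and finish via \Cref{lem:kHOM-reduction}. You are simply more explicit about the bookkeeping---forming the union $\bigcup_b\MM_b$ and formally instantiating the unused hypothesis $\MM^{=}$ of \Cref{lem:new_ens_wh1}---than the paper's terse version.
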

\begin{proof}
  By \Cref{lem:mss_ds_dl},  
  there exists a class $\MM' \subseteq \DS{}$
  such that $\MM'$ is \piesc{},  
  and every \DS{} $\ds \in \MM'$ has $\termsize{} \leq 2$.
  From \Cref{lem:new_ens_wh1}, we obtain that
  $\MM'_\MAJ$-\kHOM{}$(\enssize{} + k)$ is \Wh{1} even if $\termsize{}$ is constant.
  Since $\MM' \subseteq \DS{}$, the same hardness results apply to the \DS{}.
  From \Cref{lem:kHOM-reduction},  
  we obtain the claimed \DS{} hardness results for the respective problems in the theorem.
  Analogously, the corresponding results also hold for \DL{} by a similar argument.
\end{proof}
\begin{theorem}\label{th:DSE-paraNP} 
  Let $\MM \in \{\DS,\DL\}$ and let $\PP \in \{\LAEX, \GAEX, \GCEX\}$. 
  Then the following holds:
  \begin{itemize}
      \item $\MM_\MAJ$-\SPP{} is \coNPh{} even if $\termselem{} + \termsize{}$ is constant;
      \item   $\MM_\MAJ$-\MPP{} is \coNPh{} even if $\termselem{}+ \termsize{} + \xpsize{}$ is constant;
      \item $\MM_\MAJ$-\SMLCEX{} is \NPh{} even if $\termselem{} + \termsize{}$ is constant;
      \item $\MM_\MAJ$-\MLCEX{}$(\xpsize{})$ is \Wh{1} even if $\termselem{} + \termsize{}$ is constant.
  \end{itemize}
\end{theorem}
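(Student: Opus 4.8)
The plan is to follow the same route as the proof of \Cref{th:RF-paraNP}, but on the subset-modelling side and with \DS{}s (resp.\ \DL{}s) in place of \DTO{}s. First I would invoke \Cref{lem:mss_ds_dl} with $a=2$ and $b=1$: this yields a subclass $\MM'\subseteq\DS{}$ (and, symmetrically, one $\MM'\subseteq\DL{}$) that is \piescc{} and in which every model has \termsize{} at most $2$ and both \termselem{} and \sizeelem{} bounded by an absolute constant. Feeding $\MM'$ into \Cref{lem:new_wh1} with $g=\subseteq$ then gives that $\MM'_\MAJ$-\HOM{} is \NPh{} and $\MM'_\MAJ$-\kHOM{}$(k)$ is \Wh{1}, in both cases even when $\termselem{}+\termsize{}$ is constant. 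Since $\MM'\subseteq\DS{}$ (resp.\ $\MM'\subseteq\DL{}$), every $\MM'$-ensemble is in particular a \DS{}- (resp.\ \DL{}-)ensemble, so both hardness statements hold for $\MM_\MAJ$ with $\MM\in\{\DS,\DL\}$ under the same constant parameter bounds.

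Second, I would convert these two statements into the four claimed bounds via the problem-equivalences of \Cref{lem:HOM-reduction,lem:kHOM-reduction}, which hold for an arbitrary model class and hence apply to ensembles. For $\PP\in\{\LAEX,\GAEX,\GCEX\}$, the equivalences $(1)\Leftrightarrow(2)$, $(1)\Leftrightarrow(4)$, $(1)\Leftrightarrow(5)$ of \Cref{lem:HOM-reduction} show that solving $\MM_\MAJ$-\SPP{} decides the complement of $\MM_\MAJ$-\HOM{}, which is \coNPh{}; this is the first bullet. The equivalences $(1)\Leftrightarrow(6)$, $(1)\Leftrightarrow(8)$, $(1)\Leftrightarrow(9)$ do the same for the decision problem $\MM_\MAJ$-\MPP{} with the input bound fixed to $k=0$, so \xpsize{} is a constant and we obtain the second bullet. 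The equivalence $(1)\Leftrightarrow(3)$ turns the fact that $\MM_\MAJ$-\HOM{} is \NPh{} directly into $\MM_\MAJ$-\SMLCEX{} being \NPh{}, the third bullet. Finally, the equivalence $(1)\Leftrightarrow(2)$ of \Cref{lem:kHOM-reduction} (queried at the all-zero example $e_0$) turns the \Wh{1} status of $\MM_\MAJ$-\kHOM{}$(k)$ into \Wh{1} status of $\MM_\MAJ$-\MLCEX{}$(\xpsize{})$, the last bullet. As in \Cref{th:DSE-SMLCEX-W1}, the \DL{} case is identical using the \DL{} half of \Cref{lem:mss_ds_dl}.

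I do not expect a genuine obstacle: the real work is already done inside \Cref{lem:new_wh1} (the \MCC{}-reduction that builds a polynomially large ensemble out of constant-size subset-modelling gadgets) and \Cref{lem:mss_ds_dl} (the explicit realisation of $M^\subseteq_{\FFF,c}$ by a \DS{} or a \DL{}), both of which we may assume. The one point that needs a moment's care is the propagation of the ``even if $\termselem{}+\termsize{}$ is constant'' qualifiers: \Cref{lem:new_wh1} needs only that the subset-modelling class is \piescc{}, and in the $(a,b)=(2,1)$ regime \Cref{lem:mss_ds_dl} pins \termsize{}, \termselem{}, and \sizeelem{} to constants; moreover the reductions of \Cref{lem:HOM-reduction,lem:kHOM-reduction} add no features, terms, or rules to the model (they only fix the queried example to $e_0$ and, in the cardinality variants, the size bound to $k$), so no parameter grows in the composed reduction and the qualifiers survive. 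As in \Cref{th:RF-paraNP}, the \coNPh{} claims for the find variants should be understood as asserting that no polynomial-time algorithm exists unless the corresponding complexity classes collapse.
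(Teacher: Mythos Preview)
Your proposal is correct and follows essentially the same approach as the paper: invoke \Cref{lem:mss_ds_dl} with $a=2$, $b=1$ to obtain a \piescc{} subclass with bounded \termsize{} and \termselem{}, apply \Cref{lem:new_wh1} to get hardness of \HOM{} and \kHOM{}, and transfer to the four explanation problems via \Cref{lem:HOM-reduction,lem:kHOM-reduction}. Your write-up is in fact more explicit than the paper's about which equivalences of \Cref{lem:HOM-reduction} yield which bullets and about why the parameter qualifiers survive the composition, but the underlying argument is identical.
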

\begin{proof}
  By \Cref{lem:mss_ds_dl},  
  there exists a class $\MM' \subseteq \DS{}$
  such that $\MM'$ is \piescc{},  
  and every \DS{} $\ds \in \MM'$ has $\termsize{} \leq 2$
  and $\termselem{} \leq 1$.
  From \Cref{lem:new_wh1}, we obtain that:
  $\MM'_\MAJ$-\HOM{} is \NPh{} even if $\termsize{} + \termselem{}$ is constant;
  $\MM'_\MAJ$-\kHOM{}$(k)$ is \Wh{1} even if $\termsize{} + \termselem{}$ is constant.
  Since $\MM' \subseteq \DS{}$, the same hardness results apply to the \DS{}.
  From \Cref{lem:HOM-reduction,lem:kHOM-reduction},  
  we obtain the claimed \DS{} hardness results for the respective problems in the theorem.
  Analogously, the corresponding results also hold for \DL{} by a similar argument.
\end{proof}

Finally, we are able to provide hardness even for single DSs and DTs,
where every term has size at most $3$. To obtain this result, we will
reduce from the following problem, which is well-known to be \coNPh{}.

\newcommand{\TAU}{\textsc{Taut}}
\iflong
\pbDef{{\sc $3$-DNF Tautology} (\TAU)}
{A $3$-DNF formula $\psi$.}
{Is $\psi$ satisfied for all assignments?}
\fi
Additionally, we note that related but orthogonal hardness results  
for less restrictive settings are established in \Cref{th:DS-SMLCEX-W1}.

\begin{theorem}\label{th:DS-paraCoNP}
  Let $\MM \in \{\DS,\DL\}$ and let $\PP \in \{\LAEX, \GAEX, \GCEX\}$. 
    Then the following holds:
  \begin{itemize}
      \item   \MM-\SPP{} is \coNPh{} even if \termsize{} is constant;
      \item \MM-\MPP{} is \coNPh{} even if $\termsize{} + \xpsize{}$ is constant.
      \item \MM-\SMLCEX{} is \NPh{} even if \termsize{} is constant;
  \end{itemize}
\end{theorem}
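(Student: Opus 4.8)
The plan is to route every claim through the homogeneity problem and \Cref{lem:HOM-reduction}. First I would prove that \DS{}-\HOM{} (and hence \DL{}-\HOM{}) is \NPh{} already when every term has size at most $3$, via a reduction from \TAU{}. Given a $3$-DNF $\psi = C_1 \vee \dots \vee C_m$ over a variable set $X$, introduce one feature $f_x$ for each $x \in X$ and one fresh feature $y$; turn each clause $C_i$ into the term $t_i$ obtained by replacing a positive literal $x$ by $(f_x=1)$ and a negative literal $\lnot x$ by $(f_x=0)$; and set $S_\psi = (\{t_1,\dots,t_m,\{(y=0)\}\},0)$. Then $S_\psi(e)=1$ exactly when the assignment $e$ satisfies $\psi$ or sets $y$ to $0$; in particular $S_\psi(e_0)=1$ regardless of $\psi$, and $S_\psi$ is homogeneous iff $S_\psi(e)=1$ for all $e$ iff $\psi$ is a tautology. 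Since $\psi \mapsto S_\psi$ is polynomial-time computable and every term has at most $3$ literals, this shows \DS{}-\HOM{} is \NPh{} (equivalently, deciding homogeneity of a \DS{} is \coNPh{}) even with \termsize{} $\leq 3$. The decision list $L_\psi = ((t_1,1),\dots,(t_m,1),(\{(y=0)\},1),(\emptyset,0))$ classifies examples identically, so the same holds for \DL{}s.

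Given this, all the stated results fall out of \Cref{lem:HOM-reduction} applied to $M = S_\psi$ (resp.\ $L_\psi$) with $c := M(e_0) = 1$. For the subset-minimal abductive and global-contrastive variants, statements (2), (4), (5) say that the empty set is the (necessarily unique, and hence subset-minimal) solution of $(S_\psi,e_0)$ for \SMLAEX{}, of $(S_\psi,1)$ for \SMGAEX{}, and of $(S_\psi,0)$ for \SMGCEX{}, precisely when $S_\psi$ is homogeneous; since these reductions only ever produce the models $S_\psi$, the bound \termsize{} $\leq 3$ is preserved, and each problem is therefore \coNPh{}. For the cardinality-minimal variants, statements (6), (8), (9) say that the instances $(S_\psi,e_0,0)$, $(S_\psi,1,0)$, $(S_\psi,0,0)$ of \MLAEX{}, \MGAEX{}, \MGCEX{} are yes-instances iff $S_\psi$ is homogeneous, which yields \coNP-hardness with \xpsize{} fixed to $0$ (and \termsize{} still $\leq 3$). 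Finally, statement (3) gives that $(S_\psi,e_0)$ has no \SMLCEX{}-solution iff $S_\psi$ is homogeneous, so the existence question for \SMLCEX{} is exactly \DS{}-\HOM{} and hence \NPh{} with \termsize{} $\leq 3$.

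I do not expect a genuine obstacle here; the single delicate point is making the classification of the all-zero example $e_0$ independent of $\psi$, which is exactly what the dummy feature $y$ (equivalently the extra single-literal term $\{(y=0)\}$) is for, since \HOM{} is pinned to $e_0$ and a clause of $\psi$ made entirely of negative literals would otherwise force $S_\psi(e_0)=1$ in an uncontrolled way. The remaining work is bookkeeping: translating the \DS{} gadget into a \DL{} gadget with the same term sizes, and matching each of the seven claimed statements to the correct line of \Cref{lem:HOM-reduction}, being careful that the contrastive existence question becomes \HOM{} itself (hence \NPh{}) while all the abductive and global-contrastive variants become its complement (hence \coNPh{}).
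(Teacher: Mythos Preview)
Your proposal is correct and follows essentially the same route as the paper: reduce \TAU{} to \MM{}-\HOM{} with \termsize{} bounded by $3$, then invoke \Cref{lem:HOM-reduction} to obtain all listed hardness results. The only cosmetic difference is how $M(e_0)$ is pinned down: the paper observes that one may assume w.l.o.g.\ that the all-zero assignment satisfies $\psi$ (otherwise $\psi$ is trivially not a tautology) and takes $S=(T_\psi,0)$ directly, whereas you add the dummy term $\{(y=0)\}$ to force $S_\psi(e_0)=1$ unconditionally; both are equally valid and equally short.
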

\iflong\begin{proof}
    Because every \DS{} can be easily transformed into an equivalent
    \DL{} that shares all of the same parameters, it is sufficient to
    show the theorem for the case that $\MM=\DS$.
        
    First, we give a polynomial-time reduction from \TAU{} to
    \DS-\HOM{}. 
    Let $\psi$ be the $3$-DNF formula given as an instance of \TAU.
    W.l.o.g., we can assume that the all-zero assignment satisfies
    $\psi$, since otherwise $\psi$ is a trivial no-instance.
    Then, the \DS{} $\ds = (T_{\psi}, 0)$, where $T_{\psi}$ is the set
    of terms of $\psi$, is the constructed instance of
    \DS-\HOM{}. Since the reduction is clearly polynomial, it only
    remains to show that $\psi$ is a tautology if and only if $\ds$
    classifies all examples in the same manner as the all-zero
    example, i.e., positively. But this is easily seen to hold, which
    concludes the reduction from \TAU{} to \DS-\HOM{}.
    All statements in the theorem now follow from 
    \Cref{lem:HOM-reduction}.
\end{proof}\fi

\section{Conclusion}\label{sec:conclusion}

We have charted the computational complexity landscape of explanation problems across a range of machine learning models, from decision trees to their ensembles. Our journey through this landscape has revealed both expected patterns and surprising anomalies that challenge our intuitions about model explainability.

Perhaps most striking among our findings is the stark contrast between decision trees and their seemingly simpler cousins, decision sets and decision lists. While \DT{}s admit polynomial-time algorithms for all
variants of local contrastive explanations (\LCEX{}), the same
problems become intractable for \DS{}s and \DL{}s. This
counter-intuitive result suggests that the structure of a model, how it
organises its decision logic, can matter more for explainability than
the apparent simplicity of its rules. Nevertheless, the picture is not entirely bleak: we identify specific conditions under which even these harder models become tractable, such as our fixed-parameter tractable algorithms for \DL{}-\MLCEX{} when certain parameters are bounded (\Cref{cor:DS-MLCEX-FPT-const-ens}).

The theoretical frameworks we have developed extend well beyond the specific models studied here. Our algorithmic meta-theorem, built on Boolean circuits and extended monadic second-order logic, provides a
unified approach to obtaining tractability results. Similarly, our
hardness framework, based on the set-modeling and subset-modeling
properties, offers a systematic way to establish intractability. These
frameworks have already proven their worth: in Part II of this paper,
they enabled us to map out an almost complete complexity landscape for
explanation problems on binary decision diagrams. The generality of these tools suggests they will find applications in analysing future model classes as they emerge.

Our analysis also revealed several noteworthy complexity-theoretic
phenomena. The problem \DT{}-\MGAEX{} occupies an unusual position in
the complexity hierarchy (\Cref{th:DT-MGAEX-W1}): we prove it is
\Wh{1}, placing it beyond polynomial-time solvability unless $\FPT =
\W{1}$, yet we cannot establish whether it is \NP{}-hard. Such
problems, known to be hard for one complexity class but not proven
hard for a seemingly weaker one, are rare and highlight the subtle
gaps in our understanding of the complexity hierarchy. We also showed
that ordered decision trees do not offer computational advantages when
it comes to explanability compared to their unordered
counterparts. This is rather surprising given that they are
significantly more restrictive. It also serves as an important
prerequisite to the analysis of binary decision diagrams, which is
left to Part II of this paper.


The close connection we uncovered between explanation problems and the homogeneous model problem, determining whether a model classifies all
examples identically, provides a new lens through which to view explainability. This connection not only simplified many of our proofs but also suggests a deeper relationship between a model's ability to discriminate between examples and the difficulty of explaining its decisions.

Several open problems remain. We still do not know how to count
explanations efficiently: given a model and an example, how many
distinct minimal explanations exist? The enumeration problem is
similarly unexplored. Constrained explanations, where we want
explanations with specific properties beyond just minimality, present
another set of challenges \cite{BarceloM0S20}. For weighted ensembles,
the situation becomes murkier still. While our hardness results will
likely transfer over, we do not yet understand when weighted voting might enable efficient explanation algorithms, particularly for polynomially-bounded weights.

Our work contributes to the theoretical foundations of explainable AI at a time when such foundations are sorely needed. As regulatory frameworks increasingly demand transparency in automated decision-making, understanding the fundamental limits of explainability becomes crucial. While we must be cautious about drawing immediate practical conclusions from complexity-theoretic results, our findings do suggest that the choice of model architecture has profound implications for the feasibility of generating explanations at scale. 

The frameworks and results presented here serve two purposes: they provide tools for analysing future model classes, and they remind us that computational complexity shapes not just what we can efficiently compute, but what we can efficiently explain. As AI systems continue to influence critical decisions, this distinction becomes increasingly important. Even models designed to be ``interpretable'' can require exponential time to explain, a sobering reminder that transparency in AI remains as much a computational challenge as a design principle.

\section*{Acknowledgements}
Stefan Szeider acknowledges support by the Austrian Science Fund (FWF)
within the projects 10.55776/P36688,
10.55776/P36420, and 10.55776/COE12.
Sebastian Ordyniak was supported by the Engineering
and Physical Sciences Research Council (EPSRC) (Project EP/V00252X/1).


\end{document}